\newcommand{\diff}{\mathrm{d}}
\titlespacing*{\section}{0pt}{*0.9}{*0.9}
\titlespacing*{\subsection}{0pt}{*0.9}{*0.9}
\titlespacing*{\subsubsection}{0pt}{*0.5}{*0.5}
\icmltitlerunning{Cooperative Multi-Agent Reinforcement Learning: Asynchronous Communication and Linear Function Approximation}
\begin{document}

\twocolumn[
\icmltitle{Cooperative Multi-Agent Reinforcement Learning: \\Asynchronous Communication and Linear Function Approximation}



\icmlsetsymbol{equal}{*}

\begin{icmlauthorlist}
\icmlauthor{Yifei Min}{equal,yale}
\icmlauthor{Jiafan He}{equal,ucla}
\icmlauthor{Tianhao Wang}{equal,yale}
\icmlauthor{Quanquan Gu}{ucla}
\end{icmlauthorlist}

\icmlaffiliation{yale}{Department of Statistics and Data Science, Yale University}
\icmlaffiliation{ucla}{Department of Computer Science, University of California, Los Angeles}

\icmlcorrespondingauthor{Quanquan Gu}{qgu@cs.ucla.edu}

\icmlkeywords{Markov Decision Process, reinforcement learning, multi-agent, federated learning, distributed}

\vskip 0.3in
]



\printAffiliationsAndNotice{\icmlEqualContribution} 

\begin{abstract}
We study multi-agent reinforcement learning in the setting of episodic Markov decision processes, where multiple agents cooperate via communication through a central server.
We propose a provably efficient algorithm based on value iteration that enable asynchronous communication while ensuring the advantage of cooperation with low communication overhead.
With linear function approximation, we prove that our algorithm enjoys an 
$\tilde{\mathcal{O}}(d^{3/2}H^2\sqrt{K})$ regret  with 
$\tilde{\mathcal{O}}(dHM^2)$ communication complexity, where $d$ is the feature 
dimension, $H$ is the horizon length, 
$M$ is the total number of agents, and $K$ is the total number of episodes.
We also provide a lower bound showing that a minimal $\Omega(dM)$ communication 
complexity is required to improve the performance through collaboration.
\end{abstract}

\section{Introduction}
Multi-agent Reinforcement Learning (RL) has been successfully applied in various application scenarios, such as robotics~\citep{williams2016aggressive,liu2019lifelong,ding2020distributed,liu2020indoor}, games~\citep{vinyals2017starcraft,berner2019dota,jaderberg2019human,ye2020towards}, and many other real-world systems and settings
\citep{bazzan2009opportunities,yu2014multi,yu2020deep,fei2022cascaded,min2022learn,xu2023finding}.
In particular, in the cooperative setting, agents benefit from collaboration via (in)direct communication among each other.
It thus requires the RL algorithm to effectively coordinate the communication in a flexible way, in order to fully exploit the advantage of cooperation.
Towards this goal, in this paper, we study cooperative multi-agent RL with asynchronous communication, and show that the same  performance as single-agent methods can be achieved with efficient communication strategy.

We focus on the so-called parallel RL setting
\citep{kretchmar2002parallel,grounds2007parallel}, where agents interact with the 
environment in parallel to solve a common problem.
More specifically, we consider a model of episodic Markov decision processes (MDPs) called \emph{linear MDPs}~\citep{yang2019sample,jin2020provably}, where both the transition probability and reward functions are linear in some known $d$-dimensional feature mapping.
We assume there are $M$ agents, which share the same underlying MDP model, but interact with the environment 
independently in parallel. 
The agents cannot communicate directly with each other, and the information 
exchange is realized only through a central server.
We emphasize that in our setting, the communication between the agents and server 
is not required to be synchronous, and any communication is initiated solely by 
the agent, thus providing flexibility for practical needs.
The goal of the agents is to achieve a low total regret with as less communication as 
possible.

Notably, a recent work by \citet{dubey2021provably} studied cooperative multi-agent RL with linear MDPs. They proposed a cooperative variant of the \texttt{LSVI-UCB} 
algorithm~\citep{jin2020provably} named \texttt{Coop-LSVI}, which achieves an 
$\tilde O(d^{3/2}H^2\sqrt{K})$ regret\footnote{Their original result is written as $\tilde O(d^{3/2}H^2\sqrt{MT})$. Here $d$ is the feature 
dimension, $H$ is the horizon length, 
$M$ is the total number of agents, and $K$ is the total number of episodes. Because of their round-robin-type participation, their $MT$ is equivalent to $K$, which is the total number of episodes under our notation.} with $O(dHM^3)$ communication complexity.
However, their algorithm mandates the participation of all agents in a round-robin fashion,
which is impractical as it imposes a stringent synchronous constraint on the  agents' interaction with the environment and their communication with the server.
It is possible that some agents might be temporarily unavailable in a round, or the connection with the server is disrupted due to infrastructure failure. 
These anomalies demand the algorithm to be resilient to irregular participation 
patterns of the agents.

To this end, we propose an asynchronous version of 
\texttt{LSVI-UCB}~\citep{jin2020provably}.
We eliminate the synchronous constraint by carefully designing a determinant-based 
criterion for deciding whether or not an agent needs to communicate with the 
server and update the local model.
This criterion depends only on the local data of each agent, and more importantly, 
the communication triggered by one agent with the server \emph{does not} 
affect any other agents.
As a comparison, in the \texttt{Coop-LSVI} algorithm in \citet{dubey2021provably},
if some agents decide to communicate with the server, the algorithm will 
execute a mandated aggregation of data from all agents.
As a result, our algorithm is considerably more flexible and practical, though this presents new challenges in analyzing its performance theoretically.

\begin{table*}[t]
\centering
\begin{tabular}{cccccc}
\toprule
Setting    & Algorithm     & Regret  & { Communication} & { Low-switching} & { $\substack{\text{Allow asynchronous}\\ \text{communication}}$} \\
\midrule
\multirow{2}{*}{{ Single-agent}} & \texttt{LSVI-UCB}   & \multirow{2}{*}{$d^{3/2}H^2\sqrt{K}$}  
& \multirow{2}{*}{N/A} & \multirow{2}{*}{\ding{56}} & \multirow{2}{*}{N/A}\\
& {\tiny\citep{jin2020provably}}& & & & \\
\hline
\multirow{2}{*}{ Multi-agent} & \texttt{Coop-LSVI}  & \multirow{2}{*}{$d^{3/2} H^2 \sqrt{K}$} 
& \multirow{2}{*}{$d H M^3$} & \multirow{2}{*}{\ding{51}} & \multirow{2}{*}{\ding{56}} \\ 
& {\tiny \citep{dubey2021provably}} & & & &\\ 
\hline
\multirow{2}{*}{ Multi-agent} & \texttt{Async-Coop-LSVI-UCB} 
& \multirow{2}{*}{$d^{3/2}H^2\sqrt{K} $} & \multirow{2}{*}{$d H M^2$} 
& \multirow{2}{*}{\ding{51}} & \multirow{2}{*}{ \ding{51}} \\
& { (ours)} & & & & \\
\bottomrule
\end{tabular}
\caption{Comparison of our result with baseline methods for linear MDPs. 
Our result achieves regret comparable to that of the single-agent setting under low communication complexity. 
Here $d$ is the dimension of the feature, $M$ is the number of agents, and 
$K$ is the total number of episodes by all agents. 
Logarithmic factors are hidden from the regret and the communication complexity.
}
\label{table: comparison with baseline}
\end{table*}

As mentioned before, the participation order of the agents can be arbitrary
and irregular, resulting in some agents having the latest aggregated information from the server while others may have outdated information.
This issue of \emph{information asymmetry} prohibits direct adaption of existing 
analyses for LSVI-type algorithms, such as those in \citet{jin2020provably,dubey2021provably}.
To address this, we need to carefully calibrate the communication criterion, so as to 
balance and regulate the information asymmetry.
We achieve this by examining the quantitative relationship between each agent's local 
information and the virtual universal information, yielding a simple yet effective 
communication coefficient.
The final result confirms the efficiency of the proposed algorithm (see 
Theorem~\ref{thm: upper bound}). 

Besides the positive result, we further investigate the fundamental limit of 
cooperative multi-agent RL.
Inspired by the construction of hard-to-learn instance for federated bandits in 
\citet{wang2019distributed, he2022simple}, we characterize the minimum amount 
of communication complexity required to surpass the performance of a single-agent method.\footnote{Here by `single-agent methods' we mean all agents independently 
run a single-agent algorithm without communication.}
(see Theorem~\ref{thm:lower_bound}).

The main contributions of this paper are summarized as follows:
\begin{itemize}
    \item We propose a provably efficient algorithm (Algorithm~\ref{alg: main}) 
    for cooperative multi-agent RL with asynchronous communication under episodic linear MDPs~\citep{yang2019sample,jin2020provably}. 
    Our algorithm allows an arbitrary participation order of the agents and 
    independent communication between the agents and the server, making it significantly more flexible than the existing algorithm in 
    \citet{dubey2021provably} for the synchronous setting.
    A comparison with baseline methods is presented in Table~\ref{table: comparison with baseline}.

    \item We prove that under standard assumptions, the proposed algorithm 
    enjoys an $\tilde{\mathcal{O}}(d^{3/2}H^2\sqrt{K})$ regret with 
    $\tilde{\mathcal{O}}(dHM^2)$ communication complexity. 
    Our theoretical analysis identifies and resolves the information assymmetry 
    due to asynchronous communication, which may be of independent interest.

    \item We also provide a lower bound for the communication complexity, showing that 
    an $\Omega(dM)$ complexity is necessary to improve over single-agent 
    methods through collaboration.
    To the best of our knowledge, this is the first result on communication complexity for learning multi-agent MDPs.
\end{itemize}

\paragraph{Notation.}
We denote $[n] := \{1,2, \ldots, n\}$ for any positive integer $n$.
We use $\Ib$ to denote the $d\times d$ identity matrix. 
We use $\mathcal{O}$ to hide universal constants and $\tilde{\mathcal{O}}$ to further hide poly-logarithmic terms.
For any vector $\xb \in \RR^d$ and positive semi-definite matrix 
$\bSigma \in \RR^{d \times d}$, we denote $\|\xb\|_{\bSigma} = \sqrt{\xb^\top \bSigma \xb}$.
For any $a,b,c\in\RR\cup\{\pm\infty\}$ with $a\leq b$, we use the shorthand $[c]_{[a,b]}$ to denote the truncation (or projection) of $c$ into the interval $[a,b]$, i.e., $[c]_{[a,b]} = \argmin_{c'\in[a,b]} |c-c'|$.
A comprehensive clarification of notation is also provided in Appendix~\ref{sec: clarification of notation}.

\section{Related Work}

\paragraph{Multi-Agent RL.}
Various algorithms with convergence guarantees have been developed for multi-agent RL~\cite{zhang2018fully, wai2018multi, zhang2018networked}, e.g., federated version of TD and Q-learning \citep{khodadadian2022federated}, and policy gradient with fault tolerance~\citep{fan2021fault}.
In contrast, in this work we study algorithms with low regret guarantee cooperative multi-agent RL with asynchronous communication.

As mentioned above, we focus on the homogeneous setting where the underlying MDP for every agent is the same.
There are also existing works on cooperative multi-agent RL with non-stationary environment and/or heterogeneity~\citep{lowe2017multi,yu2021surprising,kuba2021trust,liu2022stateful,jin2022federated}.
Besides homogeneous parallel linear MDP, \citet{dubey2021provably} further studied 
heterogeneous parallel linear MDP (i.e., the underlying MDPs can be different 
from agent to agent) and Markov games in linear multi-agent MDPs.
These generalized setups are beyond the scope of the current paper, and we leave 
as future work to study algorithms compatible with asynchronous communication 
in these settings.

We consider multi-agent RL with linear function approximation to incorporate large state and action space.
More powerful deep learning techniques have been used for federated RL in~\citet{clemente2017efficient,espeholt2018impala,horgan2018distributed,nair2015massively,zhuo2019federated}.
We refer the reader to \citet{qi2021federated} for a recent survey on federated RL.
Our work is also related to the broader context of distributed learning, where a collective of agents collaborate towards a common objective~\citep{bottou2010large,dean2012large,littman2013distributed,li2014communication,liang2018rllib,hoffman2020acme,ding2022towards, zhan2021deepmtl,zhan2022deepmtlpro,xu2023achieving}. 
Interested readers may refer to the survey article by~\citet{verbraeken2020survey}.

\paragraph{RL with Linear Function Approximation.}
Function approximation techniques in RL enable extension beyond the restricted setting of tabular MDP.
Recent years have especially witnessed rapid progress in the research of single-agent RL with linear function approximation, among which two major parallel lines of work (for online RL) focus on linear MDPs~\citep{yang2019sample,jin2020provably,zanette2020learning,neu2020unifying, he2020logarithmic,wang2021provably,hu2022nearly,he2022nearly,agarwal2022vo,lu2022pessimism} and linear mixture MDPs~\citep{modi2020sample,jia2020model,ayoub2020model,zhou2020provably,cai2020provably,zhou2020nearly,zhang2021variance,kim2021improved,min2022learning,zhang2022horizon,zhou2022computationally}, respectively.

In this paper, we follow the design of the \texttt{LSVI-UCB} algorithm~\citep{jin2020provably} to devise an asynchronous algorithm for cooperative linear MDP.
Indeed, our algorithmic design can also be carried over to tabular MDPs and linear mixture MDPs, which will be discussed later in Section~\ref{sec:algorithm}.



\section{Preliminaries}

In this section, we first provide the formal definition of linear MDPs, and then 
introduce our model of cooperative multi-agent linear MDPs with asynchronous communication.

\subsection{Linear MDPs}
{Episodic MDPs are a classic family of models in RL~\citep{sutton2018reinforcement}.
Let $\cS$ be the state space, $\cA$ be the action space, and $H$ be the horizon length.
Each episode starts from some initial state $s_0\in\cS$. For step $h=1,2,\ldots,H$, the agent at state $s_h\in\cS$ takes some action $a_h\in\cA$, and receives a reward $r_h(s_h,a_h)$, where $r_h:\cS\times\cA\to\RR$ is the reward function at step $h$.
Then the environment transits to the next state $s_{h+1} \sim \PP_h(\cdot\mid s_h,a_h)$, where $\PP_h:\cS\times\cS\times\cA\to\RR$ is the transition probability function for step $h$.
We call the strategy the agent interacts with the environment a \emph{policy}, and a policy $\pi$ consists of $H$ mappings, $\pi_h:\cS\to\cA$ for every $h\in[H]$.
The agent will run for $K$ episodes in total. 
The goal of the agent is then to find the optimal policy that maximizes the cumulative reward across an episode through this online process.}

In this work we consider the time-inhomogeneous linear MDP setting where the transition probabilities and the reward functions can be parametrized as linear functions of a known feature mapping $\bphi$. 
This is a popular setting considered by various authors \citep{bradtke1996linear, melo2007q, yang2019sample, jin2020provably, min2021variance, yin2021near}. 
The formal definition is given by the following assumption. 

\begin{assumption}[Linear MDPs, \citealt{yang2019sample,jin2020provably}]\label{assump: linear_MDP} 
$\text{MDP}(\cS, \cA, H, \{r_h\}_{h=1}^H, \{\PP_h\}_{h=1}^H)$ is called a linear MDP with a \emph{known} feature mapping $\bphi: \cS \times \cA \to \RR^d$, if for any $h \in [H]$, there exist $\bgamma_h$ and $\bmu_h \in \RR^d$, such that for any state-action pair $(s,a)\in \cS \times \cA$,
\begin{align}\label{eq:linear_MDP}
    \PP_h(\cdot \mid s,a) & = \langle \bphi(s,a), \bmu_h(\cdot) \rangle, \notag 
    \\  r_h(s,a) & = \langle \bphi (s,a), \bgamma_h \rangle,
\end{align}where $\max\Big\{\big\|\bmu_h(\cS)\big\|_2, \|\bgamma_h\|_2\Big\}\leq \sqrt{d}$ for all $h \in [H]$.

We assume that at any step $h\in[H]$, for any state-action pair $(s_h,a_h)\in\cS\times\cA$, the reward received by the agent is given by $ r_h(s_h,a_h)$.
Without loss of generality, we assume $0 \leq r_h(s,a)\leq 1$ and $\| \bphi(s,a)\|_2\leq 1$ for all $(s,a)\in \cS\times \cA$. 
We assume $\mathcal{A}$ is large but finite, while $\mathcal{S}$ is possibly infinite. 
\end{assumption}

\subsection{Cooperative Multi-agent RL with Asynchronous Communication}


We assume there is a group of $M$ agents. The process proceeds in an episodic fashion, where the total number of episodes is $K$. 
At each episode $k$, there is only an active agent participating and we denote this agent by $m_k$. 
This agent adopts a policy $\pi_{m_k,k}$ and starts from an initial state $s_{k,1}$. 
For each step $h\in[H]$, agent $m_k$ picks an action $a_{k,h}\in \cA$ according to $a_{k,h} \sim \pi_{k,h}(\cdot \ | s_{k,h})$, receives a reward $r_{k,h} \sim r_{h}(s_{k,h}, a_{k,h})$, and transitions to the next state $s_{k,h+1}$. 
The episode $k$ terminates when agent $m_k$ reaches $s_{k, H+1}$ and there is zero reward at step $H+1$. Note that we consider the homogeneous agent setting, where reach agent has the same transition kernel and reward functions. 

\paragraph{Asynchronous Communication.}
In the multi-agent setting, the agents need to communicate (i.e. share data) to collaboratively learn the underlying optimal policy while minimizing the regret. 
Without communication, the problem would reduce to $M$ separate single-agent linear MDP problems. This would lead to a worst-case regret of order $\Tilde{\mathcal{O}}(M\sqrt{K/M})$, which suffers from an extra $\sqrt{M}$ factor as compared to the $\Tilde{\mathcal{O}}(\sqrt{K})$ regret in the single-agent $K$-episode setting \citep{jin2020provably}. 
In the following sections we will show that this extra factor can be avoided at the cost of a small number of communication rounds. 

\begin{figure}[t]
    \centering
    \includegraphics[width=.7\linewidth]{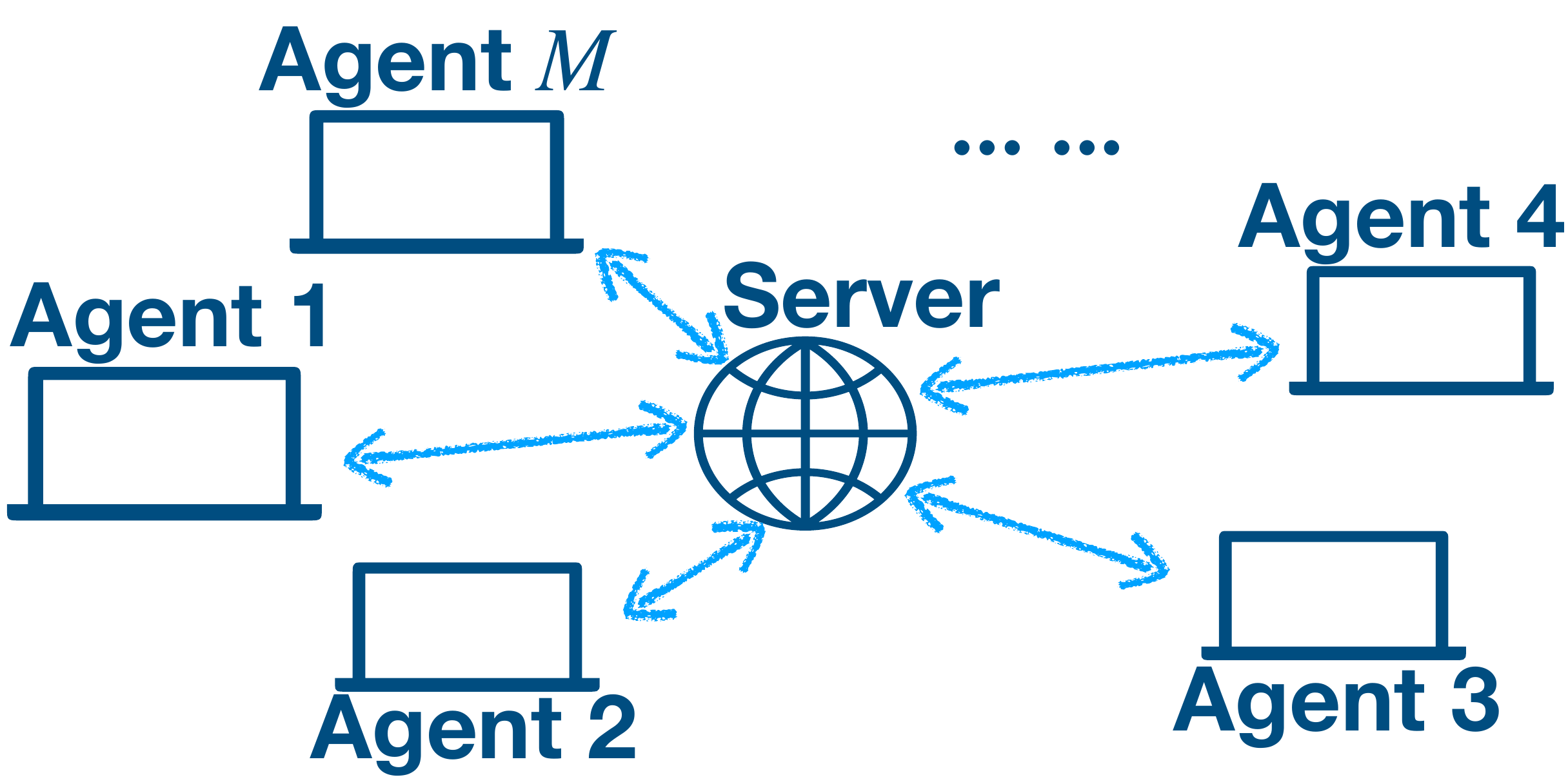}
    \caption{Illustration of Star-shaped Communication Network}
    \label{fig: communication network}
\end{figure}

We now describe our communication protocol as follows: 
In this paper we assume the existence of a central server through which all the agents can share their local data~(Figure~\ref{fig: communication network}). 
Each agent can communicate with the server by uploading local data and downloading global data from the server. 
This is also known as the star-shaped communication network \citep{wang2019distributed, dubey2020differentially, he2022simple}. 

Moreover, each agent can decide whether to trigger a communication with the server or not. 
Specifically, at the end of episode $k$, the active agent $m_k$ can choose whether to upload its local trajectory to the central server and download all the data uploaded to the server by that time. 
The communication complexity is measured by the total number of communication rounds between the agents and the server. 

Importantly, we consider an asynchronous setting satisfying the following two properties:
\begin{itemize}
    \item[(i)] Full participation or a round-robin-type participation is \emph{not} required.
    
    \item[(ii)] The communication between one agent and the server will not cause mandatory download for other agents.
\end{itemize}
This setting is much more flexible than the synchronous setting where no offline agent is allowed. 
As a comparison, in \citet{dubey2021provably}, all agents are required to participate in a round-robin fashion. 
Our setting is more general than the synchronous setting since it gives the agents the extra flexibility to decide whether to participate or not.
The pseudo-code of the communication protocol is summarized by Algorithm~\ref{alg: communication protocol}. 

\begin{algorithm}[t!]
	\caption{\texttt{Communication Protocol}}
	\label{alg: communication protocol}
	\begin{algorithmic}[1]
    \FOR{$k=1,\ldots, K$}
    \STATE Agent $m_k$ is active
    \STATE Receives $s_{k,1}$ from the environment
    \FOR{$h = 1, \cdots, H$}
    \STATE Take action $a_{k,h} \leftarrow \pi_{k,h} (\cdot | s_{k,h}) $ 
    \STATE Receive $s_{k,h+1} \sim \PP_{h}(\cdot|s_{k,h}, a_{k, h})$ \STATE Receive reward $r_{k,h}$
    \ENDFOR 
    \IF{Communication Triggered} 
    \STATE Send local data SERVER.
    \STATE Download from SERVER
    \STATE Update policy using all available data
    \ENDIF
    \ENDFOR
	\end{algorithmic}
\end{algorithm}

\paragraph{Communication Complexity and Switching Cost.} 
We define the communication complexity of an algorithm to be the total number of rounds of communication (i.e. one upload and download operation) between any agent and the central server. 
Note that some papers also use the number of bits to measure the communication complexity~\citep{wang2019distributed}. 
Here we follow the notation of communication complexity in \citet{dubey2021provably}.

The policy switching cost refers to the number of times the agents change their policies \citep{kalai2005efficient, abbasi2011improved}. In the RL setting where the agents choose to adopt a greedy policy according to their estimated Q-functions, the switching cost is also equal to the number of times the estimated Q-functions are updated.

\paragraph{Learning Objective.} 
For any policy $\pi=\{\pi_h\}_{h=1}^H$, we define the corresponding value functions as
\begin{align*}
    V^\pi_h (s) \coloneqq \mathbb{E} \bigg[ \sum_{h'=h}^H r_{h'}(s_{h'}, \pi_{h'}(s_{h'})) \bigg| s_h = s \bigg] , \ \forall h\in [H].
\end{align*} Since the horizon $H$ is finite and action space $\mathcal{A}$ is also finite, there exists some optimal policy $\pi^\star$ such that 
\begin{align*}
    V_h^{\pi^\star}(s) = \sup_\pi V_h^{\pi}(s),
\end{align*} and we denote $V_h^{\star} = V_h^{\pi^\star} $. Due to space limit, more definition details of the value functions and Q-functions can be found in Appendix~\ref{sec: clarification of notation}.

The objective of all agents is to collaboratively minimize the aggregated cumulative regret defined as
\begin{align}\label{eq: definition regret}
    R(K) & \coloneqq \sum_{k=1}^K \left[ V^{\star}_1 (s_{k,1}) - V_1^{\pi_{m_k,k}} (s_{k,1}) \right] , 
\end{align}where $m_k$ is the active agent in episode $k$, $\pi_{m_k, k}=\{\pi_{m_k,k,h}\}_{h=1}^H$ is the policy adopted by agent $m_k$ in episode $k$,  and $s_{k,1}$ is the initial state of episode $k$.

\section{The Proposed Algorithm}\label{sec:algorithm}

Now we proceed to present our proposed algorithm, as displayed in Algorithm~\ref{alg: main}.
After explaining the detailed design of Algorithm~\ref{alg: main}, we will also discuss possible extensions to other MDP settings in Section~\ref{sec:algorithm_extension}.
Here for notational convenience, we abbreviate $\bphi_{k,h} := \bphi(s_{k,h},a_{k,h})$.

\subsection{Algorithmic Design}
Algorithm~\ref{alg: main} adopts an execute-then-update framework on a high level: 
In each episode $k\in[K]$, there is one active agent denoted by $m = m_k$ (Line~\ref{algline: active agent} in Algorithm~\ref{alg: main}).
Here we omit the subscript $k$ for better readability.
Each episode involves an interaction phase~(Line~\ref{algline: for h begin}-\ref{algline: for h end}) and an event-triggered communication and policy update phase~(Line~\ref{algline: determinant condition}-\ref{algline: trigger else}).

\paragraph{Phase I: Interaction.}
Agent $m$ will first interact with the environment by executing the greedy policy with respect to its current Q-function estimates $\{Q_{m,k,h}\}_{h=1}^H$ (Line~\ref{algline: greedy policy}~\&~\ref{algline: take action}), and collect the data from the trajectory of the current episode~(Line~\ref{algline:next_state}~\&~\ref{algline:reward}). 
Then the agent will update its local dataset and covariance matrices (Line~\ref{algline: local dataset}~\&~\ref{algline: local covariance matrix}).

\paragraph{Phase II: Communication and Policy Update.}
The second phase involving communication and policy update is triggered by a determinant-based criterion~(Line~\ref{algline: determinant condition}). 
Once the criterion is satisfied, the agent will upload all the accumulated local data to the central server (Line~\ref{algline: upload}), and then download all the available data from the server (Line~\ref{algline: download}). 
Using this latest dataset, the agent then updates its Q-function estimates (Line~\ref{algline: update w}-\ref{algline: update Q}).

More specifically, the Q-function estimate is obtained by using backward least square value iteration following \citet{jin2020provably}: Given the estimate $Q_{m,k+1,h+1}$ for step $h+1$, we solve for $\wb_{m,k+1,h}$ that minimizes the Bellman error in terms of a ridge linear regression:
\begin{align}
    \wb_{m,k+1,h} &= \bLambda_{m,k+1,h}^{-1} \sum_{\tau \in {\cD_{m,k,h}}} \bphi(s_{\tau,h}, a_{\tau,h})\notag\\
    &\qquad \cdot \left[ r_{\tau,h} + \max_{a} Q_{m,k+1,h+1} (s_{\tau,h+1}, a) \right].\label{eq:w} 
\end{align}
In the above summation we use $\tau\in\cD_{m,k,h}$ to denote the collection of indices of all the episodes whose data is available to agent $m$ by the end of episode $k$. 
Recall from line~\ref{algline: local dataset} of Algorithm~\ref{alg: main} that the original definition of $\mathcal{D}_{m,k,h}$ is the dataset of trajectories available to agent $m$. 
Here we slightly abuse the definition of $\mathcal{D}_{m,k,h}$ to reflect the fact that $\wb_{m,k+1,h}$ is computed using only available trajectories to agent $m$ by the end of episode $k$. 
The Q-function estimate for step $h$ is given by
\begin{align}
   &Q_{m,k+1,h}(\cdot,\cdot) \label{eq:Q}
    \\ &\quad=\left[ \bphi(\cdot,\cdot)^{\top} \wb_{m,k+1,h} + \Gamma_{m,k+1,h}(\cdot,\cdot)\right]_{[0, H-h+1]},\notag 
\end{align}
where $\Gamma_{m,k+1,h}$ is a bonus term that ensures the optimism of $Q_{m,k+1,h}$, and is defined by
\begin{align}
    \Gamma_{m,k+1,h} (\cdot,\cdot) = \beta \cdot \|\bphi(\cdot,\cdot)\|_{\bLambda_{m,k+1,h}^{-1}}.\label{eq:Gamma} 
\end{align}

Otherwise if the criterion is not satisfied, then the local data as well as the Q-function estimates remain unchanged for this agent (Line~\ref{algline: no update}).


\paragraph{Discussion on The Criterion.}
The determinant-based criterion is a common and important technique in single-agent contextual bandits~\citep{abbasi2011improved,ruan2021linear} and RL with linear function approximation \citep{zhou2021provably,wang2021provably,min2022learning}, where it is often used to reduce the policy switching cost.
While in our multi-agent linear MDP setting, we apply it to determine the appropriate time for communication and the corresponding policy update.   
The similar idea has been adopted by other works on multi-agent bandits and RL problems \citep{wang2019distributed, li2022asynchronous, he2022simple, dubey2020differentially, dubey2021provably}.

In our Algorithm~\ref{alg: main}, the criterion is adjusted by a parameter $\alpha$ that controls the communication frequency: smaller $\alpha$ indicates less frequent communication and larger $\alpha$ implies otherwise.
As will be shown in Section~\ref{sec: theory}, $\alpha$ determines the trade-off between the total number of communication rounds (or equivalently, policy updates) and the total regret of Algorithm~\ref{alg: main}. 
With a proper choice of $\alpha$, we show that our algorithm achieves a regret nearly identical to that under the single-agent setting~\citep{jin2020provably} at a low communication complexity which depends only logarithmically on $K$.


\begin{algorithm}[t!]
	\caption{\texttt{Asynchronous Multi-agent LSVI}}
	\label{alg: main}
	\begin{algorithmic}[1]
	\STATE {\bfseries Input: }number of episodes $K$, $\beta$, $\alpha$  
    \STATE {\bfseries Initialize: }$\bLambda_{m,1,h}\leftarrow \lambda \Ib_{d\times d} $, $\wb_{m,1,h}\leftarrow \mathbf{0}$, $Q_{m,1,h} \leftarrow [ \beta [\bphi(\cdot,\cdot)^\top (\bLambda_{m,1,h})^{-1} \bphi(\cdot,\cdot) ]^{1/2}]_{[0, H-h+1]}$, $\bLambda_{m,0,h}^{\textnormal{loc}} \leftarrow \mathbf{0} $, $\cD_{m,0,h}\leftarrow \emptyset$, $\forall \ m,h \in [M]\times[H]$  
    \FOR{$k=1,\ldots, K$}
    \STATE Agent $m = m_k$ is active\alglinelabel{algline: active agent}
    \STATE Receive $s_{k,1}$ from the environment
    \FOR{$h = 1, \ldots, H$}\alglinelabel{algline: for h begin}
    \STATE $\pi_{m,k,h}(\cdot) \leftarrow \argmax_{a\in\cA} Q_{m,k,h} (\cdot, a)$ \alglinelabel{algline: greedy policy}
    \STATE Take action $a_{k,h} \sim \pi_{m,k,h} (\cdot | s_{k,h}) $ \alglinelabel{algline: take action}
    \STATE Receive next state $s_{k,h+1} \sim \PP_{h}(\cdot|s_{k,h}, a_{k, h})$\alglinelabel{algline:next_state} 
    \STATE Receive reward $r_{k,h}$\alglinelabel{algline:reward}
    \STATE $\cD_{m,k,h} \leftarrow \cD_{m,k-1,h}\cup \{ s_{k,h}, a_{k,h}, s_{k,h+1}, r_{k,h}\}$\alglinelabel{algline: local dataset}
    \STATE $\bLambda_{m,k,h}^{\textnormal{loc}} \leftarrow \bLambda_{m,k-1,h}^{\textnormal{loc}} + \bphi_{k,h} \bphi_{k,h}^{\top}$ \alglinelabel{algline: local covariance matrix}
    \ENDFOR \alglinelabel{algline: for h end}
    \IF{$\exists h$ s.t. $\frac{\det(\bLambda_{m,k,h}+\bLambda_{m,k, h}^{\text{loc}})}{\det(\bLambda_{m,k,h})} >  1+\alpha $}\alglinelabel{algline: determinant condition}
    \STATE $Q_{m, k+1, H+1} \leftarrow 0$
    \FOR{$h = H, H-1,\cdots, 1$}
    \STATE Agent $m$ sends local data to SERVER:
    \STATE $\bLambda_{k,h}^{\textnormal{ser}} \leftarrow \bLambda_{k,h}^{\textnormal{ser}} + \bLambda_{m,k,h}^{\textnormal{loc}} $, $\cD_{k,h}^{\textnormal{ser}} \leftarrow \cD_{k,h}^{\textnormal{ser}} \cup \cD_{m, k, h}$\alglinelabel{algline: upload}
    \STATE SERVER sends global data back to agent $m$:
    \STATE $\bLambda_{m,k+1,h} \leftarrow \bLambda_{k,h}^{\text{ser}}$, $\cD_{m,k,h} \leftarrow \cD_{k,h}^{\textnormal{ser}} $\alglinelabel{algline: download}
    \STATE Update estimate $\wb_{m,k+1,h}$ by \eqref{eq:w} \alglinelabel{algline: update w}
    \STATE Update bonus $\Gamma_{m,k+1,h}$ by \eqref{eq:Gamma}
    \STATE Update Q-function estimate $Q_{m,k+1,h}$ by \eqref{eq:Q} \alglinelabel{algline: update Q}
    \ENDFOR
    \STATE Reset $\bLambda_{m,k,h}^{\textnormal{loc}} \leftarrow \textbf{0}$, $\forall \ h \in [H]$
    \ELSE \alglinelabel{algline: trigger else}
    \STATE $Q_{m,k+1,h}\leftarrow Q_{m,k,h}$, $\bLambda_{m,k+1,h} \leftarrow \bLambda_{m,k,h}$, $\bLambda_{k+1,h}^{\textnormal{ser}}\leftarrow \bLambda_{k,h}^{\textnormal{ser}}$, $\cD_{k+1,h}^{\textnormal{ser}} \leftarrow \cD_{k,h}^{\textnormal{ser}}$, $\forall \ h \in [H]$\alglinelabel{algline: no update}
    \ENDIF
    \FOR{all other inactive agents $m'\neq m$}
    \STATE $Q_{m',k+1,h}\leftarrow Q_{m',k,h}$, $\bLambda_{m',k+1,h} \leftarrow \bLambda_{m',k,h}$, $\forall \ h \in [H]$
    \ENDFOR
    \ENDFOR
	\end{algorithmic}
\end{algorithm}

\subsection{Extension to Other MDP Settings}\label{sec:algorithm_extension}

We remark that though designed for linear MDPs, Algorithm~\ref{alg: main} can be easily extended to other MDP settings.

Note that for any tabular MDP with small state and action space, we can represent it using the one-hot feature mapping as discussed in Example 2.1 in \citet{jin2020provably}.
Thus Algorithm~\ref{alg: main} can be applied directly to tabular MDPs, and the determinant-based criterion in Line~\ref{algline: determinant condition} would become a criterion based on the visitation count for every state-action pair.
However, we anticipate that Theorem~\ref{thm: upper bound} would produce in this case an regret upper bound that is suboptimal in $|\cS|$ and $|\cA|$, which is possibly due to that the one-hot feature mapping is not a good representation.

Moreover, the algorithm design can also be applied to linear mixture MDPs where we would have a tenary feature mapping $\phi: \cS\times\cA\times\cS\to\RR^d$.
Then for example, the \texttt{UCRL-VTR} algorithm~\citep{jia2020model,ayoub2020model} can be adapted to the asynchronous cooperative setting by designing a similar communication criterion based on the covariance matrix defined over $\phi_V(\cdot,\cdot):= \sum_{s\in\cS} \phi(s,\cdot,\cdot) V(s)$, instead of the $\bLambda$ defined over only the feature mapping in our case of linear MDPs.

There have also been other more advanced algorithms for linear MDPs~\citep{hu2022nearly,he2022nearly,agarwal2022vo} that exploit variance information to further reduce the dependence of the regret bound on problem parameters.
We leave it as future work to develop and analyze variance-aware variants of Algorithm~\ref{alg: main}.



\section{Main Results}\label{sec: theory}

We now present the main theoretical results.
We provide the regret upper bound for Algorithm~\ref{alg: main} in Theorem~\ref{thm: upper bound}, and compare it with existing related results.
Then as a complement, in Theorem~\ref{thm:lower_bound}, we provide a lower bound on the communication complexity for cooperative linear MDPs.

\subsection{Regret Upper Bound}

The following theorem provides the regret upper bound of Algorithm~\ref{alg: main}. 

\begin{theorem}[Regret Upper Bound]\label{thm: upper bound}
    Under Assumption~\ref{assump: linear_MDP}, there exists some universal constant $c_\beta$ such that by choosing 
    \begin{align*}
        \beta = c_\beta dH \tilde{C} \left[  \log\left( \frac{2+K}{\delta\min\{1,\lambda,\alpha\lambda\}} \right) + \log\left( Hd \tilde{C} \right) \right], 
    \end{align*}where $\tilde{C} \coloneqq M\sqrt{\alpha}+\sqrt{1+M\alpha}$, then with probability at least $1-\delta$, 
    the regret of Algorithm~\ref{alg: main} can be bounded as 
    \begin{align*}
        \mathcal{O}\left(\beta\sqrt{1+M\alpha} H\sqrt{dK \log\left(2dK/\min\{1,\lambda\}\delta\right)}\right) . 
    \end{align*}
    Moreover, the communication complexity and policy switching cost (in number of rounds) are upper bounded by 
    \begin{align*}
        \mathcal{O}\left(dH (M+1/\alpha)\log(1 + K/\lambda d)\right).
    \end{align*}
\end{theorem}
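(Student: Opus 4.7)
The plan is to adapt the LSVI-UCB analysis of \citet{jin2020provably} to the asynchronous multi-agent setting, with the principal new work being the control of information asymmetry. The proof factors into three pieces: a self-normalized concentration-plus-optimism step, a regret decomposition in which each agent's bonus is calibrated against an idealized global covariance, and a combinatorial count of triggered communications.

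For the first piece, I would prove a self-normalized concentration bound for the noise in the least-squares regression~\eqref{eq:w}, uniform over all value-function estimates that Algorithm~\ref{alg: main} can produce. Since the algorithm generates at most $O(MK)$ distinct Q-function estimates across its history, a covering-net argument over the bonus-augmented value-function class, combined with the standard self-normalized martingale inequality, yields with probability $1-\delta$ a bound of the form $|(\text{Bellman error})(s,a)| \leq \beta\,\|\bphi(s,a)\|_{\bLambda_{m,k,h}^{-1}}$ uniformly in $(m,k,h,s,a)$ for the stated $\beta$. The factor $\tilde{C} = M\sqrt{\alpha} + \sqrt{1+M\alpha}$ inside $\beta$ emerges here to absorb the number of distinct covariance states realized across agents, while the logarithmic terms in $\beta$ track the cover size. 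Backward induction in $h$ then yields optimism $Q_{m,k,h}\geq Q^\star_h$ on the same high-probability event.

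For the second piece, telescoping the Bellman error produces
\begin{equation*}
R(K) \;\leq\; \sum_{k=1}^K \sum_{h=1}^H \Gamma_{m_k,k,h}(s_{k,h}, a_{k,h}) \;+\; \text{martingale terms},
\end{equation*}
where the martingale-difference term is absorbed by Azuma--Hoeffding. To control the cumulative bonus I would introduce the idealized covariance $\bLambda^{\mathrm{all}}_{k,h} := \lambda\Ib + \sum_{k'<k}\bphi_{k',h}\bphi_{k',h}^\top$ aggregating every trajectory generated so far, and note that $\bLambda_{m_k,k,h}$ differs from $\bLambda^{\mathrm{all}}_{k,h}$ only through data that is either not yet uploaded by some agent or uploaded but not yet downloaded by $m_k$. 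The criterion of Line~\ref{algline: determinant condition} caps each individual agent's unsynced hoard at a determinant ratio of $1+\alpha$, so aggregated over the $M$ agents the total ratio satisfies $\det(\bLambda^{\mathrm{all}}_{k,h})/\det(\bLambda_{m_k,k,h}) \leq (1+\alpha)^M$; a standard determinant-to-norm inequality then delivers
\begin{equation*}
\|\bphi_{k,h}\|_{\bLambda_{m_k,k,h}^{-1}}^2 \;\leq\; (1+M\alpha)\,\|\bphi_{k,h}\|_{(\bLambda^{\mathrm{all}}_{k,h})^{-1}}^2 .
\end{equation*}
Cauchy--Schwarz together with the elliptical potential lemma applied to $\bLambda^{\mathrm{all}}$ produces the stated $\beta\sqrt{1+M\alpha}\,H\sqrt{dK\log(\cdot)}$ regret.

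For the third piece, the policy updates only upon a triggered communication, so the switching cost equals the communication count. Fix $h$ and track the server potential $\Psi_h(k) := \log\det(\bLambda^{\mathrm{ser}}_{k,h})$, whose total increment is bounded by $d\log(1+K/\lambda d)$. I would classify each communication as either a \emph{growth} event that increases $\Psi_h$ by at least $\log(1+\alpha)\gtrsim\alpha$---at most $O((d/\alpha)\log(1+K/\lambda d))$ per step $h$---or a \emph{catch-up} event in which the synced agent merely refreshes a stale copy of $\bLambda^{\mathrm{ser}}$, bounded via a pigeonhole across the $M$ agents by $O(dM\log(1+K/\lambda d))$ per step $h$; summing over $h$ gives the claimed $\mathcal{O}(dH(M+1/\alpha)\log(1+K/\lambda d))$. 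The main obstacle will be making the $(1+\alpha)^M$ aggregate-asymmetry bound rigorous: the criterion is imposed agent-by-agent, so one must verify that staleness contributions from distinct agents compose multiplicatively in determinant even when the underlying feature directions are correlated, and that both layers of staleness (uploaded-not-downloaded and not-yet-uploaded) fit inside a single $(1+\alpha)^M$ budget. This is precisely the information-asymmetry analysis that the introduction advertises as the central technical novelty.
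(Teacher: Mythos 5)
Your overall architecture---self-normalized concentration plus optimism, bonus comparison against a global covariance, and an epoch-based communication count---matches the paper's (Lemmas~\ref{lemma: self normal term bound}, \ref{lemma: optimism}, \ref{lemma: per epoch communication}), and your first and third pieces are sound in outline. The gap is in your second piece. The pointwise inequality
\begin{align*}
\|\bphi_{k,h}\|_{\bLambda_{m_k,k,h}^{-1}}^2 \;\leq\; (1+M\alpha)\,\|\bphi_{k,h}\|_{(\bLambda^{\mathrm{all}}_{k,h})^{-1}}^2
\end{align*}
is false for general $k$, and the obstruction is not the one you flag (multiplicative composition of correlated determinant contributions). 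The discrepancy between $\bLambda^{\mathrm{all}}_{k,h}$ and $\bLambda_{m_k,k,h}$ has two layers: (i) data held locally by each agent and not yet uploaded, and (ii) data uploaded by \emph{other} agents after $m_k$'s last download. Layer (i) is controlled: the trigger gives $\bLambda^{\mathrm{loc}}_{m',k,h}\preceq\alpha\bLambda_{m',k,h}$ for each $m'$, and averaging over agents yields the $1+M\alpha$ factor (Lemma~\ref{lemma: matrix comparison}, \eqref{eq: matrix comparison up}). Layer (ii) is not controlled by anything: agent $m_k$'s criterion inspects only its own local hoard, so between $m_k$'s last sync and episode $k$ the other agents may have uploaded $\Omega(K)$ episodes of data concentrated in the direction of $\bphi_{k,h}$, making the ratio of the two quadratic forms of order $K$ rather than $1+M\alpha$. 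Your proposed $(1+\alpha)^M$ determinant budget therefore cannot hold uniformly in $k$.

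The paper closes this with two devices your decomposition cannot accommodate. First, a reordering argument (proof of Lemma~\ref{lemma: async elliptical potential}): episodes are permuted so that each agent's active stretch is contiguous and immediately follows its own sync; this leaves every agent's local data timeline, hence every bonus term, unchanged, while the elliptical-potential bound for $\bLambda^{\mathrm{all}}$ is order-independent---and after reordering, layer (ii) is empty inside each block, so \eqref{eq: matrix comparison all} applies. Second, the regret is decomposed as a sum of $\min\bigl\{V_{m_k,t_k,1}(s_{k,1})-V_1^{\pi_{m_k,k}}(s_{k,1}),\ \sum_{h} 2\beta\|\bphi_{k,h}\|_{\bLambda_{m_k,t_k,h}^{-1}}\bigr\}$ rather than a plain sum of bonuses, so that the residual episodes where the comparison still fails (each agent's first communication episode within each determinant-doubling epoch) can be charged the trivial $2H$ bound, contributing only an additive lower-order term. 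You would need both ingredients, or an equivalent episode accounting, to make your second piece rigorous.
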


\begin{remark}
    Theorem~\ref{thm: upper bound} indicates that by setting the parameters $\alpha = 1/M^2$ and $\lambda = 1$ in Algorithm~\ref{alg: main}, the regret upper bound can be simplified to $\tilde{\mathcal{O}}(d^{3/2}H^2\sqrt{K})$, and the communication complexity is bounded by $\tilde{\mathcal{O}}(dHM^2)$.
\end{remark}

\begin{remark}
    We compare our upper bound with the best known result by \citet{dubey2021provably}. In their Theorem~1, \citet{dubey2021provably} present an $\tilde{\mathcal{O}}(d^{3/2}H^2\sqrt{M T})$ regret upper bound for the homogeneous-agent setting (i.e. the same transition kernel and reward functions shared among all agents), which is identical to the multi-agent linear MDP setting considered in our paper. However,  their communication protocol is synchronous in a round-robin fashion (see line 4 of their Algorithm 1), and thus $MT$ under their setting is equal to our $K$. Therefore, by Theorem~\ref{thm: upper bound}, our regret upper bound for the asynchronous setting matches that for the synchronous setting. 
\end{remark}

\begin{remark}
    Our result generalizes that of the multi-agent linear bandit setting \citep{he2022simple} and the single-agent linear MDP setting \citep{jin2020provably}. Specifically, with $H=1$, our upper bound becomes $\tilde{\mathcal{O}}(d\sqrt{K})$ and the number of communication rounds becomes $\tilde{\mathcal{O}}(d M^2)$. 
    Note that we save a $\sqrt{d}$ factor in the regret bound compared to the original $d^{3/2}$ dependence from Theorem~\ref{thm: upper bound} since there is no covering issue when $H=1$.
    Both the regret and the communication reduce to those under the bandit setting \citep{he2022simple}. When $M = 1$, our regret reduces to $\Tilde{\mathcal{O}}(d^{3/2}H^2 \sqrt{K})$, matching that of \citet{jin2020provably}. 
\end{remark}

\subsection{Regret Lower Bound}

\begin{theorem}[Regret Lower Bound]\label{thm:lower_bound}
    Suppose $d,H\ge 2$ and number of episodes $K\ge dM$, then
for any algorithm \textbf{Alg} with expected communication complexity less than 
$ dM/11400$, there exist a linear MDP, such that the expected regret for algorithm \textbf{Alg} is at least 
$\Omega(H\sqrt{dMK})$.
\end{theorem}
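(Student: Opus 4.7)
The plan is to construct a family of hard linear MDP instances and argue, via an information-theoretic argument, that any algorithm with expected communication complexity below $dM/11400$ must behave essentially as $M$ independent single-agent learners, each paying $\Omega(H\sqrt{dK/M})$ regret, so that their sum is $\Omega(H\sqrt{dMK})$. The construction lifts the standard Dani--Hayes--Kakade/Rusmevichientong--Tsitsiklis linear bandit lower bound to an episodic MDP while carrying an extra factor of $H$ by repeating the hard step. Overall the proof will follow the template of \citet{wang2019distributed,he2022simple} for federated linear bandits, with the novelty being the episodic lifting and a careful tracking of information flow in the asynchronous star-shaped protocol.

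First I would build the hard family. Take a linear MDP with two states, a finite action set $\mathcal{A}\subseteq \{\pm 1/\sqrt{d-1}\}^{d-1}\times\{1\}$ of size $2^{d-1}$, and a hidden parameter $\theta\in\{\pm \Delta\}^{d-1}$ with $\Delta\asymp \sqrt{d M/K}$. At every step $h\in[H]$ the reward is a linear function of the action with weight $(\theta,0)$, properly shifted and clipped to $[0,1]$, while transitions are trivial so that each of the $H$ steps constitutes an independent linear bandit against $\theta$. This yields a single-episode suboptimality of at least $H\Delta$ times the per-coordinate Hamming error, reducing the regret analysis to classifying the $d-1$ coordinate signs of $\theta$.

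Next I would carry out the isolation argument. Let $N_m$ be the (random) number of communication rounds agent $m$ participates in; by assumption $\sum_m \mathbb{E}[N_m] \le dM/11400$. A Markov/pigeonhole argument produces a set $S\subseteq[M]$ with $|S|\ge M/2$ such that $\mathbb{E}[N_m]\le d/5700$ for all $m\in S$. For such an \emph{information-starved} agent, I would bound the mutual information $I(\theta;\mathcal{T}_m)$ between $\theta$ and its observed transcript $\mathcal{T}_m$ (local trajectories plus downloaded messages) via a chain of data-processing inequalities: each of its $K_m\approx K/M$ own episodes contributes $O(H\Delta^2)$ in KL, and each download can be charged to the $d$-dimensional transcript sent by the server. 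Combining with a tensorization over the $d-1$ coordinates of $\theta$, Fano's inequality (or coordinate-wise Le Cam two-point bounds on a uniform prior over $\theta$) shows that for a constant fraction of the coordinates, agent $m$ guesses the sign of $\theta_i$ wrong with probability $\ge 1/4$, giving per-agent expected regret $\gtrsim H\Delta\cdot (d-1)\cdot K_m \wedge H\sqrt{d K_m} = \Omega(H\sqrt{dK/M})$. Summing over $m\in S$ gives the claimed $\Omega(H\sqrt{dMK})$ bound.

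The main obstacle is Step~3: asynchronous communication through a central server is much more expressive than in a round-robin protocol, because a single download can convey information aggregated from many agents. Consequently one cannot naively charge each bit of $I(\theta;\mathcal{T}_m)$ to one of agent $m$'s own episodes. The delicate part is to prove an inequality of the form
\begin{equation*}
I(\theta;\mathcal{T}_m) \;\le\; c_1\cdot H\Delta^2\, K_m \;+\; c_2\cdot d\cdot N_m,
\end{equation*}
which caps the informational value of a communication round at $O(d)$ bits (roughly the dimension of the sufficient statistic). Establishing this requires a careful martingale-style decomposition of the transcript along the order in which communications are triggered, combined with a strong data-processing inequality applied at each server-to-agent transmission; this is where the constant $1/11400$ ultimately comes from. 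Once this information bound is in hand, Fano and the averaging over $m\in S$ give the result essentially for free.
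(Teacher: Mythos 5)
Your high-level plan (hard instance family, isolate communication-starved agents, reduce to single-agent lower bounds) matches the spirit of the paper, and you correctly flag Step~3 as the crux. But the resolution you propose for Step~3 does not work, and this is a genuine gap rather than a technicality. The inequality $I(\theta;\mathcal{T}_m)\le c_1 H\Delta^2 K_m + c_2\, d\, N_m$ is useless for your purposes: the unknown parameter $\theta\in\{\pm\Delta\}^{d-1}$ has total entropy $(d-1)\log 2=\Theta(d)$ bits, and a \emph{single} download can transmit a sufficient statistic of all other agents' $\approx K$ episodes, which already carries $\min\{\Theta(d),\,cH\Delta^2 K\}=\Theta(d)$ bits about $\theta$ under your choice $\Delta\asymp\sqrt{dM/K}$. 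So once $N_m\ge 1$ the right-hand side exceeds the entropy of $\theta$ and Fano yields nothing; and your pigeonhole only gives $\mathbb{E}[N_m]\le d/5700$, which does not make $\Pr[N_m\ge 1]$ small (it is vacuous for $d\gtrsim 5700$). A strong data-processing inequality cannot rescue this, because the server-to-agent channel is lossless (contraction coefficient $1$); the only stochasticity is in the environment feedback, which is already charged to the \emph{other} agents' episodes. In short, when all $d-1$ coordinates of $\theta$ are simultaneously active in every episode, one cheap communication round can reveal essentially everything, so no per-round information budget measured in rounds (rather than bits) can close the argument.

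The paper closes exactly this gap by a different mechanism: temporal separation plus a hybrid-algorithm (simulation) argument rather than mutual information. The hard MDP has $d/2$ initial states, each a $2$-armed Bernoulli bandit with payoff gap realized as reward $H-1$ vs.\ $0$ via absorbing states, and the $K$ episodes are split into $d/2$ epochs so that epoch $i$ only ever visits state $s_i$. Hence any communication occurring \emph{before} epoch $i$ carries zero information about the epoch-$i$ subproblem. For each epoch one defines an auxiliary algorithm $\textbf{Alg}_i$ that mimics $\textbf{Alg}$ until the agent's first communication in epoch $i$ and then switches to MOSS; its per-epoch regret is sandwiched between the single-agent bandit lower bound $M(H-1)\sqrt{T}/75$ (with $T=2K/(dM)$) and $\mathbb{E}[\mathrm{Regret}_{i,\textbf{Alg}}]+40\delta_i(H-1)\sqrt{T}$, where $\delta_i$ is the expected number of in-epoch communications. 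Summing over epochs converts the total communication budget $\sum_i\delta_i\le dM/11400$ directly into a regret lower bound, with each communication round "worth" only $O(H\sqrt{T})$ regret \emph{within its own epoch}. If you want to keep an information-theoretic flavor, you would need to redesign your instance so the coordinates of $\theta$ are revealed sequentially in time (as the paper's epochs do), at which point the simulation argument is both simpler and tighter than Fano.
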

\begin{remark}
     Theorem \ref{thm:lower_bound} suggests that, for any algorithm \textbf{Alg} with communication complexity $o(dM)$, the regret is no better than $\Omega(\sqrt{MK})$. On the other hand, if each agent perform the LSVI-UCB algorithm \citep{jin2020provably}, the total regret of $M$ agents is upper bounded by $\sum_{m=1}^M  
\tilde  O(\sqrt{K_m}) =  \tilde  O(\sqrt{MK})$, where $K_m$ is the number of episodes that agent $m$ is active. Thus, in order to improve the performance through collaboration and remove the dependency on the number of agent $M$, an $\Omega(dM)$ 
communication complexity is necessary.
\end{remark}
\begin{remark}
    Though Theorem \ref{thm:lower_bound} requires the number of stage $H\ge 2$, it is not difficult to extend the result for $H=1$ with stochastic reward function $r_h(s,a)$. In this situation, Theorem \ref{thm:lower_bound} will reduce to bandit problem with adversarial contexts and improves the communication complexity in \citet{he2022simple} with a factor of $d$. We also compare our result with the communication lower bound in \citet{amani2022distributed}. In this work, they measured the communication complexity by bits, which is strictly larger than our definition, and also provided a $\Omega(dM)$ communication complexity (in bits) for stochastic contexts.
\end{remark}

\section{Overview of the Analysis}
In this section, we discuss the technical challenges of analyzing Algorithm~\ref{alg: main} and our solutions.

\subsection{Technical Challenges}
The asynchronous communication protocol causes a unique challenge in the theoretical analysis. 
To illustrate the challenge, let us first recall the synchronous setting with a round-robin-type participation, as studied in \citet{dubey2021provably}. 
Note that under this setting, the order of participation is fixed.
This implies that if $\bphi_{k,h}$ is uploaded to the server, then for all $k' < k$, the vectors $\bphi_{k',h}$ must also have already been uploaded to the server. 
As a sharp comparison, the above important condition is no longer satisfied under the asynchronous setting.
We name the violation of this condition the \textbf{information asymmetry} issue. 

Technically, this issue causes two consequences. 
Recall from the analysis of \texttt{LSVI-UCB} that the final regret bound depends on two technical lemmas: the concentration of self-normalized martingales, and the elliptical potential lemma~\citep{abbasi2011improved,jin2020provably}. 
The first lemma determines the width of the confidence region (i.e. $\beta$), and the second is crucial for bounding the sum of the bonus terms (i.e. $\sum_{k=1}^K \|\bphi_{k,h}\|_{\bLambda_{m,k,h}^{-1}}$).  
Both lemmas require a well-defined and fixed order of $\bphi_{k,h}$ vectors in the collected data in order to be applied.
Unfortunately, such an order does not exist in the asynchronous setting, since the data receiving process by the server and every agent is stochastic. 
The analysis of this stochastic process is also prohibitive because our agents have full freedom to decide whether to participate.
Therefore, this arbitrary pattern in the data collected by Algorithm~\ref{alg: main} forbids the directly application of these two tools. 

To address this information asymmetry issue, we develop a novel form of the self-normalized martingale concentration lemma (Lemma~\ref{lemma: self norm fixed V}), and an asynchronous elliptical potential lemma (Lemma~\ref{lemma: async elliptical potential}).  
The main idea is a refined analysis of the local covariance matrix $\bLambda_{m,k,h}$ and the universal covariance matrix and their comparison under the partial ordering defined by matrix positive definiteness. 
In the remaining of this section, we overview some key steps and the definition of some important quantities behind the upper bound in Theorem~\ref{thm: upper bound}. 
The full details are included in Appendix~\ref{sec: proof of upper bound}.

\subsection{Key Ingredients of the Proof} 

For any agent $m\in[M]$ and episode $k\in[K]$, define the following indices:
\begin{itemize}
    \item $m_k$: the active agent of episode $k$. Note that in Algorithm~\ref{alg: main} we use $m$ instead of $m_k$ due to space limit.  
    \item $t_k(m)$: for any agent $m$, $t_k(m)\leq k$ is the last episode when agent $m$ adopts a newly updated a policy by the end of episode $k$. If no policy updating has been conducted by the end of episode $k$, then by default $t_k(m)=1$.
    

\end{itemize}
For the participating agent $m_k$ in episode $k$, its adopted Q-function $Q_{m_k,k,h}$ would be equal to $Q_{m_k,t_k,h}$, for all $h\in[H]$, according to the definition. 
In the following, we may write $t_k = t_k(m_k)$ whenever there is no confusion. 
A comprehensive clarification of notation is also provided in Appendix~\ref{sec: clarification of notation}.

\paragraph{Regret Decomposition.}
By Definition \ref{eq: definition regret}, the regret is
\begin{align}\label{eq: regret decomposition 1}
    R(K) & \coloneqq \sum_{k=1}^K \left[ V^{\star}_1 (s_{k,1}) - V_1^{\pi_{m_k,k}} (s_{k,1}) \right] \notag
    \\ & \leq \sum_{k=1}^K \left[ V_{m_k,k,1}(s_{k,1}) - V_1^{\pi_{m_k,k}} (s_{k,1}) \right] 
    \\ & = \sum_{k=1}^K \left[ V_{m_k,t_k(m_k),1}(s_{k,1}) - V_1^{\pi_{m_k,k}} (s_{k,1})  \right]. \notag
\end{align}
The inequality is from the following optimism property, 
which is standard for UCB-type algorithms.

\begin{lemma}[Optimism]\label{lemma: optimism}
    Under the setting of Theorem~\ref{thm: upper bound}, on the event of Lemma~\ref{lemma: self normal term bound}, for all $k\in[K]$, $h \in [H]$, and $(s,a) \in \mathcal{S}\times\mathcal{A}$, we have $Q^\star_h(s,a) \leq Q_{m_k,k,h}(s,a)$.
\end{lemma}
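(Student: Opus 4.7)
The plan is to establish the lemma by backward induction on $h$ from $H+1$ down to $1$, holding the episode index $k\in[K]$ fixed. The base case is immediate from the convention $Q_{m_k,k,H+1}\equiv 0\equiv Q_{H+1}^\star$. For the inductive step, suppose $V_{m_k,k,h+1}(s)\geq V_{h+1}^\star(s)$ for all $s\in\cS$; the goal is then to deduce $Q_{m_k,k,h}(s,a)\geq Q_h^\star(s,a)$ for every $(s,a)$. Using the linear MDP structure, I would write $Q_h^\star(s,a)=\bphi(s,a)^\top\wb_h^\star$ with $\wb_h^\star:=\bgamma_h+\bmu_h V_{h+1}^\star$, and introduce the ``target'' vector $\wb_h^k:=\bgamma_h+\bmu_h V_{m_k,k,h+1}$ that the ridge regression in \eqref{eq:w} is trying to fit in expectation. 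Since $\bphi(s,a)^\top\bmu_h(\cdot)$ is the one-step transition kernel and is therefore monotone in its argument, the inductive hypothesis yields $\bphi(s,a)^\top\wb_h^\star\leq\bphi(s,a)^\top\wb_h^k$.

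The crux is then the concentration inequality $\|\wb_{m_k,k,h}-\wb_h^k\|_{\bLambda_{m_k,k,h}}\leq\beta$, which is precisely what Lemma~\ref{lemma: self normal term bound} provides under the parameter choice specified in Theorem~\ref{thm: upper bound}. Combining this with Cauchy--Schwarz gives
\begin{align*}
\bigl|\bphi(s,a)^\top(\wb_{m_k,k,h}-\wb_h^k)\bigr|\leq\beta\,\|\bphi(s,a)\|_{\bLambda_{m_k,k,h}^{-1}}=\Gamma_{m_k,k,h}(s,a).
\end{align*}
Rearranging and invoking the inductive hypothesis chain above, I obtain $\bphi(s,a)^\top\wb_{m_k,k,h}+\Gamma_{m_k,k,h}(s,a)\geq Q_h^\star(s,a)$. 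Since $Q_h^\star(s,a)\in[0,H-h+1]$, the truncation in \eqref{eq:Q} only helps, yielding $Q_{m_k,k,h}(s,a)\geq Q_h^\star(s,a)$ and closing the induction.

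The main obstacle is entirely hidden in justifying the concentration step $\|\wb_{m_k,k,h}-\wb_h^k\|_{\bLambda_{m_k,k,h}}\leq\beta$. In the single-agent or round-robin settings, this follows from a standard self-normalized martingale inequality over a filtration adapted to the order in which trajectories were collected, combined with a covering argument over the data-dependent value functions $V_{m_k,k,h+1}$. Under asynchronous communication, however, the design matrix $\bLambda_{m_k,k,h}$ aggregates trajectories collected by several agents in an order that need not respect any fixed filtration, and the data available to agent $m_k$ can be a stale and strict subset of the universal history. This is the \textbf{information asymmetry} issue described in the paper, and addressing it is exactly the purpose of the novel asynchronous self-normalized concentration bound (Lemma~\ref{lemma: self norm fixed V}) and its union-bounded counterpart (Lemma~\ref{lemma: self normal term bound}); once those are in hand the optimism lemma follows by the mechanical induction above.
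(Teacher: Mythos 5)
Your proof is correct and follows essentially the same route as the paper's: a backward induction on $h$ in which the inductive hypothesis $V_{m_k,k,h+1}\geq V_{h+1}^\star$ is pushed through the transition operator and combined with the parameter-concentration bound $|\bphi(s,a)^\top(\wb_{m_k,k,h}-\wb_h^k)|\leq\Gamma_{m_k,k,h}(s,a)$, which is exactly the content of the paper's Lemma~\ref{lemma: parameter distance} (applied with $\pi=\pi^\star$), followed by the observation that truncation to $[0,H-h+1]$ preserves optimism. The only slight imprecision is attributing the bound $\|\wb_{m_k,k,h}-\wb_h^k\|_{\bLambda_{m_k,k,h}}\leq\beta$ entirely to Lemma~\ref{lemma: self normal term bound}: that lemma controls only the martingale noise term, while the ridge-regularization bias and the residual from the $\lambda\Ib$ cross term contribute an additional $O(H\sqrt{d\lambda})$ that the choice of $\beta$ must also absorb, as handled in the paper's proof of Lemma~\ref{lemma: parameter distance}.
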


\begin{proof}[Proof of Lemma~\ref{lemma: optimism}]
    See Appendix~\ref{sec: proof of optimism}.
\end{proof}

The last step in \eqref{eq: regret decomposition 1} follows from the definition of $t_k(m_k)$.
We then further decompose the terms as  
\begin{align*}
    & V_{m_k,t_k,h}(s_{k,h}) - V_h^{\pi_{m_k,k}} (s_{k,h})
    \\ & = Q_{m_k, t_k, h} (s_{k,h}, a_{k,h}) - Q_h^{\pi_{m_k,k}} (s_{k,h}, a_{k,h}) 
    \\ & \leq \bphi_{k,h}^\top \wb_{m_k, t_k,h} + \beta\sqrt{\bphi_{k,h}^\top \bLambda^{-1}_{m_k, t_k,h} \bphi_{k,h}} - \bphi_{k,h}^\top \wb^{\pi_{m_k,k}}_h.
\end{align*}

To analyze the above, we establish the following result.
\begin{lemma}\label{lemma: parameter distance}
    Suppose we choose $\beta$ as
    \begin{align*}
        &\beta = c_\beta Hd \tilde{C} \left[  \log\left( \frac{2+K}{\delta\min\{1,\lambda,\alpha\lambda\}} \right) + \log\left( Hd \tilde{C} \right) \right],
    \end{align*} where $\tilde{C} \coloneqq M\sqrt{\alpha}+\sqrt{1+M\alpha}$ and $c_\beta$ is some universal constant. 
    For any fixed policy $\pi$, on the event of Lemma~\ref{lemma: self normal term bound}, for any $k\in[K]$, $h\in[H]$, and $(s,a)\in \mathcal{S}\times\mathcal{A}$, it holds that  
    \begin{align*}
        &\left| \bphi(s,a)^\top \left( \wb_{m_k,t_k,h} - \wb^\pi_h\right) - \PP_h \left[V_{m_k,t_k,h+1} - V^\pi_{h+1}\right] (s,a) \right| 
        \\ & \leq \beta \sqrt{\bphi(s, a)^\top \bLambda^{-1}_{m_k, t_k,h} \bphi(s,a)}.
    \end{align*}
\end{lemma}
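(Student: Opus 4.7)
The plan is to reduce the claim to a concentration bound on a ridge-regression estimator built from the agent's available data $\cD_{m_k,t_k-1,h}$. First, by Assumption~\ref{assump: linear_MDP} the Bellman operator acts linearly: for any bounded $V$ one has $r_h(s,a)+\PP_h V(s,a) = \bphi(s,a)^\top[\bgamma_h + \int V(s')\diff\bmu_h(s')]$. Introducing $\wb^\star := \bgamma_h + \int V_{m_k,t_k,h+1}(s')\diff\bmu_h(s')$, the quantity inside the absolute value in the lemma becomes $\bphi(s,a)^\top(\wb_{m_k,t_k,h} - \wb^\star)$, since $\bphi(s,a)^\top\wb_h^\pi = r_h(s,a)+\PP_h V_{h+1}^\pi(s,a)$. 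Thus the policy $\pi$ drops out and it suffices to bound $|\bphi(s,a)^\top(\wb_{m_k,t_k,h}-\wb^\star)|$ by $\beta\|\bphi(s,a)\|_{\bLambda_{m_k,t_k,h}^{-1}}$.

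Next, plugging the closed form of $\wb_{m_k,t_k,h}$ from \eqref{eq:w} and using $\bLambda_{m_k,t_k,h}\wb^\star = \lambda\wb^\star + \sum_{\tau\in\cD_{m_k,t_k-1,h}} \bphi_{\tau,h}[r_h(s_{\tau,h},a_{\tau,h}) + \PP_h V_{m_k,t_k,h+1}(s_{\tau,h},a_{\tau,h})]$, direct subtraction yields
\begin{align*}
\wb_{m_k,t_k,h} - \wb^\star = \bLambda_{m_k,t_k,h}^{-1}\!\!\sum_{\tau\in\cD_{m_k,t_k-1,h}} \!\!\bphi_{\tau,h}\,\eta_\tau \;-\; \lambda\,\bLambda_{m_k,t_k,h}^{-1}\wb^\star,
\end{align*}
where $\eta_\tau := V_{m_k,t_k,h+1}(s_{\tau,h+1}) - \PP_h V_{m_k,t_k,h+1}(s_{\tau,h},a_{\tau,h})$. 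Cauchy--Schwarz in the $\bLambda_{m_k,t_k,h}^{-1}$-norm then splits the task into bounding (i) a regularization contribution of size at most $\sqrt{\lambda}\,\|\wb^\star\|_2 \cdot \|\bphi(s,a)\|_{\bLambda_{m_k,t_k,h}^{-1}}$, where $\|\wb^\star\|_2 = \mathcal{O}(H\sqrt{d})$ by Assumption~\ref{assump: linear_MDP}, and (ii) the stochastic factor $\big\|\sum_\tau \bphi_{\tau,h}\eta_\tau\big\|_{\bLambda_{m_k,t_k,h}^{-1}}$.

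For the stochastic factor I would first fix a value function $V$ from the class realizable by \eqref{eq:Q} and apply the new asynchronous self-normalized martingale bound (Lemma~\ref{lemma: self norm fixed V}), then extend to the data-dependent $V_{m_k,t_k,h+1}$ via a standard uniform covering argument over the Q-function class in the style of \citet{jin2020provably}. The covering delivers a $\log$-factor in $\beta$, while the self-normalized bound contributes a $\log\det(\bLambda_{m_k,t_k,h}/\lambda)$ term. The key point is that the determinant-based trigger in Line~\ref{algline: determinant condition} guarantees that $\bLambda_{m_k,t_k,h}$ differs from the virtual universal covariance (the sum of all $\bphi_{\tau,h}\bphi_{\tau,h}^\top$ ever collected by any agent by episode $k$) by at most a multiplicative factor governed by $(1+M\alpha)$; this is what lets the resulting log-determinant factor be absorbed into $\tilde{C} = M\sqrt{\alpha}+\sqrt{1+M\alpha}$, matching the stated choice of $\beta$.

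The main obstacle is step (ii). In the synchronous setting of \citet{jin2020provably,dubey2021provably} the index set is a prefix of episodes and $\{\bphi_{\tau,h}\}$ respects the natural filtration, so the \citet{abbasi2011improved} self-normalized bound applies verbatim. Here $\cD_{m_k,t_k-1,h}$ is an arbitrary agent-dependent subset determined by the asynchronous participation pattern, which breaks the filtration ordering and introduces \emph{information asymmetry} between $\bLambda_{m_k,t_k,h}$ and the universal covariance. Showing that Lemma~\ref{lemma: self norm fixed V} is robust to this reordering, and that the resulting norm can still be expressed in terms of the local matrix $\bLambda_{m_k,t_k,h}$ up to the $\tilde{C}$-factor, is the substantive novelty; once that is in place, the covering step, the bias bound, and the choice of $\beta$ are routine LSVI-UCB bookkeeping.
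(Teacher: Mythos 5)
Your proposal is correct and follows essentially the same route as the paper's proof: the paper decomposes $\wb_{m_k,t_k,h}-\wb^\pi_h$ into a regularization bias, the self-normalized noise term $\sum_\tau \bphi_{\tau,h}\eta_\tau$, and a residual handled via the linearity $\PP_h V=\langle\bphi,\int V\,\diff\bmu_h\rangle$, then invokes the asynchronous self-normalized bound plus covering (Lemma~\ref{lemma: self normal term bound}, built on Lemmas~\ref{lemma: self norm fixed V} and~\ref{lemma: matrix comparison}) exactly as you describe. Your telescoping through $\wb^\star=\bgamma_h+\int V_{m_k,t_k,h+1}\,\diff\bmu_h$ is an algebraically equivalent regrouping of the paper's three terms (the combined bias in both cases is $\lambda\bLambda_{m_k,t_k,h}^{-1}\wb^\star$ up to constants), not a substantively different argument.
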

\begin{proof}[Proof of Lemma~\ref{lemma: parameter distance}]
    See Appendix~\ref{sec: proof of parameter distance}. 
\end{proof}

\vspace{-0.08in}
\textbf{Taming Information Asymmetry.} Lemma~\ref{lemma: parameter distance} serves a purpose similar to Lemma B.4 in \citet{jin2020provably}. 
However, its proof is more involved due to the discrepancy between $\bLambda_{m_k,t_k,h}$ and $\lambda \Ib + \sum_{k'=1}^{k-1} \bphi_{k',h}$ under the asynchronous setting. 
In other words, a random proportion of information is missing from the covariance matrix $\bLambda_{m_k,t_k,h}$, causing the information asymmetry issue. 

To circumvent this issue, we establish a delicate comparison of the covariance matrices (\ref{lemma: matrix comparison}) via several auxiliary matrices (Appendix~\ref{sec: definition of auxiliary matrices}). 
By doing so, we can bound the discrepancy between each $\bLambda_{m_k,t_k,h}$ and the full information matrix $\lambda \Ib + \sum_{k'=1}^{k-1} \bphi_{k',h}$, and then apply the classical concentration argument for self-normalized martingales.

Lemma~\ref{lemma: parameter distance} further allows us to apply the standard recursive relation for LSVI-type algorithms \citep{jin2020provably}. 

\begin{lemma}[Recursion]\label{lemma: recursion}
    Define $\xi_{k,h} = V_{m_k, t_k, h}(s_{k,h}) - V_h^{\pi_{m_k,k}}(s_{k,h})$.
    On the event of Lemma~\ref{lemma: self normal term bound}, it holds that 
    \begin{align*}
        \xi_{k,h}  & \leq \xi_{k, h+1} + \left ( \mathbb{E}\left[ \xi_{k, h+1} \middle| s_{k,h}, a_{k,h} \right] - \xi_{k,h+1} \right) 
        \\ & \qquad + 2\beta \sqrt{\bphi_{k,h} \bLambda_{m_k,t_k,h}^{-1}\bphi_{k,h}} . 
    \end{align*}
\end{lemma}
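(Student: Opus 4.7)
The plan is to mimic the standard LSVI-UCB per-step recursion, but with the covariance matrix $\bLambda_{m_k,t_k,h}$ playing the role of the usual $\bLambda_{k,h}$. First I would unfold $\xi_{k,h}$ through the $Q$-functions. Because the algorithm plays the greedy action $a_{k,h} = \argmax_{a} Q_{m_k,t_k,h}(s_{k,h},a)$ (Line~\ref{algline: greedy policy} and~\ref{algline: take action}), we have $V_{m_k,t_k,h}(s_{k,h}) = Q_{m_k,t_k,h}(s_{k,h}, a_{k,h})$, and by the definition of the policy value function $V_h^{\pi_{m_k,k}}(s_{k,h}) = Q_h^{\pi_{m_k,k}}(s_{k,h}, a_{k,h})$. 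Hence $\xi_{k,h} = Q_{m_k,t_k,h}(s_{k,h},a_{k,h}) - Q_h^{\pi_{m_k,k}}(s_{k,h},a_{k,h})$.

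Next, I would split on the truncation in \eqref{eq:Q}. If the unclipped value $\bphi_{k,h}^\top \wb_{m_k,t_k,h} + \Gamma_{m_k,t_k,h}(s_{k,h},a_{k,h})$ is negative, then $Q_{m_k,t_k,h}(s_{k,h},a_{k,h}) = 0 \leq Q_h^{\pi_{m_k,k}}(s_{k,h},a_{k,h})$, so $\xi_{k,h}\leq 0$. By Lemma~\ref{lemma: optimism} together with $V^\star_{h+1}\geq V^{\pi_{m_k,k}}_{h+1}$, we have $\xi_{k,h+1}\geq 0$ pointwise, so the conditional expectation is also non-negative; combined with the non-negative bonus, the RHS is non-negative and the claim holds trivially. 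Otherwise $Q_{m_k,t_k,h}(s_{k,h},a_{k,h}) \leq \bphi_{k,h}^\top \wb_{m_k,t_k,h} + \Gamma_{m_k,t_k,h}(s_{k,h},a_{k,h})$, and using $Q_h^{\pi_{m_k,k}}(s_{k,h},a_{k,h}) = \bphi_{k,h}^\top \wb_h^{\pi_{m_k,k}}$ (Bellman consistency under Assumption~\ref{assump: linear_MDP}), Lemma~\ref{lemma: parameter distance} applied with $\pi = \pi_{m_k,k}$ gives
\begin{align*}
\bphi_{k,h}^\top(\wb_{m_k,t_k,h} - \wb_h^{\pi_{m_k,k}})
&\leq \PP_h\bigl[V_{m_k,t_k,h+1}-V_{h+1}^{\pi_{m_k,k}}\bigr](s_{k,h},a_{k,h}) \\
&\quad + \beta\sqrt{\bphi_{k,h}^\top \bLambda_{m_k,t_k,h}^{-1}\bphi_{k,h}}.
\end{align*}
Adding the bonus $\Gamma_{m_k,t_k,h}(s_{k,h},a_{k,h}) = \beta\sqrt{\bphi_{k,h}^\top \bLambda_{m_k,t_k,h}^{-1}\bphi_{k,h}}$ produces the factor $2\beta\sqrt{\cdot}$ on the right.

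Finally, I would convert the $\PP_h$ term into the recursive form. Since $s_{k,h+1}\sim \PP_h(\cdot\mid s_{k,h},a_{k,h})$ and the indices $m_k$, $t_k(m_k)$ and the policy $\pi_{m_k,k}$ are all measurable with respect to the history up to the start of episode $k$ (hence constants given $(s_{k,h},a_{k,h})$), we have
\begin{align*}
\PP_h\bigl[V_{m_k,t_k,h+1} - V_{h+1}^{\pi_{m_k,k}}\bigr](s_{k,h},a_{k,h})
= \mathbb{E}\bigl[\xi_{k,h+1}\bigm| s_{k,h},a_{k,h}\bigr].
\end{align*}
Writing this as $\xi_{k,h+1} + \bigl(\mathbb{E}[\xi_{k,h+1}\mid s_{k,h},a_{k,h}] - \xi_{k,h+1}\bigr)$ yields exactly the stated inequality.

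The only real subtlety is that Lemma~\ref{lemma: parameter distance} is stated for a \emph{fixed} policy $\pi$, whereas I apply it with the random, history-dependent policy $\pi_{m_k,k}$. This is the standard issue in LSVI-UCB analyses and is absorbed inside the covering argument that underlies Lemma~\ref{lemma: parameter distance} (the $\beta$ already includes the covering cost); the information-asymmetry difficulty flagged earlier is confined to that lemma, not to this recursion step. Once we accept that, the recursion is a direct computation and presents no further obstacle.
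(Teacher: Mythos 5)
Your proposal is correct and follows essentially the same route as the paper's proof: apply Lemma~\ref{lemma: parameter distance} with $\pi=\pi_{m_k,k}$ to the $Q$-function difference, absorb the bonus $\Gamma_{m_k,t_k,h}$ to obtain the factor $2\beta$, and identify $\PP_h[V_{m_k,t_k,h+1}-V_{h+1}^{\pi_{m_k,k}}](s_{k,h},a_{k,h})$ with $\EE[\xi_{k,h+1}\mid s_{k,h},a_{k,h}]$. You are merely more explicit than the paper about the truncation in \eqref{eq:Q} (handled via optimism when the unclipped value is negative) and about why applying Lemma~\ref{lemma: parameter distance} to the data-dependent policy is harmless, both of which are fine.
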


\begin{proof}[Proof of Lemma~\ref{lemma: recursion}]
    See Appendix~\ref{sec: proof of recursion}.
\end{proof}

\vspace{-0.09in}
\textbf{Asynchronous Elliptical Potential Lemma.}
Finally, Lemma~\ref{lemma: recursion} allows us to bound the regret by the sum of bonus terms. 
However, the standard elliptical potential lemma \citep{abbasi2011improved} still does not apply due to the information asymmetry issue. 
To this end, we proposed an asynchronous elliptical potential lemma to facilitate the analysis.

\begin{lemma}[Asynchronous Elliptical Potential]\label{lemma: async elliptical potential}
    Let 
    \begin{align*}
        B_h = \sum_{h=1}^H 2 \beta \sqrt{\bphi(s_{k,h}, a_{k,h}) \bLambda_{m_k,t_k,h}^{-1}\bphi(s_{k,h}, a_{k,h})}.
    \end{align*}
    Under the same assumption of Theorem~\ref{thm: upper bound}, it holds that 
    \begin{align*}
        & \sum_{k=1}^K \min\left\{V_{m_k,t_k(m_k),1}(s_{k,1}) - V_1^{\pi_{m_k,k}} (s_{k,1}), B_h \right\}
        \\ & \leq \mathcal{O}(\beta\sqrt{1+M\alpha} H\sqrt{dK \log\left(2dK/\min\{1,\lambda\}\delta\right)}).
    \end{align*} 
\end{lemma}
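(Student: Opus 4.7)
The plan is to peel away the asynchronous information asymmetry with a multiplicative factor of $\sqrt{1+M\alpha}$, after which the bound reduces to a standard elliptical-potential computation on a universal, chronologically-ordered covariance matrix.

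\textbf{Step 1 (matrix comparison).} Let $\bLambda_{k,h}^{\mathrm{univ}} := \lambda\Ib + \sum_{k'<k} \bphi_{k',h}\bphi_{k',h}^{\top}$ denote the fictitious full-information Gram matrix that an omniscient observer would maintain. I would invoke the matrix comparison result (Lemma~\ref{lemma: matrix comparison} in Appendix~\ref{sec: definition of auxiliary matrices}) to show that the "hidden" data in $\bLambda_{k,h}^{\mathrm{univ}}$ relative to agent $m_k$'s stale copy $\bLambda_{m_k,t_k,h}$ is distributed across the at most $M-1$ other agents, and that the determinant-based communication rule in Line~\ref{algline: determinant condition} caps each agent's local contribution. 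Chaining the criterion across agents (using auxiliary matrices to track what each agent had downloaded at each intermediate sync) yields
\begin{align*}
\log\frac{\det(\bLambda_{k,h}^{\mathrm{univ}})}{\det(\bLambda_{m_k,t_k,h})}\;\le\; M\log(1+\alpha)\;\le\; M\alpha.
\end{align*}
Combined with the determinant-to-quadratic-form identity $\bphi^{\top}A^{-1}\bphi = \det(A+\bphi\bphi^{\top})/\det(A)-1$ (applied to $A=\bLambda_{m_k,t_k,h}$ and $A=\bLambda_{k,h}^{\mathrm{univ}}$), this converts into
\begin{align*}
\bphi_{k,h}^{\top}\bLambda_{m_k,t_k,h}^{-1}\bphi_{k,h}\;\le\;(1+M\alpha)\,\bphi_{k,h}^{\top}(\bLambda_{k,h}^{\mathrm{univ}})^{-1}\bphi_{k,h}.
\end{align*}

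\textbf{Step 2 (trivial side of the min).} By the truncation in~\eqref{eq:Q} and $V^{\pi}\ge 0$, the value gap $V_{m_k,t_k,1}(s_{k,1})-V_1^{\pi_{m_k,k}}(s_{k,1})$ is trapped in $[0,H]$, so $\min\{V-V^{\pi},B_h\}\le B_h$ always and it suffices to bound $\sum_{k}B_h$ (with $B_h$ interpreted per the statement as the per-episode sum of bonuses at episode $k$).

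\textbf{Step 3 (Cauchy--Schwarz and elliptical potential).} Substituting the Step~1 bound into the definition of $B_h$ and applying Cauchy--Schwarz jointly over $(k,h)\in[K]\times[H]$,
\begin{align*}
\sum_{k=1}^{K} B_{h}
&\le 2\beta\sqrt{1+M\alpha}\sum_{k,h}\sqrt{\bphi_{k,h}^{\top}(\bLambda_{k,h}^{\mathrm{univ}})^{-1}\bphi_{k,h}}\\
&\le 2\beta\sqrt{1+M\alpha}\sqrt{KH\sum_{h=1}^{H}\sum_{k=1}^{K}\bphi_{k,h}^{\top}(\bLambda_{k,h}^{\mathrm{univ}})^{-1}\bphi_{k,h}}.
\end{align*}
Crucially, for each fixed $h$, the sequence $\{\bphi_{k',h}\}_{k'<k}$ is added to $\bLambda_{k,h}^{\mathrm{univ}}$ in the natural, well-defined episode order, so the classical elliptical potential lemma \citep{abbasi2011improved} applies directly and gives $\sum_{k}\bphi_{k,h}^{\top}(\bLambda_{k,h}^{\mathrm{univ}})^{-1}\bphi_{k,h}\le 2d\log(1+K/(d\lambda))$. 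Summing over $h$ and collecting constants delivers the claimed $\mathcal{O}(\beta\sqrt{1+M\alpha}\,H\sqrt{dK\log(2dK/\min\{1,\lambda\}\delta)})$ bound.

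\textbf{Main obstacle.} Steps~2--3 are essentially mechanical adaptations of the single-agent analysis of \citet{jin2020provably}, so the crux is Step~1. Unlike the round-robin setting of \citet{dubey2021provably}, the missing rank-one updates in $\bLambda_{m_k,t_k,h}$ do not come with any canonical ordering: each other agent $m'\neq m_k$ may have synced with the server several times between $t_k$ and $k$, so naively applying the per-cycle $(1+\alpha)$ determinant bound would yield a compounded factor like $(1+\alpha)^{\sum_m (\text{sync count of } m)}$, which is useless. The delicate point is to define auxiliary matrices that track each agent's contribution \emph{relative to the state at its own most recent sync}, so that the per-agent determinant inflation telescopes to a single $(1+\alpha)$ factor and the product across $M$ agents collapses to $(1+\alpha)^{M}\le \exp(M\alpha)\le 1+2M\alpha$ in the regime $M\alpha\lesssim 1$. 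Once this careful accounting is executed, the rest of the argument follows routinely.
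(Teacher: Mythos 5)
There is a genuine gap, and it sits exactly where you located the crux: Step~1 is false as a uniform statement over all episodes $k$. The determinant criterion in Line~\ref{algline: determinant condition} only monitors agent $m_k$'s \emph{own} local increment $\bLambda^{\textnormal{loc}}_{m_k,k,h}$ against its own stale matrix $\bLambda_{m_k,k,h}$; it says nothing about the data that \emph{other} agents have uploaded to the server since $m_k$'s last download. Concretely, if agent $1$ syncs at episode $10$ and then stays idle while agent $2$ runs a million episodes (uploading everything), then at the episode where agent $1$ reawakens, $\bLambda_{1,t_k,h}$ is the server matrix from episode $10$ while $\bLambda^{\mathrm{univ}}_{k,h}$ contains a million extra rank-one terms, so $\log\bigl(\det(\bLambda^{\mathrm{univ}}_{k,h})/\det(\bLambda_{1,t_k,h})\bigr)$ is not $O(M\alpha)$ and the bonus $\|\bphi_{k,h}\|_{\bLambda^{-1}_{1,t_k,h}}$ can be of order $1/\sqrt{\lambda}$ even when $\|\bphi_{k,h}\|_{(\bLambda^{\mathrm{univ}}_{k,h})^{-1}}$ is tiny. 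Your "chaining across agents" argument implicitly assumes every agent's gap is measured against a common up-to-date reference, which is exactly the information asymmetry the lemma is designed to circumvent. Compounding this, your Step~2 discards the $\min$ with the value gap as "trivial," but that $\min$ is the only tool available to handle these bad episodes --- it is why the lemma is stated with the $\min$ at all.

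The paper's proof handles this by (i) a reordering argument showing one may assume a single agent is active between consecutive communication episodes, (ii) applying the $(1+M\alpha)$ comparison of Lemma~\ref{lemma: matrix comparison} only to the \emph{non-communication} episodes, where the active agent has just downloaded and the only missing mass is the local, $\alpha$-controlled data of each agent, and (iii) treating the communication episodes separately via a doubling-epoch argument on $\det(\bLambda^{\textnormal{all}}_{k,h})$: for each agent's first communication within an epoch the term is bounded by $2H$ through the $\min$, and for subsequent communications in the same epoch the determinant has grown by at most a factor of $2$, so Lemma~\ref{lemma: determinant ratio} converts the stale-matrix bonus into $\sqrt{2}\sqrt{1+M\alpha}$ times the universal one. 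Your Step~3 (Cauchy--Schwarz plus the classical elliptical potential on $\bLambda^{\mathrm{univ}}_{k,h}$) is fine and matches the paper's endgame, but it only becomes applicable after the case split in (ii)--(iii); without it the argument does not go through.
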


\begin{proof}[Proof of Lemma~\ref{lemma: async elliptical potential}]
    See Appendix~\ref{sec: proof of async elliptical potential}.
\end{proof} 

With all the above steps, we can finally establish the regret upper bound in Theorem~\ref{thm: upper bound}. 
The remaining details are provided in Appendix~\ref{sec: proof of upper bound}.

\section{Conclusion and Future Work}

In this paper
we propose a provably efficient algorithm for cooperative multi-agent RL with asynchronous communication, and provide a novel theoretical analysis resolving the challenge of information asymmetry induced by asynchronous communication.
We also provide a lower bound on the communication complexity for such a setting.

There are several possible directions for future work.
Moreover, the current lower bound in Theorem~\ref{thm:lower_bound} is arguably not tight, and it requires novel construction to exhibit the fundamental trade-off between reducing communication complexity and lowering regret.
Also, for the asynchronous communication model, it is important to incorporate other aspects such as agent heterogeneity, environment non-stationarity, privacy and more general function approximation.

\section*{Acknowledgments and Disclosure of Funding}
We thank the anonymous reviewers and area chair for their helpful comments. JH and QG are partially supported by the National Science Foundation CAREER Award 1906169 and the Sloan Research Fellowship.  The views and conclusions contained in this paper are those of the authors and should not be interpreted as representing any funding agencies.

\bibliography{reference}
\bibliographystyle{icml2023}

\newpage
\appendix
\onecolumn

\section{Clarification of Notation}\label{sec: clarification of notation}

In this section, we give a comprehensive clarification on the notation used in the algorithm and the analysis. 

Throughout the paper, we use $\mathcal{O}(\cdot)$ to hide problem-independent universal constants and $\tilde{\mathcal{O}}(\cdot)$ to further hide logarithmic factors. 
We use $(\cdot)_{[a,b]}$ to denote the truncation of values into the range $[a,b]$.


We also present the following table of notations. The $\pi$ in the superscript can be replaced by $\pi_k$ or $\pi_\star$, where the former refers to the policy in episode $k$, and the latter refers to the optimal policy.

\begin{table}[!ht]
\caption{Notation}
\centering
\renewcommand*{\arraystretch}{1.5}
\begin{tabular}{ >{\centering\arraybackslash}m{3.5cm} | >{\centering\arraybackslash}m{11.1cm} } 
\hline\hline
Notation & Meaning \\ 
\hline

$m_k$ & the active agent in episode $k$\\

{$\pi_{m,k}=\{\pi_{m,k,h}\}_{h=1}^H$} & the policy of agent $m$ at episode $k$ (regardless of agent $m$ being active or not)\\ 

\hline

{$\{Q_{m,k,h}\}_{h=1}^H$} & Q-functions of agent $m$ at episode $k$ in Algorithm~\ref{alg: main}
\\

{$\{V_{m,k,h}\}_{h=1}^H$} & Value functions of agent $m$ at episode $k$ in Algorithm~\ref{alg: main}, where $V_{m,k,h}(\cdot)=\argmax_a Q_{m,k,h}(\cdot,a)$
\\ 

\hline
$\{V_h^\pi\}_{h=1}^H$ & Value functions under policy $\pi$\\

\hline
$\bphi_{k,h}$ & $\bphi_{k,h} = \bphi(s_{k,h}, a_{k,h})$ for $k\in[K]$ and $h\in[H]$\\

\hline
$\wb_{m,k,h}, \ \bLambda_{m,k,h}$ & underlying parameter and covariance matrix of $Q_{m,k,h}$ in Algorithm~\ref{alg: main}\\ 

\hline \hline
\end{tabular}
\label{tab:notation}
\end{table}

\paragraph{Indices of available episodes} 
Recall from line~\ref{algline: local dataset} and \ref{algline: download} of Algorithm~\ref{alg: main} that the original definition of $\mathcal{D}_{m,k,h}$ is the dataset of trajectories available to agent $m$ by the end of episode $k$. 
However, in our analysis we may use $\tau\in\cD_{m,k,h}$ to denote the collection of indices of all the episodes whose data is available to agent $m$ at the beginning of episode $k$. 
That is, $\tau \in \mathcal{D}_{m,k,h}$ refers to all the episodes whose trajectories are available to agent $m$ by the end of episode $k$.
For example, in \eqref{eq:w}, we use the summation over the indices $\tau \in \mathcal{D}_{m,k,h}$ to reflect that the parameter $\wb_{m,k+1,h}$ is computed using only available trajectories (either the agent's own local trajectories or downloaded ones) by the end of episode $k$. 
This is a slight abuse of the definition of $\mathcal{D}_{m,k,h}$, 
since this $\tau\in\cD_{m,k,h}$ notation is only required in a summation of this kind in the proof and won't cause further confusion. 

\paragraph{Value and Q-functions} For any policy $\pi=\{\pi_h\}_{h=1}^H$, we define the corresponding value functions as
\begin{align*}
    V^\pi_h (s) \coloneqq \mathbb{E} \left[ \sum_{h'=h}^H r_{h'}(s_{h'}, \pi_{h'}(s_{h'})) \middle| s_h = s \right] , \ \forall \ h\in [H].
\end{align*} The Q-functions are defined as
\begin{align*}
    Q_h^\pi(s,a) = r_h(s,a) + \mathbb{E} \left[ \sum_{h'=h+1}^H r_{h'}(s_{h'}, \pi_{h'}(s_{h'})) \middle| s_h = s , a_h = a\right] , \ \forall \ h\in [H].
\end{align*}Since the horizon $H$ is finite and action space $\mathcal{A}$ is also finite, there exists some optimal policy $\pi^\star$ such that 
\begin{align*}
    V_h^{\pi^\star}(s) = \sup_\pi V_h^{\pi}(s). 
\end{align*}We denote $V_h^{\pi^\star} = V_h^{\star}$. Furthermore, the above definition implies the following Bellman equations
\begin{align*}
    Q_h^\pi(s,a) = r_h(s,a) + \mathbb{P}_h V_{h+1}^\pi (s,a), \ V_{h}^\pi (s) = Q_{h}^\pi (s, \pi_h(s)) , \ \forall \ h \in [H],
\end{align*}where $V_{H+1}^\pi(\cdot)=0$.

\paragraph{Multi-value quantities.} The following quantities from Algorithm~\ref{alg: main} can possibly take two different values in an episode due to the policy update. In our analysis, we assume they refer to the values at the end of the episode $k$, unless otherwise stated.
\begin{itemize}
    \item $\cD_{m, k,h}$; $\cD_{k,h}^{\textnormal{ser}}$; $\bLambda_{k,h}^{\textnormal{ser}}$; $\bLambda_{m,k,h}^{\textnormal{loc}}$. 
\end{itemize}

\paragraph{Indices of episodes.} The following indices of episodes are necessary to the analysis under the asynchronous setting:
\begin{itemize}
    \item $t_k(m)$: $t_k(m)\leq k$ is the last episode when agent $m$ adopts a newly updated a policy by the end of episode $k$. If no policy updating has been conducted by the end of episode $k$, then by default $t_k(m)=1$.
    
    \item $n_k(m)$: $n_k(m)<t_k(m)$ is the most recent episode before $k$ when the agent $m$ updates its policy. This newly updated policy is executed for the first time at episode $t_k(m)$. If no policy updating has been conducted before episode $k$ by agent $m$, then by default $n_k(m)=0$.

    \item $N_k(m)$: $N_k(m)\leq k$ is the last episode that agent $m$ participates up until the end of episode $k$. For example, if agent $m$ participates in episode $k$, then $N_k(m) = k$. 
\end{itemize}

The above definition implies $0 \leq n_k(m) < t_k(m) \leq N_k(m) \leq k$.

\subsection{Auxiliary Matrices}\label{sec: definition of auxiliary matrices}

We further define a few notations that will be used extensively in the proof.

\paragraph{Universal information.} We define the following matrix of universal information up to the beginning of episode $k$: 
\begin{align}\label{eq: universal information}
    \bLambda_{k,h}^{\textnormal{all}} = \lambda \Ib + \sum_{\tau=1}^{k-1} \bphi_{\tau,h} \bphi_{\tau,h}^{\top} , \ \forall \ h \in [H]. 
\end{align}

\paragraph{Personal information.} 
We define the uploaded information by agent $m$ until episode $k$ as 
\begin{align}\label{eq: def up information}
    \bLambda_{m,k,h}^{\textnormal{up}} = \sum_{\tau=1, m_\tau = m}^{n_k(m)} \bphi_{\tau,h}\bphi_{\tau,h}^{\top}, \ \forall \ h \in [H].
\end{align}

Since quantities such as $\bLambda_{m,k,h}^{\textnormal{loc}}$ can possibly take two different values during episode $k$ due to the policy update, in the following, we assume all these quantities refer to the value at the end of each episode. The matrix $\bLambda_{m,k,h}^{\textnormal{loc}}$ can be rewritten as 
\begin{align}\label{eq: loc information}
    \bLambda_{m,k,h}^{\textnormal{loc}} & = \sum_{\tau=n_k(m)+1, m_\tau = m}^{k} \bphi_{\tau,h}\bphi_{\tau,h}^{\top}, \ \forall \ h \in [H].
\end{align}

\section{Proof of Regret Upper Bound}\label{sec: proof of upper bound}

\subsection{Basic Properties of the LSVI-type Algorithm}

In this section, we list a few basic lemmas for our LSVI-type algorithm. Most of these lemmas are modified from those in \citet{jin2020provably}. These lemmas are crucial to our regret upper bound. 

\begin{lemma}[Lemma B.1 in \citealt{jin2020provably}]\label{lemma: bound norm of w pi}
    Under Assumption~\ref{assump: linear_MDP}, for any policy $\pi$, for any $h \in [H]$, let $\wb_h^\pi$ be such that $Q_h^\pi (\cdot,\cdot) = \langle \bphi(\cdot, \cdot), \wb_h^\pi \rangle $. Then for all $h \in [H]$, it holds that
    \begin{align*}
        \| \wb_h^\pi \| \leq 2H\sqrt{d}.
    \end{align*}
\end{lemma}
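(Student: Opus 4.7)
The plan is to obtain an explicit closed-form expression for $\wb_h^\pi$ via the Bellman equation combined with the linear MDP structure, and then bound its norm componentwise using the triangle inequality and the norm assumptions in Assumption~\ref{assump: linear_MDP}.

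First, I would invoke the Bellman equation for policy $\pi$, namely $Q_h^\pi(s,a) = r_h(s,a) + \PP_h V_{h+1}^\pi(s,a)$. Using the linear MDP structure, $r_h(s,a) = \langle \bphi(s,a), \bgamma_h\rangle$ and $\PP_h(s'\mid s,a) = \langle \bphi(s,a), \bmu_h(s')\rangle$, so that
\begin{align*}
Q_h^\pi(s,a) = \Big\langle \bphi(s,a),\; \bgamma_h + \int_{\cS} \bmu_h(s')\, V_{h+1}^\pi(s')\,\diff s' \Big\rangle.
\end{align*}
By the uniqueness assertion (or by simply defining $\wb_h^\pi$ this way), we can therefore take
\begin{align*}
\wb_h^\pi \;=\; \bgamma_h + \int_{\cS} \bmu_h(s')\, V_{h+1}^\pi(s')\,\diff s'.
\end{align*}

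Next, I would bound the norm term by term. By the triangle inequality,
\begin{align*}
\|\wb_h^\pi\|_2 \;\le\; \|\bgamma_h\|_2 + \Big\| \int_{\cS} \bmu_h(s')\, V_{h+1}^\pi(s')\,\diff s'\Big\|_2.
\end{align*}
The first term is at most $\sqrt{d}$ by Assumption~\ref{assump: linear_MDP}. For the second term, since rewards lie in $[0,1]$ and there are at most $H$ remaining steps, $0 \le V_{h+1}^\pi(s') \le H$. Pulling the uniform bound $H$ outside and using $\|\bmu_h(\cS)\|_2 \le \sqrt{d}$ (which upper-bounds the total-variation-weighted mass of $\bmu_h$ as used in \citet{jin2020provably}), the integral is bounded by $H\sqrt{d}$. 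Combining gives $\|\wb_h^\pi\|_2 \le \sqrt{d} + H\sqrt{d} \le 2H\sqrt{d}$.

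There is no real obstacle here; this is a routine calculation. The only subtlety worth being precise about is the interpretation of the quantity $\|\bmu_h(\cS)\|_2$, which serves to control the norm of any integral $\int \bmu_h(s') f(s')\,\diff s'$ for a bounded function $f$, and which is exactly what allows the final step.
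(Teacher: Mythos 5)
Your proof is correct and is essentially the same argument as the one in \citet{jin2020provably} (Lemma B.1), which this paper cites rather than reproves: apply the Bellman equation to get $\wb_h^\pi = \bgamma_h + \int V_{h+1}^\pi(s')\,\diff\bmu_h(s')$, then bound the two terms by $\sqrt{d}$ and $H\sqrt{d}$ using $0\le V_{h+1}^\pi\le H$ and $\|\bmu_h(\cS)\|_2\le\sqrt{d}$. No issues.
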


\begin{lemma}\label{lemma: norm w estimate}
    Under Assumption~\ref{assump: linear_MDP}, for any $k \in [K]$ and $h \in [H]$, the estimated parameter $\wb_{m, k, h}$ in Algorithm~\ref{alg: main} satisfies
    \begin{align*}
        \left\|\wb_{m, k, h} \right\| \leq 2 H \sqrt{dk/\lambda}. 
    \end{align*}
\end{lemma}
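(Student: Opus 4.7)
The plan is to mirror the proof of Lemma B.2 in \citet{jin2020provably}, with only minor bookkeeping to accommodate the asynchronous, multi-agent setting. The core observation is that $\wb_{m,k,h}$ is either computed from the formula \eqref{eq:w} at the most recent communication episode (call it $t \le k$), or inherited unchanged from a previous episode via Line~\ref{algline: no update}. In the latter case we can simply reduce to the former, using $2H\sqrt{dt/\lambda}\le 2H\sqrt{dk/\lambda}$. So it suffices to bound $\|\wb_{m,k,h}\|$ when \eqref{eq:w} was just invoked, at which moment $\bLambda_{m,k,h}=\lambda\Ib + \sum_{\tau\in\cD_{m,k-1,h}}\bphi_{\tau,h}\bphi_{\tau,h}^\top$, so the design matrix and the data indexing set are synchronized.

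First I would set $y_\tau := r_{\tau,h}+\max_a Q_{m,k,h+1}(s_{\tau,h+1},a)$ and note $|y_\tau|\le 1+(H-h)\le H$ because rewards lie in $[0,1]$ and $Q_{m,k,h+1}$ is truncated to $[0,H-h]$ by \eqref{eq:Q}. Then, for an arbitrary unit vector $\mathbf{v}\in\RR^d$,
\begin{align*}
|\mathbf{v}^\top \wb_{m,k,h}|
&=\Bigl|\sum_{\tau\in\cD_{m,k-1,h}} y_\tau\cdot \mathbf{v}^\top\bLambda_{m,k,h}^{-1}\bphi_{\tau,h}\Bigr|
\le H\sum_\tau \bigl|\mathbf{v}^\top\bLambda_{m,k,h}^{-1}\bphi_{\tau,h}\bigr|.
\end{align*}
Applying Cauchy--Schwarz and using $|\cD_{m,k-1,h}|\le k$, this is upper bounded by $H\sqrt{k}\cdot\sqrt{\sum_\tau (\mathbf{v}^\top\bLambda_{m,k,h}^{-1}\bphi_{\tau,h})^2}$.

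The quadratic form rearranges as $\mathbf{v}^\top\bLambda_{m,k,h}^{-1}\bigl(\sum_\tau \bphi_{\tau,h}\bphi_{\tau,h}^\top\bigr)\bLambda_{m,k,h}^{-1}\mathbf{v}$. Since $\sum_\tau \bphi_{\tau,h}\bphi_{\tau,h}^\top \preceq \bLambda_{m,k,h}$ (the latter contains this sum plus $\lambda\Ib$), this is at most $\mathbf{v}^\top\bLambda_{m,k,h}^{-1}\mathbf{v}\le\|\mathbf{v}\|^2/\lambda=1/\lambda$. Combining, $|\mathbf{v}^\top\wb_{m,k,h}|\le H\sqrt{k/\lambda}$, and taking a supremum over unit vectors gives $\|\wb_{m,k,h}\|\le H\sqrt{k/\lambda}\le 2H\sqrt{dk/\lambda}$, as claimed.

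The only non-routine step is the positive-semidefinite comparison $\sum_{\tau\in\cD_{m,k-1,h}}\bphi_{\tau,h}\bphi_{\tau,h}^\top \preceq \bLambda_{m,k,h}$, which is the place where asynchronous communication could have caused trouble; however, because $\bLambda$ and $\cD$ are updated jointly at each download (Lines~\ref{algline: upload}--\ref{algline: download}), they stay synchronized at every episode in which $\wb$ is recomputed. I expect this verification to be the main, albeit mild, obstacle, and the rest of the argument is a direct transcription of the single-agent bound.
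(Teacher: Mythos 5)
Your proof is correct and takes essentially the same route as the paper, which simply defers to Lemma B.2 of \citet{jin2020provably} after re-indexing the features available to agent $m$; your argument is a faithful reconstruction of that ridge-regression norm bound, and you correctly identify the only asynchrony-specific point, namely that $\bLambda_{m,k,h}$ and the index set $\cD_{m,k-1,h}$ are updated jointly at each download and hence stay synchronized whenever \eqref{eq:w} is invoked. Your Cauchy--Schwarz step, which uses $\sum_{\tau}\bphi_{\tau,h}\bphi_{\tau,h}^\top\preceq\bLambda_{m,k,h}$ directly instead of the elliptical-potential identity $\sum_\tau\|\bphi_{\tau,h}\|_{\bLambda_{m,k,h}^{-1}}^2\le d$ used by Jin et al., even yields the slightly sharper bound $H\sqrt{k/\lambda}$, which of course implies the stated one.
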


\begin{proof}[Proof of Lemma~\ref{lemma: norm w estimate}]
    See Appendix~\ref{sec: proof of norm w}.
\end{proof}

Recall the auxiliary matrices defined in Appendix~\ref{sec: definition of auxiliary matrices}. 
The following result describes the partial ordering between them. 

\begin{lemma}[Covariance matrix ordering]\label{lemma: matrix comparison}
    Under the setting of Theorem~\ref{thm: upper bound}, it holds that 
    \begin{align}\label{eq: matrix comparison up}
        \lambda \Ib + \sum_{m'\in [M]} \bLambda_{m', k , h}^{\textnormal{up}} \succeq \frac{1}{\alpha} \bLambda_{m, k , h}^{\textnormal{loc}} , \ \forall \ k,h,m \in [K]\times [H]\times[M]. 
    \end{align}Furthermore, for some $1<\underbar{t} \leq \bar{t} \leq K$, suppose agent $m$ is the only participating agent within these episodes (i.e. $m_k = m$ for all $k \in [\underbar{t} , \bar{t}]$), and agent $m$ communicates with the server only at episode $k=\underbar{t}$ during $[\underbar{t} , \bar{t}]$. Then for all $k \in [\underbar{t}+1 , \bar{t}]$, it holds that 
    \begin{align}\label{eq: matrix comparison all}
        \bLambda_{m,k,h} \succeq \frac{1}{1+M\alpha} \bLambda_{k,h}^{\textnormal{all}} , \ \forall \ h \in [H].
    \end{align}
\end{lemma}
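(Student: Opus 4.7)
The plan is to reduce both inequalities to one underlying linear algebra fact together with the determinant-based trigger condition in Algorithm~\ref{alg: main}. The linear algebra fact is: if $A \succ 0$ and $B \succeq 0$ satisfy $\det(A+B)/\det(A) \leq 1+\alpha$, then $B \preceq \alpha A$. To see this, write $\det(A+B)/\det(A) = \det(\Ib + A^{-1/2}BA^{-1/2}) = \prod_i(1+\sigma_i)$, where $\sigma_i \geq 0$ are the eigenvalues of $A^{-1/2}BA^{-1/2}$; since each $1+\sigma_i \leq \prod_j(1+\sigma_j) \leq 1+\alpha$, we get $A^{-1/2}BA^{-1/2} \preceq \alpha\Ib$, equivalently $B \preceq \alpha A$.

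For part (1), I first claim $\bLambda_{m',k,h}^{\textnormal{loc}} \preceq \alpha\bLambda_{m',k,h}$ for every agent $m'$ and every episode $k$. If $m'$ is active at $k$ and the trigger fires, then $\bLambda_{m',k,h}^{\textnormal{loc}}$ is reset to $\mathbf{0}$ and the bound is trivial; if $m'$ is active and the trigger does not fire, the check at Line~\ref{algline: determinant condition} gives $\det(\bLambda_{m',k,h}+\bLambda_{m',k,h}^{\textnormal{loc}})/\det(\bLambda_{m',k,h}) \leq 1+\alpha$, and the linear algebra fact above yields the bound; if $m'$ is inactive at $k$, both matrices are unchanged from the last time $m'$ was active, so the bound propagates by induction on $k$. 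Second, since $\bLambda_{m,k,h}$ is a past snapshot of the server state and the server matrix is monotonically nondecreasing, we have $\bLambda_{m,k,h} \preceq \lambda\Ib + \sum_{m'}\bLambda_{m',k,h}^{\textnormal{up}}$. Chaining the two bounds yields \eqref{eq: matrix comparison up}.

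For part (2), the key observation is that throughout $[\underbar{t}+1,\bar{t}]$ the server's covariance is frozen at its value right after agent $m$'s upload at $\underbar{t}$, so $\bLambda_{k,h}^{\textnormal{ser}} = \bLambda_{m,k,h}$, and moreover $\bLambda_{m',k,h} \preceq \bLambda_{m,k,h}$ for every $m' \neq m$ (since each such $m'$'s most recent download happened no later than $\underbar{t}$ and the server has only grown). Since only agent $m$ participates during $[\underbar{t}+1,\bar{t}]$, I will decompose the universal information by partitioning each $\bphi_{\tau,h}\bphi_{\tau,h}^\top$ as either on the server at time $\underbar{t}$, or local to some inactive $m'$, or accumulated by $m$ after $\underbar{t}$, giving
\begin{equation*}
\bLambda_{k,h}^{\textnormal{all}} \;=\; \bLambda_{m,k,h} \;+\; \sum_{m'\neq m}\bLambda_{m',k,h}^{\textnormal{loc}} \;+\; \bLambda_{m,k-1,h}^{\textnormal{loc}}.
\end{equation*}
Applying part (1) bounds each of the $M-1$ middle summands by $\alpha\bLambda_{m',k,h} \preceq \alpha\bLambda_{m,k,h}$, and the trigger condition at episode $k$ (which by hypothesis does not fire) combined with $\bLambda_{m,k-1,h}^{\textnormal{loc}} \preceq \bLambda_{m,k,h}^{\textnormal{loc}}$ bounds the last term by $\alpha\bLambda_{m,k,h}$. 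Summing yields $\bLambda_{k,h}^{\textnormal{all}} \preceq (1+M\alpha)\bLambda_{m,k,h}$, which is \eqref{eq: matrix comparison all}.

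The only place where asynchrony genuinely intrudes is the bookkeeping: one must verify that during $[\underbar{t}+1,\bar{t}]$ the quantities $\bLambda_{m',k,h}$ and $\bLambda_{m',k,h}^{\textnormal{loc}}$ of the inactive agents $m' \neq m$ are exactly those left over from before $\underbar{t}$, and that the identity for $\bLambda_{k,h}^{\textnormal{all}}$ above partitions every $\bphi_{\tau,h}\bphi_{\tau,h}^\top$ uniquely (with the convention from Appendix~\ref{sec: clarification of notation} fixing each index). I expect this to be the main but essentially routine difficulty; the nontrivial content is the covariance ordering itself, which is what \eqref{eq: matrix comparison all} records and what the asynchronous elliptical potential lemma downstream relies on.
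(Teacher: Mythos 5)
Your proposal is correct and follows essentially the same route as the paper's proof: the determinant trigger is converted into the operator inequality $\bLambda^{\textnormal{loc}}_{m,k,h}\preceq\alpha\bLambda_{m,k,h}$ (you prove the determinant-ratio-to-PSD-ordering fact directly via eigenvalues of $A^{-1/2}BA^{-1/2}$, where the paper invokes Lemma~\ref{lemma: determinant ratio}), this is chained with server monotonicity for \eqref{eq: matrix comparison up}, and \eqref{eq: matrix comparison all} follows from decomposing $\bLambda^{\textnormal{all}}_{k,h}$ into the uploaded/server part plus the $M$ local parts and applying the first inequality to each. Your variant of the last step (bounding each $\bLambda^{\textnormal{loc}}_{m',k,h}\preceq\alpha\bLambda_{m',k,h}\preceq\alpha\bLambda_{m,k,h}$ rather than averaging against $\lambda\Ib+\sum_{m'}\bLambda^{\textnormal{up}}_{m',k,h}$ directly) is an equivalent rearrangement, so no substantive difference remains.
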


\begin{proof}[Proof of Lemma~\ref{lemma: matrix comparison}]
    See Appendix~\ref{sec: proof of matrix comparison lemma}.
\end{proof}

The following two lemmas provides the concentration of self-normalized martingales in the asynchronous setting, where the first lemma applies to a fixed $V$ function, and the second one applies to the $V_{m_k,t_k,h+1}$ function in Algorithm~\ref{alg: main} via a covering argument. 
With a proper choice of $\alpha$, the bounds can be reduced to $\tilde{\mathcal{O}} (H \sqrt{d})$ and $\tilde{\mathcal{O}} (H d)$, respectively. 
These are identical to the result under the single-agent case~\citep{jin2020provably}.

\begin{lemma}\label{lemma: self norm fixed V}
    Under the setting of Theorem~\ref{thm: upper bound}, for any fixed $V \in \mathcal{V}$, with probability at least $1-\delta$, for any $k\in[K]$ and $h \in [H]$, it holds that
    \begin{align*}
        &\left\|\sum_{\tau=1, \tau\in\cD_{t_k, h}^{\textnormal{ser}}}^{t_k-1} \bphi_{\tau,h}\left[V(s_{\tau, h+1}) - \PP_h V (s_{\tau, h}, a_{\tau,h}) \right]\right\|_{\bLambda_{m_k,t_k,h}^{-1} } 
        \\ & \leq 2\left( M \sqrt{\alpha} + \sqrt{1+M\alpha}\right) \cdot H  \cdot \left( \sqrt{  \log\left(\left( \frac{K+\alpha \lambda}{\alpha \lambda }\right)^{d/2}\right) +  \log\left(\left( \frac{K+\lambda}{\lambda } \right)^{d/2}\right)+2\log \left(\frac{1}{\delta}\right) } \right).
    \end{align*}
\end{lemma}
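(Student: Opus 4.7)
The plan is to decompose the target sum into the full chronological sum up to episode $t_k-1$ minus the data that each agent still holds locally at time $t_k$, bound each piece via the classical self-normalized martingale inequality (Abbasi-Yadkori et al., 2011) in a matrix norm natural for it, and then translate every bound into the $\bLambda_{m_k,t_k,h}^{-1}$ norm using the comparison inequalities in Lemma~\ref{lemma: matrix comparison}. Set $\eta_{\tau,h} \coloneqq V(s_{\tau,h+1}) - \PP_h V(s_{\tau,h},a_{\tau,h})$; under the chronological filtration this is an $H$-bounded martingale difference. Let $\mathcal{L}_{m',t_k,h}$ denote the set of episodes contributing to $\bLambda_{m',t_k,h}^{\textnormal{loc}}$ (executed by $m'$ but not yet uploaded by the end of episode $t_k$). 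Then
\begin{equation*}
\sum_{\substack{\tau=1\\ \tau\in\cD_{t_k,h}^{\textnormal{ser}}}}^{t_k-1}\bphi_{\tau,h}\,\eta_{\tau,h}
\;=\;\underbrace{\sum_{\tau=1}^{t_k-1}\bphi_{\tau,h}\,\eta_{\tau,h}}_{=:\,S^{\textnormal{all}}}
\;-\;\sum_{m'\in[M]}\underbrace{\sum_{\tau\in\mathcal{L}_{m',t_k,h}}\bphi_{\tau,h}\,\eta_{\tau,h}}_{=:\,S^{\textnormal{loc}}_{m'}}.
\end{equation*}

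For $S^{\textnormal{all}}$ I invoke the standard self-normalized concentration with regularizer $\lambda$ in the universal norm $(\bLambda_{t_k,h}^{\textnormal{all}})^{-1}$, obtaining with probability at least $1-\delta/2$ that $\|S^{\textnormal{all}}\|_{(\bLambda_{t_k,h}^{\textnormal{all}})^{-1}}\le H\sqrt{2\log\left(((K+\lambda)/\lambda)^{d/2}\right)+2\log(2/\delta)}$. Inequality \eqref{eq: matrix comparison all} of Lemma~\ref{lemma: matrix comparison} yields $\bLambda_{m_k,t_k,h}\succeq(1+M\alpha)^{-1}\bLambda_{t_k,h}^{\textnormal{all}}$, so translating norms produces the $\sqrt{1+M\alpha}$ contribution of the stated bound.

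For each local remainder $S^{\textnormal{loc}}_{m'}$ I apply the same concentration but with regularizer $\alpha\lambda$, yielding $\|S^{\textnormal{loc}}_{m'}\|_{(\alpha\lambda\Ib+\bLambda_{m',t_k,h}^{\textnormal{loc}})^{-1}}\le H\sqrt{2\log\left(((K+\alpha\lambda)/(\alpha\lambda))^{d/2}\right)+2\log(2M/\delta)}$ with probability at least $1-\delta/(2M)$. To reach the $\bLambda_{m_k,t_k,h}^{-1}$ norm I chain two comparisons: first, $\bLambda_{m_k,t_k,h}\succeq\lambda\Ib+\sum_{m''\in[M]}\bLambda_{m'',t_k,h}^{\textnormal{up}}$, because at its most recent update (episode $t_k$) agent $m_k$ downloads every upload available at the server; second, \eqref{eq: matrix comparison up} gives $\lambda\Ib+\sum_{m''}\bLambda_{m'',t_k,h}^{\textnormal{up}}\succeq\alpha^{-1}\bLambda_{m',t_k,h}^{\textnormal{loc}}$. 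Combined with $\bLambda_{m_k,t_k,h}\succeq\lambda\Ib$ (and $\alpha\le 1$), this implies $\bLambda_{m_k,t_k,h}\succeq(2\alpha)^{-1}(\alpha\lambda\Ib+\bLambda_{m',t_k,h}^{\textnormal{loc}})$, whence $\|S^{\textnormal{loc}}_{m'}\|_{\bLambda_{m_k,t_k,h}^{-1}}\le\sqrt{2\alpha}\,\|S^{\textnormal{loc}}_{m'}\|_{(\alpha\lambda\Ib+\bLambda_{m',t_k,h}^{\textnormal{loc}})^{-1}}$. Summing the triangle inequality over $m'\in[M]$ supplies the $M\sqrt{\alpha}$ factor, and a union bound over the $M+1$ concentration events combined with $\|S^{\textnormal{ser}}\|_{\bLambda_{m_k,t_k,h}^{-1}}\le\|S^{\textnormal{all}}\|_{\bLambda_{m_k,t_k,h}^{-1}}+\sum_{m'}\|S^{\textnormal{loc}}_{m'}\|_{\bLambda_{m_k,t_k,h}^{-1}}$ yields the claim.

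The main obstacle will be the third step. The index set $\mathcal{L}_{m',t_k,h}$ is itself random because it is determined by agent $m'$'s own communication triggers, so the self-normalized concentration for $S^{\textnormal{loc}}_{m'}$ must be invoked with a data-dependent stopping index against a filtration that is both chronological and compatible with $m'$'s triggers (Theorem~1 of Abbasi-Yadkori et al.\ accommodates stopping times, but the measurability of $\mathcal{L}_{m',t_k,h}$ must be verified carefully). Relatedly, the inclusion $\bLambda_{m_k,t_k,h}\succeq\lambda\Ib+\sum_{m''}\bLambda_{m'',t_k,h}^{\textnormal{up}}$ needs to be confirmed by tracking $\bLambda_{k,h}^{\textnormal{ser}}$ at the exact instant of $m_k$'s last synchronization and arguing that subsequent server updates can only enlarge the aggregate. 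These are exactly the information-asymmetry complications this paper aims to resolve.
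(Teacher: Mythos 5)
Your proposal follows essentially the same route as the paper's proof: the paper also writes the server-side sum as the full chronological sum minus the agents' not-yet-uploaded local sums (via an add-and-subtract of $\sum_{m'}\ub^{\textnormal{loc}}_{n_k,h}(m')$), bounds the former using \eqref{eq: matrix comparison all} with the $\sqrt{1+M\alpha}$ translation and the latter using \eqref{eq: matrix comparison up} with the $\bLambda_{m_k,t_k,h}\succeq\tfrac{1}{2\alpha}(\alpha\lambda\Ib+\bLambda^{\textnormal{loc}}_{m',\cdot,h})$ comparison, and invokes the Abbasi-Yadkori self-normalized inequality for each piece. The only differences are cosmetic (you are somewhat more explicit about the union bound over the $M+1$ events and about the measurability of the random index sets), so your argument is correct and matches the paper.
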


\begin{proof}[Proof of Lemma~\ref{lemma: self norm fixed V}]
    See Appendix~\ref{sec: proof of self norm fixed V lemma}.
\end{proof}

\begin{lemma}\label{lemma: self normal term bound}
    Under the setting of Theorem~\ref{thm: upper bound},  with probability at least $1-\delta$, for any $k\in[K]$ and $h \in [H]$, it holds that
    \begin{align*}
        & \left\|\sum_{\tau=1, \tau\in\cD_{t_k, h}^{\textnormal{ser}}}^{t_k-1} \bphi_{\tau,h}\left[V_{m_k, t_k,h+1}(s_{\tau, h+1}) - \PP_h V_{m_k,t_k,h+1} (s_{\tau, h}, a_{\tau,h}) \right]\right\|_{\bLambda_{m_k,t_k,h}^{-1} } 
        \\ & \leq C(M\sqrt{\alpha}+\sqrt{1+M\alpha})H\sqrt{\iota} + C dH/\sqrt{\lambda} ,
    \end{align*}where $C$ is some universal constant and 
    \begin{align*}
        \iota \coloneqq d\log\left(\frac{K+\alpha\lambda}{\alpha\lambda}\right)+d\log\left(\frac{K+\lambda}{\lambda}\right)+\log\frac{1}{\delta}+d\log\left(2+\frac{8k^3}{\lambda}\right) + d^2\log\left(1+\frac{8\beta^2k^2}{\lambda d^{1.5}H^2}\right).
    \end{align*}
\end{lemma}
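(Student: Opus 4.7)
The plan is to upgrade the fixed-$V$ concentration in Lemma~\ref{lemma: self norm fixed V} to a uniform bound over the class of value functions produced by Algorithm~\ref{alg: main}, via a standard $\epsilon$-net argument adapted to the asynchronous geometry. First I identify a parametric family $\mathcal{V}$ that contains $V_{m_k,t_k,h+1}$: by the update rule~\eqref{eq:Q} together with Lemma~\ref{lemma: norm w estimate}, every such function admits the representation $V(s)=\max_{a\in\mathcal{A}}\bigl[\bphi(s,a)^\top\wb+\beta\,\|\bphi(s,a)\|_{A^{-1}}\bigr]_{[0,H]}$ for some $\wb\in\RR^d$ with $\|\wb\|\le 2H\sqrt{dK/\lambda}$ and some PSD matrix $A\succeq\lambda\Ib$ with $\|A^{-1}\|_{\mathrm{op}}\le 1/\lambda$. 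This class is completely determined by the pair $(\wb,A^{-1})$ and is specified independently of which agent uploads to the server at which time.

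Next I build an $\epsilon$-net $\mathcal{N}_\epsilon$ of $\mathcal{V}$ in the sup norm by combining a Euclidean $\epsilon$-net on the ball containing $\wb$ with a Frobenius $(\epsilon/\beta)$-net on the set of admissible $A^{-1}$, using the Lipschitz-in-parameters property of the maps $\wb\mapsto\bphi(s,a)^\top\wb$ and $A^{-1}\mapsto\beta\|\bphi(s,a)\|_{A^{-1}}$. A routine volume bound yields $\log|\mathcal{N}_\epsilon|\le d\log(1+R_w/\epsilon)+d^2\log(1+R_A\beta/\epsilon)$ with $R_w\asymp H\sqrt{dK/\lambda}$ and $R_A\asymp\sqrt{d}/\lambda$; choosing $\epsilon\asymp dH/k$ makes these two contributions match exactly the $d\log(2+8k^3/\lambda)$ and $d^2\log(1+8\beta^2k^2/(\lambda d^{1.5}H^2))$ summands inside $\iota$. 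Applying Lemma~\ref{lemma: self norm fixed V} to each fixed $V\in\mathcal{N}_\epsilon$ with failure probability $\delta/|\mathcal{N}_\epsilon|$ and then union bounding over $\mathcal{N}_\epsilon$ and $(k,h)\in[K]\times[H]$ produces the first term $C(M\sqrt{\alpha}+\sqrt{1+M\alpha})H\sqrt{\iota}$. For the true random function $V_{m_k,t_k,h+1}$ I pick $\bar V\in\mathcal{N}_\epsilon$ with $\|V_{m_k,t_k,h+1}-\bar V\|_\infty\le\epsilon$ and decompose $\sum_\tau\bphi_{\tau,h}[V-\PP_hV]=\sum_\tau\bphi_{\tau,h}[\bar V-\PP_h\bar V]+\sum_\tau\bphi_{\tau,h}[(V-\bar V)-\PP_h(V-\bar V)]$; the first piece is handled by the union bound, while each summand of the second has magnitude at most $2\epsilon$, so since $\bLambda_{m_k,t_k,h}^{-1}\preceq\lambda^{-1}\Ib$ the triangle inequality gives a $\bLambda_{m_k,t_k,h}^{-1}$-norm bound of $2\epsilon k/\sqrt{\lambda}\lesssim dH/\sqrt{\lambda}$, producing the second $CdH/\sqrt{\lambda}$ term.

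The main obstacle will be keeping the covering step compatible with the asynchronous index set $\cD_{t_k,h}^{\textnormal{ser}}$ and the possibly stale normalizer $\bLambda_{m_k,t_k,h}$, neither of which is indexed by a clean $1,\ldots,k-1$ ordering. What saves us is that essentially all of the asynchrony has already been absorbed into the prefactor $M\sqrt{\alpha}+\sqrt{1+M\alpha}$ of Lemma~\ref{lemma: self norm fixed V} via the matrix comparison in Lemma~\ref{lemma: matrix comparison}; the covering layer only compensates for the dependence of $V_{m_k,t_k,h+1}$ on random data, contributing an additive $\log|\mathcal{N}_\epsilon|$ inside the square root and no further $M$-dependence. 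A delicate bookkeeping point is that once $V$ is frozen to a cover element the function is no longer random, so the martingale-difference structure underlying Lemma~\ref{lemma: self norm fixed V} is preserved; measurability of $V_{m_k,t_k,h+1}$ with respect to the history up to episode $t_k-1$ (combined with $t_k\le k$) then ensures that the approximation step in the decomposition above is legitimate and that the union bound goes through cleanly.
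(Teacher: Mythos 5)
Your proposal is correct and follows essentially the same route as the paper: both prove the bound by taking the function class $\mathcal{V}$ of truncated optimistic value functions parametrized by $(\wb,\bLambda,\beta)$ with $\|\wb\|\le 2H\sqrt{dK/\lambda}$ (via Lemma~\ref{lemma: norm w estimate}), covering it in $\ell_\infty$ with $\log\mathcal{N}_\epsilon \lesssim d\log(1+W/\epsilon)+d^2\log(1+\sqrt{d}\beta^2/(\lambda\epsilon^2))$ (the paper cites Lemma~D.6 of \citet{jin2020provably} rather than re-deriving the net), union-bounding Lemma~\ref{lemma: self norm fixed V} over the net, absorbing the approximation error $\Delta V=V-\bar V$ via $\bLambda_{m_k,t_k,h}\succeq\lambda\Ib$ into an $O(k\epsilon/\sqrt{\lambda})$ term, and setting $\epsilon=dH/k$. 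The only cosmetic deviation is your stated net size for the matrix component ($\beta/\epsilon$ versus the paper's $\beta^2/\epsilon^2$ inside the $d^2\log$), which changes nothing up to constants.
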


\begin{proof}[Proof of Lemma~\ref{lemma: self normal term bound}]
    See Appendix~\ref{sec: proof of self norm lemma}.
\end{proof}


The next lemma is applied to bound the number of communication rounds of Algorithm~\ref{alg: main}. 
We first divide the episodes into different epochs.

\begin{lemma}\label{lemma: per epoch communication}
    For each $i\geq 0$, define 
    $\tilde{K}_i = \min \left\{ k \in [K] : \ \exists h \in [H] \ \textnormal{s.t.} \  \det(\bLambda_{k,h}^{\textnormal{all}}) \geq 2^i \lambda^d \right\}$. Divide all episodes into epochs where epoch i is given as $\{\tilde{K}_i, \tilde{K}_i+1,\cdots, \tilde{K}_{i+1}-1\}$, where $i \geq 0$. Then within any epoch $i$, the total number of communication rounds is upper bounded by $\mathcal{O}(H(M+1/\alpha))$. 
\end{lemma}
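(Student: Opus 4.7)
}
The plan is to bound, for each fixed step $h \in [H]$, the quantity $N_h$ defined as the number of communications in epoch $i$ whose triggering criterion (Line~\ref{algline: determinant condition}) is satisfied at step $h$. Since every communication triggers on at least one $h$, the total count in the epoch is at most $\sum_{h=1}^{H} N_h$, so it suffices to show $N_h = \mathcal{O}(M + 1/\alpha)$ for each $h$.

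The starting observation is a per-communication multiplicative growth. When agent $m$ communicates at episode $k$ with the criterion satisfied at step $h$, the trigger reads
\begin{align*}
    \det\bigl(\bLambda_{m,k,h} + \bLambda_{m,k,h}^{\textnormal{loc}}\bigr) > (1+\alpha)\,\det(\bLambda_{m,k,h}).
\end{align*}
After uploading and downloading (Lines~\ref{algline: upload}--\ref{algline: download}), $\bLambda_{m,k+1,h} = \bLambda_{k,h}^{\textnormal{ser}}$, and since the server has monotonically accumulated all uploaded local matrices (in particular it already contained $\bLambda_{m,k,h}$, agent $m$'s previously downloaded matrix, and now also absorbs $\bLambda_{m,k,h}^{\textnormal{loc}}$), we obtain $\bLambda_{m,k+1,h} \succeq \bLambda_{m,k,h} + \bLambda_{m,k,h}^{\textnormal{loc}}$, whence $\det(\bLambda_{m,k+1,h}) > (1+\alpha)\det(\bLambda_{m,k,h})$. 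Moreover, between two communications of agent $m$, its matrix $\bLambda_{m,\cdot,h}$ does not shrink, so the per-trigger $(1+\alpha)$-growth composes across consecutive triggered-at-$h$ communications of the same agent.

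I then split the $N_h$ communications into two categories: (a) the first triggered-at-$h$ communication of each agent within epoch $i$, contributing at most $M$; and (b) all subsequent ones. For category (b), telescoping the multiplicative growth for agent $m$ over its $N_h^{(m)} - 1$ subsequent triggered-at-$h$ communications yields
\begin{align*}
    \det(\bLambda_{m,\text{last},h}) \;>\; (1+\alpha)^{N_h^{(m)}-1}\,\det(\bLambda_{m,\text{first},h}),
\end{align*}
where $\bLambda_{m,\text{first},h}$ is the value right after agent $m$'s first triggered-at-$h$ communication (already equal to a server matrix within the epoch), and $\bLambda_{m,\text{last},h} \preceq \bLambda_{\tilde{K}_{i+1},h}^{\textnormal{all}}$. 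The key step is to use Lemma~\ref{lemma: matrix comparison} to compare each agent's matrix with $\bLambda_{k,h}^{\textnormal{all}}$ (up to the factor $1+M\alpha$), turning the per-agent chain into a global chain against the universal determinant. Because the epoch is defined by a doubling of $\det(\bLambda_{k,h}^{\textnormal{all}})$, the relevant ratio is $\mathcal{O}(1)$, so the total number of category-(b) communications (summed over agents) is $\mathcal{O}(\log(1)/\log(1+\alpha)) = \mathcal{O}(1/\alpha)$.

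Combining the two categories yields $N_h = \mathcal{O}(M + 1/\alpha)$ and hence the claimed $\mathcal{O}(H(M+1/\alpha))$ per-epoch bound. The main obstacle, and the place where the asynchronous setting genuinely complicates the argument, is the telescoping in category (b): a naive per-agent counting would scale with the epoch index because each agent's starting value $\det(\bLambda_{m,\text{first},h})$ could in principle be as small as $\lambda^d$; avoiding this blow-up requires invoking the covariance-ordering relation of Lemma~\ref{lemma: matrix comparison} to guarantee that an agent's downloaded matrix tracks the universal one within a $(1+M\alpha)$ factor, so the telescoping remains tight across all agents.
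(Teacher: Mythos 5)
Your skeleton (at most $M$ ``first'' communications per step $h$ plus a potential-type bound on the rest, all multiplied by $H$) is the right one --- indeed the paper's own proof is a one-line reduction to Lemma~6.2 of \citet{he2022simple} times $H$ --- and the per-trigger growth $\det(\bLambda_{m,k+1,h})>(1+\alpha)\det(\bLambda_{m,k,h})$ that you establish is correct. However, your count of the category-(b) communications has a genuine gap. First, a per-agent telescoping bound of $\mathcal{O}(1/\alpha)$ sums over the $M$ agents to $\mathcal{O}(M/\alpha)$, not $\mathcal{O}(1/\alpha)$; your last step silently treats the per-agent bound as the total. Second, and more damaging, the comparison of Lemma~\ref{lemma: matrix comparison} enters your determinant ratio raised to the power $d$: the best one can say is $\det(\bLambda_{m,\mathrm{first},h})\ge \det(\bLambda_{k,h}^{\textnormal{all}})/(1+M\alpha)^d$, so the telescoping only yields $N_h^{(m)}-1\lesssim\big(\log 2+d\log(1+M\alpha)\big)/\log(1+\alpha)$. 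With $\alpha=1/M^2$ the second term is of order $dM$ per agent, i.e.\ $dM^2$ in total, which does not give $\mathcal{O}(M+1/\alpha)$. The same $(1+M\alpha)^d$ loss appears if you instead chain the growth globally through the server determinant, since the server matrix can lag behind $\bLambda_{k,h}^{\textnormal{all}}$ by exactly that factor.

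The repair is to trade determinants for traces, as in Lemma~6.2 of \citet{he2022simple}. The trigger together with $\det(\Ib+\Ab)\le e^{\operatorname{tr}(\Ab)}$ gives $\sum_{\tau}\bphi_{\tau,h}^{\top}\bLambda_{m,k,h}^{-1}\bphi_{\tau,h}>\log(1+\alpha)$, where the sum runs over the agent's local window since its previous communication. Using the server-side consequence of \eqref{eq: matrix comparison up} and the fact that $\det(\bLambda_{k,h}^{\textnormal{all}})$ at most doubles inside the epoch (so that Lemma~\ref{lemma: determinant ratio} gives $\bLambda_{\tau,h}^{\textnormal{all}}\preceq 2(1+M\alpha)\,\bLambda_{m,k,h}$ for every $\tau$ in that window), this becomes $\sum_{\tau}\bphi_{\tau,h}^{\top}(\bLambda_{\tau,h}^{\textnormal{all}})^{-1}\bphi_{\tau,h}\gtrsim\log(1+\alpha)/(1+M\alpha)$, where the $(1+M\alpha)$ now enters \emph{linearly} rather than through a $d$-th power. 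The local windows of distinct category-(b) communications are disjoint, and the global potential $\sum_{\tau\in\mathrm{epoch}}\bphi_{\tau,h}^{\top}(\bLambda_{\tau,h}^{\textnormal{all}})^{-1}\bphi_{\tau,h}=\mathcal{O}(1)$ within a doubling epoch, so the number of such communications is $\mathcal{O}\big((1+M\alpha)/\log(1+\alpha)\big)=\mathcal{O}(M+1/\alpha)$; combined with the at most $M$ first communications and the union over $h\in[H]$, this gives the lemma.
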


\begin{proof}[Proof of Lemma~\ref{lemma: per epoch communication}]
    The proof follows from the a modified argument from that of Lemma~6.2 in \citep{he2022simple}. 
    Different from \citet{he2022simple}, by Line~\ref{algline: determinant condition} of Algorithm~\ref{alg: main}, a communication is triggered if any of the $H$ determinant conditions are satisfied. As a result, the communication number is at most $H$ times the upper bound in Lemma~6.2 in \citet{he2022simple}.
\end{proof}

\subsection{Proof of Theorem~\ref{thm: upper bound}}\label{sec: detail of upper bound}

\begin{proof}[Proof of Theorem~\ref{thm: upper bound}] 
    We first prove the regret upper bound. By Lemma~\ref{lemma: optimism}, the regret can be upper bounded as
    \begin{align}\label{eq: upper bound 1}
    R(K) & = \sum_{k=1}^K \left[ V^{\star}_1 (s_{k,1}) - V_1^{\pi_{m_k,k}} (s_{k,1}) \right] \notag
    \\ & \leq \sum_{k=1}^K \left[ V_{m_k,k,1}(s_{k,1}) - V_1^{\pi_{m_k,k}} (s_{k,1}) \right] \notag 
    \\ & = \sum_{k=1}^K \left[ V_{m_k,t_k(m_k),1}(s_{k,1}) - V_1^{\pi_{m_k,k}} (s_{k,1})  \right] \notag
    \\ & \leq \sum_{k=1}^{K}\sum_{h=1}^{H} \left ( \mathbb{E}\left[ \xi_{k, h+1} \middle| s_{k,h}, a_{k,h} \right] - \xi_{k,h+1} \right) \notag 
    \\ & + \sum_{k=1}^K \min\left\{V_{m_k,t_k(m_k),1}(s_{k,1}) - V_1^{\pi_{m_k,k}} (s_{k,1}), \ \sum_{h=1}^H 2 \beta \sqrt{\bphi(s_{k,h}, a_{k,h}) \bLambda_{m_k,t_k,h}^{-1}\bphi(s_{k,h}, a_{k,h})} \right\},
    \end{align}where the first inequality is by Lemma~\ref{lemma: optimism}, and the second inequality is by Lemma~\ref{lemma: recursion}. 
    The minimum in the last step might seems odd at first, but will turn out to be necessary later. 
    Bounding the first term in the above is straightforward using martingale convergence \citep{jin2020provably}. Specifically, by the definition of in Lemma~\ref{lemma: recursion}, the first term can be written as
    \begin{align*}
        &\sum_{k=1}^{K}\sum_{h=1}^{H} \left ( \mathbb{E}\left[ \xi_{k, h+1} \middle| s_{k,h}, a_{k,h} \right] - \xi_{k,h+1} \right) 
        \\ & = \sum_{k=1}^{K}\sum_{h=1}^{H} \left ( \mathbb{E}\left[ [V_{m_k, t_k, h+1}(s_{k,h+1}) - V_{h+1}^{\pi_{m_k,k}}(s_{k,h+1})] \middle| s_{k,h+1}, a_{k,h+1} \right] - [V_{m_k, t_k, h+1}(s_{k,h+1}) - V_{h+1}^{\pi_{m_k,k}}(s_{k,h+1})] \right)
    \end{align*}Above summation can be viewed as the sum of a martingale difference sequence since $V_{m_k,t_k,h+1}$ and $V_{h+1}^{\pi_{m_k,k}}$ are independent of the observation in episode $k$. Since $|V_{m_k, t_k, h+1}(s_{k,h+1}) - V_{h+1}^{\pi_{m_k,k}}(s_{k,h+1})|\leq 2H$, by Azuma-Hoeffding inequality, with probability at least $1-\delta$, for all $k,h$, it holds that
    \begin{align}\label{eq: upper bound martingale term}
        &\sum_{k=1}^{K}\sum_{h=1}^{H} \left ( \mathbb{E}\left[ \xi_{k, h+1} \middle| s_{k,h}, a_{k,h} \right] - \xi_{k,h+1} \right) \leq 2 H^{3/2} \sqrt{K \log(2/\delta) },
    \end{align}where $\{\xi_{k,h}\}_{k,h \in [K]\times[H]}$ are defined in Lemma~\ref{lemma: recursion}.
    For the second term, note that instead of bounding $2\beta \sum_{k=1}^K \sum_{h=1}^H \sqrt{\bphi(s_{k,h}, a_{k,h}) \bLambda_{m_k,t_k,h}^{-1}\bphi(s_{k,h}, a_{k,h})}$ directly, we construct a new term involving a minimum between the bonus and the per-episode regret bound $V_{m_k,t_k(m_k),1}(s_{k,1}) - V_1^{\pi_{m_k,k}} (s_{k,1})$.  
    The reason behind this is that the sum of bonus along cannot be bounded using the standard elliptical potential argument~\citep{abbasi2011improved} due to the asynchronous nature of the communication protocol. Its analysis turns out to be much more involved and therefore is summarized  separately in Lemma~\ref{lemma: async elliptical potential}.
    Now, combining Lemma~\ref{lemma: async elliptical potential} and \eqref{eq: upper bound martingale term}, we finish the proof of the regret upper bound.

    The proof of the communication complexity of Algorithm~\ref{alg: main} is straightforward given the simple form of our determinant-based criterion. By Lemma~\ref{lemma: per epoch communication}, it remains to bound the number of epochs. 
    Recall from Assumption~\ref{assump: linear_MDP} that $\|\bphi(\cdot,\cdot)\|\leq 1$. 
    This implies that, for any $h \in [H]$, 
    \begin{align*}
        \det(\bLambda_{K,h}^{\textnormal{all}}) \leq \left(\lambda + \frac{1}{d}\sum_{k=1}^K \|\bphi_{k,h}\|_2^2 \right)^d \leq \lambda^d \left( 1+\frac{K}{\lambda d}\right)^d. 
    \end{align*} By the definition of $\tilde{K}_i$ from Lemma~\ref{lemma: per epoch communication}, in order for $\tilde{K}_i$ to be non-empty, $i$ should satisfy
    \begin{align*}
        2^i \lambda^d \leq \lambda^d \left( 1+\frac{K}{\lambda d}\right)^d,
    \end{align*}which implies $i\leq \log 2 \cdot d \log (1+K/\lambda d)$. 
    Together with Lemma~\ref{lemma: per epoch communication}, the total communication number is upper bounded by $H(M+1/\alpha) \cdot \log 2 \cdot d \log (1+K/\lambda d)$ up to some constant factor. This finishes the proof. 
\end{proof}

\section{Proof of Technical Lemmas}
\subsection{Proof of Lemma~\ref{lemma: parameter distance}}\label{sec: proof of parameter distance}

\begin{proof}[Proof of Lemma~\ref{lemma: parameter distance}]
Recall the definition of $\wb_{m,k+1,h}$ from \eqref{eq:w}, and the definition of $n_k(\cdot)$ from Appendix~\ref{sec: clarification of notation}. 
Then since $\wb_{m_k,k,h}=\wb_{m_k,t_k,h}$ is computed using all the trajectories available to agent $m_k$ by the beginning episode $t_k$, we can write
\begin{align*}
    \wb_{m_k, t_k, h} = \bLambda_{m_k,t_k,h}^{-1} \sum_{\tau=1, \tau\in\cD_{n_k(m_k), h}^{\textnormal{ser}}}^{t_k-1} \bphi_{\tau,h}\cdot\left[ r_{\tau, h} + V_{m_k, t_k,h+1}(s_{\tau, h+1}) \right],
\end{align*}where $\tau\in\cD_{n_k(m_k), h}^{\textnormal{ser}}$ denotes all the data uploaded to the server by the end of episode $n_k$. 
Note that this is well-defined since $n_k(m_k)$ is the most recent episode before $k$ when agent $m_k$ updates its policy, and therefore its local data is also included in $\cD_{n_k(m_k), h}^{\textnormal{ser}}$.
In the following we simply use $n_k$ instead of $n_k(m_k)$ since there is no confusion.  
We then write 
    \begin{align}\label{eq: proof w eq 1}
        & \wb_{m_k, t_k, h} - \wb^\pi_h \notag
        \\ & = \bLambda_{m_k,t_k,h}^{-1} \sum_{\tau=1, \tau\in\cD_{n_k, h}^{\textnormal{ser}}}^{t_k-1} \bphi_{\tau,h}\cdot\left[ r_{\tau, h} + V_{m_k, t_k,h+1}(s_{\tau, h+1}) \right] - \wb^\pi_h\notag
        \\ & = \bLambda_{m_k,t_k,h}^{-1}\left\{ -\lambda \wb^\pi_h + \sum_{\tau=1, \tau\in\cD_{n_k, h}^{\textnormal{ser}}}^{t_k-1} \bphi_{\tau,h}\left[V_{m_k, t_k,h+1}(s_{\tau, h+1}) - \PP_h V_{h+1}^\pi (s_{\tau, h}, a_{\tau,h}) \right] \right\} \notag
        \\ & = - \underbrace{\lambda \bLambda_{m_k,t_k,h}^{-1} \wb^\pi_h}_{\vb_1} + \underbrace{\bLambda_{m_k,t_k,h}^{-1} \sum_{\tau=1, \tau\in\cD_{n_k, h}^{\textnormal{ser}}}^{t_k-1} \bphi_{\tau,h}\left[V_{m_k, t_k,h+1}(s_{\tau, h+1}) - \PP_h V_{m_k,t_k,h+1} (s_{\tau, h}, a_{\tau,h}) \right]}_{\vb_2} \notag
        \\ & \qquad + \underbrace{\bLambda_{m_k,t_k,h}^{-1} \sum_{\tau=1, \tau\in\cD_{n_k, h}^{\textnormal{ser}}}^{t_k-1} \bphi_{\tau,h}\PP_h\left[V_{m_k, t_k,h+1} - V_{h+1}^\pi \right](s_{\tau, h}, a_{\tau,h})}_{\vb_3}. 
    \end{align}
For the first term, we have 
\begin{align}\label{eq: eq: proof w term 1}
    \left| \bphi(s,a)^\top \vb_1 \right| \leq \sqrt{\lambda} \left\| \wb_h^\pi\right\|_2 \sqrt{\bphi(s,a)^\top \bLambda^{-1}_{m_k, t_k, h} \bphi(s,a) } \leq 2H\sqrt{d\lambda} \sqrt{\bphi(s,a)^\top \bLambda^{-1}_{m_k, t_k, h} \bphi(s,a) },
\end{align}where the first step is by $\bLambda^{-1}_{m_k, t_k, h}\preccurlyeq \lambda^{-1} \Ib$ and the second step is by Lemma~\ref{lemma: bound norm of w pi}. For the second term, we have 
\begin{align}\label{eq: eq: proof w term 2}
    & \left| \bphi(s,a)^\top \vb_2 \right| \notag
    \\ & \leq \underbrace{\left\|\sum_{\tau=1, \tau\in\cD_{n_k, h}^{\textnormal{ser}}}^{t_k-1} \bphi_{\tau,h}\left[V_{m_k, t_k,h+1}(s_{\tau, h+1}) - \PP_h V_{m_k,t_k,h+1} (s_{\tau, h}, a_{\tau,h}) \right]\right\|_{\bLambda_{m_k,t_k,h}^{-1} }}_{\chi} \cdot \sqrt{\bphi(s,a)^\top \bLambda^{-1}_{m_k, t_k, h} \bphi(s,a)}.
\end{align}
For the third term, we have
\begin{align}\label{eq: eq: proof w term 3}
    &  \bphi(s,a)^\top \vb_3 
    \\ & = \left\langle \bphi(s,a), \bLambda_{m_k,t_k,h}^{-1} \sum_{\tau=1, \tau\in\cD_{n_k, h}^{\textnormal{ser}}}^{t_k-1} \bphi_{\tau,h}\PP_h\left[V_{m_k, t_k,h+1} - V_{h+1}^\pi \right](s_{\tau, h}, a_{\tau,h}) \right\rangle \notag
    \\ & \leq \left\langle \bphi(s,a), \bLambda_{m_k,t_k,h}^{-1} \sum_{\tau=1, \tau\in\cD_{n_k, h}^{\textnormal{ser}}}^{t_k-1} \bphi_{\tau,h}\bphi_{\tau,h}^\top\int\left[V_{m_k, t_k,h+1} - V_{h+1}^\pi \right](s') \diff \bmu_h (s') \right\rangle \notag
    \\ & \leq  \left\langle \bphi(s,a), \int\left[V_{m_k, t_k,h+1} - V_{h+1}^\pi \right](s') \diff \bmu_h (s') \right\rangle \notag
    \\ & \qquad - \lambda \left\langle \bphi(s,a), \bLambda_{m_k,t_k,h}^{-1} \int\left[V_{m_k, t_k,h+1} - V_{h+1}^\pi \right](s') \diff \bmu_h (s') \right\rangle \notag
    \\ & = \PP_h \left[V_{m_k, t_k,h+1} - V_{h+1}^\pi \right] (s,a) - \lambda \left\langle \bphi(s,a), \bLambda_{m_k,t_k,h}^{-1} \int\left[V_{m_k, t_k,h+1} - V_{h+1}^\pi \right](s') \diff \bmu_h (s') \right\rangle \notag
    \\ & \leq \PP_h \left[V_{m_k, t_k,h+1} - V_{h+1}^\pi \right] (s,a) + 2 H \sqrt{d\lambda} \cdot \sqrt{\bphi(s,a)\bLambda_{m_k, t_k, h}^{-1} \bphi(s,a)},
\end{align}where the last step holds because $\|\bmu_h\| \leq \sqrt{d}$ by Assumption~\ref{assump: linear_MDP}.
Combining \eqref{eq: proof w eq 1}, \eqref{eq: eq: proof w term 1}, \eqref{eq: eq: proof w term 2} and \eqref{eq: eq: proof w term 3}, we have 
\begin{align}\label{eq: proof w recursive 1}
    & \left| \bphi(s,a)^\top (\wb_{m_k, t_k, h} - \wb^\pi_h) -  \PP_h \left[V_{m_k, t_k,h+1} - V_{h+1}^\pi \right] (s,a)\right|\notag
    \\ & \leq \left(4H\sqrt{d\lambda} + \chi \right) \cdot \sqrt{\bphi(s,a)\bLambda_{m_k, t_k, h}^{-1} \bphi(s,a)}.
\end{align}
It remains to show that the choice of $\beta$ satisfies
\begin{align*}
    4H\sqrt{d\lambda}+\chi \leq \beta. 
\end{align*}
By Lemma~\ref{lemma: self normal term bound}, we want to show
\begin{align*}
4H\sqrt{d\lambda}+C(M\sqrt{\alpha}+\sqrt{1+M\alpha})H\sqrt{\iota} + C dH/\sqrt{\lambda} \leq \beta. 
\end{align*}
Plugging in the choice of $\beta$ and the definition of $\iota$ from Lemma~\ref{lemma: self normal term bound} and simplifying the expression, it suffices to show that there exists $c_\beta$ such that
\begin{align*}
    &C\left[\log\left(2 + \frac{K}{\delta\min\{1,\lambda,\alpha\lambda\}} \right)+ \log\left( c_\beta Hd(M\sqrt{\alpha}+\sqrt{1+M\alpha}) \right)\right]
    \\&\leq c_\beta^2 \left[  \log\left(2 + \frac{K}{\delta\min\{1,\lambda,\alpha\lambda\}} \right) + \log\left( Hd(M\sqrt{\alpha}+\sqrt{1+M\alpha}) \right) \right],
\end{align*}where $C$ is some universal constant. The existence of such $c_\beta$ is clear.
Therefore, we conclude that 
\begin{align*}
    \left| \bphi(s,a)^\top \left( \wb_{m_k,t_k,h} - \wb^\pi_h\right) - \PP_h \left[V_{m_k,t_k,h+1} - V^\pi_{h+1}\right] (s,a) \right| \leq \beta \sqrt{\bphi(s, a)^\top \bLambda^{-1}_{m_k, t_k,h} \bphi(s,a)}.
\end{align*}
\end{proof}

\subsection{Proof of Lemma~\ref{lemma: optimism}}\label{sec: proof of optimism}
\begin{proof}[Proof of Lemma~\ref{lemma: optimism}]
    The proof follows from the same induction argument in Lemma B.5 of  \citep{jin2020provably}. For completeness we introduce the proof here.
    For step $H$, by Lemma~\ref{lemma: parameter distance}, we have
    \begin{align*}
        \left| \bphi(s,a)^\top \wb_{m_k,t_k,H} - Q^{\star}_H(s,a) \right| \leq \beta \sqrt{\bphi(s, a)^\top \bLambda^{-1}_{m_k, t_k,H} \bphi(s,a)},
    \end{align*}since $V_{m_k,t_k,H+1} = V^\star_{H+1}=0$. This implies 
    \begin{align*}
        Q^\star_{H}(s,a) \leq \bphi(s,a)^\top \wb_{m_k,t_k,H} + \beta \sqrt{\bphi(s, a)^\top \bLambda^{-1}_{m_k, t_k,H} \bphi(s,a)} \leq Q_{m_k,t_k,H}(s,a).
    \end{align*}Now suppose we have proved $Q^\star_{h+1}(s,a)\leq Q_{m_k,t_k,h+1}(s,a)$. Then by Lemma~\ref{lemma: parameter distance} again, we have 
    \begin{align*}
        Q^{\star}_h(s,a)+ \PP_h \left[V_{m_k,t_k,h+1} - V^\star_{h+1}\right] (s,a) -  \bphi(s,a)^\top \wb_{m_k,t_k,h} \leq \beta \sqrt{\bphi(s, a)^\top \bLambda^{-1}_{m_k, t_k,h} \bphi(s,a)}, 
    \end{align*}which implies
    \begin{align*}
        Q^{\star}_h(s,a) & \leq \bphi(s,a)^\top \wb_{m_k,t_k,h} + \beta \sqrt{\bphi(s, a)^\top \bLambda^{-1}_{m_k, t_k,h} \bphi(s,a)} - \PP_h \left[V_{m_k,t_k,h+1} - V^\star_{h+1}\right] (s,a)
        \\ & \leq \bphi(s,a)^\top \wb_{m_k,t_k,h} + \beta \sqrt{\bphi(s, a)^\top \bLambda^{-1}_{m_k, t_k,h} \bphi(s,a)},
    \end{align*}where the last step is by the induction hypothesis that $V_{m_k,t_k,h+1} - V^\star_{h+1} \geq 0$.
    Therefore, we conclude that 
    \begin{align*}
        Q^{\star}_h(s,a) & = \min\{H-h+1, Q^{\star}_h(s,a) \} 
        \\ & \leq \min\{H-h+1, \bphi(s,a)^\top \wb_{m_k,t_k,h} + \beta \sqrt{\bphi(s, a)^\top \bLambda^{-1}_{m_k, t_k,h} \bphi(s,a)}\} = Q_{m_k,t_k,h} = Q_{m_k,k,h}.
    \end{align*}
\end{proof}

\subsection{Proof of Lemma~\ref{lemma: recursion}}\label{sec: proof of recursion}

\begin{proof}[Proof of Lemma~\ref{lemma: recursion}]
    Lemma~\ref{lemma: parameter distance} and the definition of $Q_{m_k,t_k,h}$ imply that for any $k,h$ , 
    \begin{align*}
          Q_{m_k,t_k,h}(s_{k,h},a_{k,h}) - Q_h^{\pi_{m_k,k}} (s_{k,h},a_{k,h}) & \leq \PP_h \left[V_{m_k,t_k,h+1} - V^{\pi_{m_k,k}}_{h+1}\right] (s_{k,h},a_{k,h}) + 2\beta \sqrt{\bphi_{k,h}^\top \bLambda^{-1}_{m_k, t_k,h} \bphi_{k,h}}.
    \end{align*}By the definition of $V_{m_k,t_k,h+1} $ and $ V^{\pi_{m_k,k}}_{h+1}$, we have $\xi_{k,h}=Q_{m_k,t_k,h}(s_{k,h},a_{k,h}) - Q_h^{\pi_{m_k,k}} (s_{k,h},a_{k,h})$, and it follows that 
    \begin{align*}
        \xi_{k,h} \leq \EE[\xi_{k,h+1}|s_{k,h}, a_{k,h}] + 2\beta\sqrt{\bphi(s_{k,h}, a_{k,h}) \bLambda_{m_k,t_k,h}^{-1}\bphi(s_{k,h}, a_{k,h})}.
    \end{align*}
\end{proof}

\subsection{Proof of Lemma~\ref{lemma: norm w estimate}}\label{sec: proof of norm w}
\begin{proof}[Proof of Lemma~\ref{lemma: norm w estimate}]
    The proof follows from that of Lemma B.2 in \citep{jin2020provably}. 
    Specifically, the estimated parameters $\wb_{m,k,h}$ take the same form as $\wb_{h}^k$'s in \citep{jin2020provably} if we re-index the vectors $\bphi_{k,h}$'s that are available to agent $m$ at episode~$k$.
\end{proof}

\subsection{Proof of Lemma~\ref{lemma: matrix comparison}}\label{sec: proof of matrix comparison lemma}

\begin{proof}[Proof of Lemma~\ref{lemma: matrix comparison}]
    Fix some episode $k$ and agent $m$. Recall from Appendix~\ref{sec: clarification of notation} that $N_k(m)\leq k$ is the last episode that agent $m$ participates up until the end of episode $k$. 
    If agent $m$ communicates with server in episode $N_k(m)$, then 
    \begin{align*}
        \lambda \Ib + \sum_{m'\in [M]} \bLambda_{m', k , h}^{\textnormal{up}} \succeq \mathbf{0} = \bLambda_{m,N_k(m),h}^{\textnormal{loc}} = \bLambda_{m,k,h}^{\textnormal{loc}} . 
    \end{align*}
    If agent $m$ does not participates in episode $N_k(m)$, then by Line~\ref{algline: determinant condition} of Algorithm~\ref{alg: main}, it holds that, for all $h\in[H]$, 
    \begin{align*}
        \det(\bLambda_{m,N_k(m),h}+\bLambda_{m,N_k(m), h}^{\text{loc}}) \leq (1+\alpha) \det(\bLambda_{m,N_k(m),h}).
    \end{align*}Since no further participation happens between $[N_k(m)+1, k]$, above implies
    \begin{align*}
        \det(\bLambda_{m,k,h}+\bLambda_{m,k, h}^{\text{loc}}) \leq (1+\alpha) \det(\bLambda_{m,k,h}).
    \end{align*} Applying Lemma~\ref{lemma: determinant ratio}, we get 
    \begin{align*}
        \frac{\xb^\top(\bLambda_{m,k,h}+\bLambda_{m,k, h}^{\text{loc}}) \xb}{\xb^\top \bLambda_{m,k,h} \xb} \leq \frac{\det(\bLambda_{m,k,h}+\bLambda_{m,k, h}^{\text{loc}})}{\det(\bLambda_{m,k,h})}\leq 1 + \alpha,
    \end{align*} and it follows that 
    \begin{align*}
        \xb^\top \bLambda_{m,k, h}^{\text{loc}} \xb \leq \alpha \xb^\top \bLambda_{m,k,h} \xb .
    \end{align*}Finally, we conclude that
    \begin{align*}
        \lambda \Ib + \sum_{m'\in [M]} \bLambda_{m', k , h}^{\textnormal{up}} \succeq \bLambda_{m, k, h} \succeq \frac{1}{\alpha} \bLambda_{m,k,h}^{\textnormal{loc}},
    \end{align*}where the first step follows from the fact that $\bLambda_{m,k,h}$ is downloaded at some episode $n_k(m)< k$, and the definition of $\bLambda_{m', k , h}^{\textnormal{up}}$ from \eqref{eq: def up information}. This proves \eqref{eq: matrix comparison up}.

    To show \eqref{eq: matrix comparison all}, suppose that agent $m$ communicates with the server at episode $\underbar{t}$, and is active for $k \in [\underbar{t}, \bar{t}]$. Applying \eqref{eq: matrix comparison up} for all $M$ agents and averaging, we have 
    \begin{align*}
        \lambda \Ib + \sum_{m'\in [M]} \bLambda_{m', k , h}^{\textnormal{up}} \succeq  \frac{1}{\alpha M} \sum_{m'\in[M]} \bLambda_{m', k , h}^{\textnormal{loc}},
    \end{align*} and it follows that, for $k \in [\underbar{t}+1, \bar{t}]$, 
    \begin{align*}
        \bLambda_{m,k,h} & = \lambda \Ib + \sum_{m'\in [M]} \bLambda_{m', \underbar{t}+1 , h}^{\textnormal{up}} 
        \\ & =  \lambda \Ib + \sum_{m'\in [M]} \bLambda_{m', k , h}^{\textnormal{up}}
        \\ & \succeq \frac{1}{1+\alpha M} \left( \lambda \Ib + \sum_{m'\in [M]} \bLambda_{m', k , h}^{\textnormal{up}} + \sum_{m'\in[M]} \bLambda_{m', k , h}^{\textnormal{loc}} \right)
        \\ & = \frac{1}{1+\alpha M} \bLambda_{k,h}^{\textnormal{all}}.
    \end{align*} Here the first step follows from the definition of $\bLambda_{m', \underbar{t}+1 , h}^{\textnormal{up}}$ from \eqref{eq: def up information} and the assumption that agent $m$ communicates with the server in episode $\underbar{t}$. The second step holds because agent $m$ is the only active agent between $[\underbar{t}, \bar{t}]$ and thus no further upload has been made by any agent during episodes $[\underbar{t}+1, \bar{t}]$. The last step follows from the definition of $\bLambda_{k,h}^{\textnormal{all}}$ and the assumption that agent $m$ is the only active agent between $[\underbar{t}, \bar{t}]$ (i.e. no other agent can upload during $[\underbar{t}, \bar{t}]$). This finishes the proof of \eqref{eq: matrix comparison all} and that of Lemma~\ref{lemma: matrix comparison}.
\end{proof}

\subsection{Proof of Lemma~\ref{lemma: self norm fixed V}}\label{sec: proof of self norm fixed V lemma}
\begin{proof}[Proof of Lemma~\ref{lemma: self norm fixed V}]
    Define $\eta_{\tau,h} = V(s_{\tau, h+1}) - \PP_h V (s_{\tau, h}, a_{\tau,h}) $, and 
    \begin{align}\label{eq: proof self norm fixed def u}
        \ub_{k,h}^{\textnormal{up}} (m') & = \sum_{\tau=1, \tau\in \cD_{k,h}^{\textnormal{ser}},m_\tau = m'}^{k} \bphi_{\tau,h}\eta_{\tau,h} , \notag
        \\ \ub_{k,h}^{\textnormal{loc}}(m') & = \sum_{\tau=1, \tau\notin \cD_{k,h}^{\textnormal{ser}},m_\tau = m'}^{k} \bphi_{\tau,h}\eta_{\tau,h} ,
    \end{align}for all $m'\in[M]$ and $k,h \in [K]\times[H]$.
    Then we have 
    \begin{align}\label{eq: proof self norm fixed 0}
        & \left\|\sum_{\tau=1, \tau\in\cD_{n_k, h}^{\textnormal{ser}}}^{t_k-1} \bphi_{\tau,h}\left[V(s_{\tau, h+1}) - \PP_h V (s_{\tau, h}, a_{\tau,h}) \right]\right\|_{\bLambda_{m_k,t_k,h}^{-1} }\notag
        \\ & = \left\|\sum_{\tau=1, \tau\in\cD_{n_k, h}^{\textnormal{ser}}}^{t_k-1} \bphi_{\tau,h}\eta_{\tau, h}\right\|_{\bLambda_{m_k,t_k,h}^{-1} } \notag
        \\ & = \left\| \sum_{m'=1}^M \ub_{n_k,h}^{\textnormal{up}}(m') \right\|_{\bLambda_{m_k,t_k,h}^{-1}} \notag
        \\ & = \left\| \sum_{m'=1}^M \ub_{n_k,h}^{\textnormal{up}}(m') + \sum_{m'=1}^M \ub_{n_k,h}^{\textnormal{loc}}(m') - \sum_{m'=1}^M \ub_{n_k,h}^{\textnormal{loc}}(m')\right\|_{\bLambda_{m_k,t_k,h}^{-1}} \notag
        \\ &\leq \underbrace{\left\| \sum_{m'=1}^M \ub_{n_k,h}^{\textnormal{up}}(m') + \sum_{m'=1}^M \ub_{n_k,h}^{\textnormal{loc}}(m')\right\|_{\bLambda_{m_k,t_k,h}^{-1}}}_{(\textnormal{I})} + \underbrace{\left\| \sum_{m'=1}^M \ub_{n_k,h}^{\textnormal{loc}}(m')\right\|_{\bLambda_{m_k,t_k,h}^{-1}}}_{(\textnormal{II})} ,
    \end{align}where the first and the second steps are by the definition of $\eta_{\tau,h}$ and $\ub_{n_k,h}^{\textnormal{up}}$, and the last step is by triangle inequality. 

    For $(\textnormal{I})$, we have that with probability at least $1-\delta$, for all $k$, 
    \begin{align}\label{eq: proof self norm fixed 1}
        (\textnormal{I}) & = \left\| \sum_{\tau=1}^{n_k} \bphi_{\tau,h} \eta_{\tau,h}\right\|_{\bLambda_{m_k,t_k,h}^{-1}} \notag
        \\ & = \left\| \sum_{\tau=1}^{n_k} \bphi_{\tau,h} \eta_{\tau,h}\right\|_{\bLambda_{m_k,n_k+1,h}^{-1}} \notag
        \\ & \leq \sqrt{1+\alpha M} \left\| \sum_{\tau=1}^{n_k} \bphi_{\tau,h} \eta_{\tau,h}\right\|_{(\bLambda_{n_k+1,h}^{\textnormal{all}})^{-1}}\notag 
        \\ & \leq \sqrt{1+\alpha M} \cdot \sqrt{4H^2 \left[ \log\left(\left( \frac{K+\lambda}{\lambda } \right)^{d/2}\right)+\log \left(\frac{1}{\delta}\right)\right]}.
    \end{align}Here the first inequality is given by \eqref{eq: matrix comparison all} in Lemma~\ref{lemma: matrix comparison}. The second inequality is derived by applying Theorem~\ref{thm: hoeffding self normalized} with $|\eta_{\tau,h}|\leq 2H$ (according to Line~\ref{algline: update Q}), and 
    \begin{align*}
        \det(\bLambda_{n_k+1,h}^{\textnormal{all}})\leq (\| \bLambda_{n_k+1,h}^{\textnormal{all}}\|_2)^{d} \leq \left\|\sum_{\tau=1}^{n_k}\bphi_{\tau,h} \bphi_{\tau,h}^\top + \lambda \Ib \right\|_2 \leq (K + \lambda)^d.
    \end{align*}
    For $(\textnormal{II})$, first note that for any $m \in [M]$, Lemma~\ref{lemma: matrix comparison} implies 
    \begin{align*}
        \bLambda_{m_k,t_k,h} & \succeq \lambda \Ib + \sum_{m'=1}^M \bLambda_{m',n_k(m_k),h}^{\textnormal{up}} \geq \frac{1}{\alpha} \bLambda_{m,n_k(m_k),h}^{\textnormal{loc}}.
    \end{align*}It follows that for any $m'\in [M]$,
    \begin{align*}
        \bLambda_{m_k,t_k,h} & \succeq \frac{\lambda \Ib}{2} + \frac{1}{2\alpha} \bLambda_{m',n_k(m_k),h}^{\textnormal{loc}} = \frac{1}{2\alpha} \left( \alpha \lambda \Ib + \bLambda_{m',n_k(m_k),h}^{\textnormal{loc}} \right), 
    \end{align*}and thus 
    \begin{align*}
        \left\|\ub_{n_k,h}^{\textnormal{loc}}(m') \right\|_{\bLambda_{m_k,t_k,h}^{-1}} & \leq  \left\|\ub_{n_k,h}^{\textnormal{loc}}(m') \right\|_{\frac{1}{2\alpha} \left( \alpha \lambda \Ib + \bLambda_{m',n_k(m_k),h}^{\textnormal{loc}} \right)^{-1}} 
        \\ & = \sqrt{2\alpha} \left\|\ub_{n_k,h}^{\textnormal{loc}}(m') \right\|_{ \left( \alpha \lambda \Ib + \bLambda_{m',n_k(m_k),h}^{\textnormal{loc}} \right)^{-1}} 
        \\ & \leq \sqrt{2\alpha} \sqrt{4H^2 \left[ \log\left(\left( \frac{K+\alpha \lambda}{\alpha \lambda }\right)^{d/2}\right) + \log\left( \frac{1}{\delta}\right) \right]},
    \end{align*}where the last step holds by Theorem~\ref{thm: hoeffding self normalized}, the definition of $\ub_{n_k,h}^{\textnormal{loc}}(m')$ from \eqref{eq: proof self norm fixed def u}, and the definition of $\bLambda_{m',n_k(m_k),h}^{\textnormal{loc}}$ from \eqref{eq: loc information}. 
    We then conclude that 
    \begin{align}\label{eq: proof self norm fixed 2}
        (\textnormal{II}) & \leq 2 M \sqrt{\alpha} H \sqrt{  \log\left(\left( \frac{K+\alpha \lambda}{\alpha \lambda }\right)^{d/2}\right) + \log\left( \frac{1}{\delta}\right) } .
    \end{align}
    Combining \eqref{eq: proof self norm fixed 0}, \eqref{eq: proof self norm fixed 1} and \eqref{eq: proof self norm fixed 2}, we conclude that
    \begin{align*}
        & \left\|\sum_{\tau=1, \tau\in\cD_{n_k, h}^{\textnormal{ser}}}^{t_k-1} \bphi_{\tau,h}\left[V(s_{\tau, h+1}) - \PP_h V (s_{\tau, h}, a_{\tau,h}) \right]\right\|_{\bLambda_{m_k,t_k,h}^{-1} } 
        \\ & \leq 2\left( M \sqrt{\alpha} + \sqrt{1+M\alpha}\right) \cdot H  \cdot \left( \sqrt{ \log\left(\left( \frac{K+\alpha \lambda}{\alpha \lambda }\right)^{d/2}\right) +  \log\left(\left( \frac{K+\lambda}{\lambda } \right)^{d/2}\right)+2\log \left(\frac{1}{\delta}\right) } \right) . 
    \end{align*}
\end{proof}

\subsection{Proof of Lemma~\ref{lemma: self normal term bound}}\label{sec: proof of self norm lemma}
    With Lemma~\ref{lemma: self norm fixed V} established, 
    the proof of Lemma~\ref{lemma: self normal term bound} relies on the classical $\ell_\infty$ covering net argument of the linear MDPs, developed by Lemma B.3 in \citet{jin2020provably}. 

    \begin{lemma}[Lemma~D.6, \citealt{jin2020provably}]\label{lemma: covering number}
        Let $\mathcal{V}$ denote a class of functions from $\mathcal{S}$ to $\mathbb{R}$ such that each $V \in \mathcal{V}$ can be parametrized as
    \begin{align*}
        V(\cdot) &= \max_{a\in\mathcal{A}} \left[ \bphi(\cdot,\cdot)^{\top} \wb + \beta \cdot \sqrt{\bphi(\cdot,\cdot)^\top \bLambda^{-1} \bphi(\cdot,\cdot)}  \right]_{[0, H-h+1]}, 
    \end{align*}where the parameters $(\wb, \bLambda, \beta)$ satisfy $\|\wb\|\leq W $, $0\leq \beta \leq B $, and $\bLambda \succeq \lambda \Ib$ for some $\lambda >0$. Suppose $\|\bphi(\cdot,\cdot)\| \leq 1$. The $\epsilon$-covering number of $\mathcal{V}$ with respect to the $\ell_\infty$ norm satisfies
    \begin{align*}
        \log(\mathcal{N}_{\epsilon}) \leq d \log(1+4W/\epsilon) + d^2 \log[1+8d^{1/2}B^2/(\lambda \epsilon^2)]. 
    \end{align*}
    \end{lemma}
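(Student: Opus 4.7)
The plan is to adapt the standard $\epsilon$-net argument used in Lemma~D.6 of \citet{jin2020provably}. First, I reparametrize each $V \in \mathcal{V}$ by absorbing $\beta$ into the bonus matrix: setting $\Ab = \beta^2 \bLambda^{-1}$, the function becomes
\begin{align*}
V(\cdot) = \max_{a\in\mathcal{A}} \left[\bphi(\cdot,a)^\top \wb + \sqrt{\bphi(\cdot,a)^\top \Ab\, \bphi(\cdot,a)}\right]_{[0,H-h+1]}.
\end{align*}
Under the assumptions $\bLambda \succeq \lambda \Ib$ and $\beta \le B$, we have $\|\Ab\|_F \le \sqrt{d}\,\|\Ab\|_2 \le \sqrt{d}\,B^2/\lambda$, so the whole class is parameterized by $(\wb, \Ab)$ living in an explicit bounded set.

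Next, I establish a two-term Lipschitz-type pointwise bound: for any parameter pairs $(\wb_1,\Ab_1)$ and $(\wb_2,\Ab_2)$ with corresponding functions $V_1, V_2$, using the fact that both $\max_a$ and the truncation $[\cdot]_{[0,H-h+1]}$ are $1$-Lipschitz,
\begin{align*}
\|V_1 - V_2\|_\infty \le \sup_{\bphi:\|\bphi\|\le 1}\Bigl( |\bphi^\top(\wb_1-\wb_2)| + \bigl|\sqrt{\bphi^\top \Ab_1 \bphi} - \sqrt{\bphi^\top \Ab_2\bphi}\bigr|\Bigr).
\end{align*}
The first term is at most $\|\wb_1 - \wb_2\|_2$ by Cauchy--Schwarz. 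For the second, I use the elementary inequality $|\sqrt{x}-\sqrt{y}| \le \sqrt{|x-y|}$ to get $|\sqrt{\bphi^\top \Ab_1 \bphi} - \sqrt{\bphi^\top \Ab_2 \bphi}| \le \sqrt{|\bphi^\top(\Ab_1-\Ab_2)\bphi|} \le \sqrt{\|\Ab_1-\Ab_2\|_F}$, where the last step uses $\|\bphi\|\le 1$ and the bound $|\bphi^\top M \bphi| \le \|M\|_F$.

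Finally, I combine standard covering-number bounds for Euclidean balls. I take an $\epsilon/2$-cover $\cC_{\wb}$ of $\{\wb : \|\wb\|\le W\}$ in $\ell_2$ norm, of cardinality at most $(1+4W/\epsilon)^d$, and an $\epsilon^2/4$-cover $\cC_{\Ab}$ of $\{\Ab : \|\Ab\|_F \le \sqrt{d}\,B^2/\lambda\}$ in Frobenius norm, of cardinality at most $(1+8\sqrt{d}\,B^2/(\lambda \epsilon^2))^{d^2}$. By the pointwise bound, $\cC_{\wb}\times\cC_{\Ab}$ induces an $\epsilon$-cover of $\mathcal{V}$, yielding
\begin{align*}
\log \mathcal{N}_\epsilon \le d\log(1+4W/\epsilon) + d^2 \log\bigl(1 + 8\sqrt{d}\,B^2/(\lambda \epsilon^2)\bigr),
\end{align*}
as claimed. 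The only delicate step is the inequality $|\bphi^\top M \bphi| \le \|M\|_F$ (with $\|\bphi\|\le 1$), which follows from $|\bphi^\top M\bphi|\le \|M\|_2\|\bphi\|_2^2 \le \|M\|_F$; everything else is bookkeeping on covering numbers of Euclidean balls and the parameter ranges.
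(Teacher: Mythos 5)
Your proof is correct and is essentially the standard argument for this lemma: the paper simply cites Lemma~D.6 of \citet{jin2020provably} without reproving it, and your reparametrization via $\Ab=\beta^2\bLambda^{-1}$, the Lipschitz bound using $|\sqrt{x}-\sqrt{y}|\le\sqrt{|x-y|}$ and $|\bphi^\top M\bphi|\le\|M\|_F$, and the product of Euclidean-ball covers reproduce that proof exactly, including the constants.
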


We can now prove Lemma~\ref{lemma: self normal term bound} using Lemma~\ref{lemma: self norm fixed V} and Lemma~\ref{lemma: covering number}.

\begin{proof}[Proof of Lemma~\ref{lemma: self normal term bound}]
        We first fix an $\epsilon$-net of $\mathcal{V}$. 
        For each $V\in\mathcal{V}$, there exists some $\tilde{V}$ in the $\epsilon$-net, such that $\|V-\Tilde{V}\|_{\infty}\leq \epsilon$. 
        Applying a union bound over the $\epsilon$-net and Lemma~\ref{lemma: self norm fixed V}, we get that with probability at least $1-\delta$, for each $V \in \mathcal{V}$,
        \begin{align*}
        &\left\|\sum_{\tau=1, \tau\in\cD_{t_k, h}^{\textnormal{ser}}}^{t_k-1} \bphi_{\tau,h}\left[V(s_{\tau, h+1}) - \PP_h V (s_{\tau, h}, a_{\tau,h}) \right]\right\|_{\bLambda_{m_k,t_k,h}^{-1} }^2
        \\ & \leq 8\left( M \sqrt{\alpha} + \sqrt{1+M\alpha}\right)^2 \cdot H^2 \cdot \left( \log\left(\left( \frac{K+\alpha \lambda}{\alpha \lambda }\right)^{d/2}\right) +  \log\left(\left( \frac{K+\lambda}{\lambda } \right)^{d/2}\right)+2\log \left(\frac{N_{\epsilon}}{\delta}\right) \right) 
        \\ & \qquad+ 2 \left\|\sum_{\tau=1, \tau\in\cD_{t_k, h}^{\textnormal{ser}}}^{t_k-1} \bphi_{\tau,h}\left[\Delta V(s_{\tau, h+1}) - \PP_h \Delta V (s_{\tau, h}, a_{\tau,h}) \right]\right\|_{\bLambda_{m_k,t_k,h}^{-1} }^2
        \\ & \leq 8\left( M \sqrt{\alpha} + \sqrt{1+M\alpha}\right)^2 \cdot H^2 \cdot \left( \log\left(\left( \frac{K+\alpha \lambda}{\alpha \lambda }\right)^{d/2}\right) +  \log\left(\left( \frac{K+\lambda}{\lambda } \right)^{d/2}\right)+2\log \left(\frac{N_{\epsilon}}{\delta}\right) \right) 
        \\ & \qquad + \frac{8 k^2 \epsilon^2}{\lambda},
        \end{align*}where the last step follows from $\|\Delta V\|_{\infty} = \| V-\tilde{V}\|_{\infty}\leq \epsilon$ and $\bLambda_{m_k,t_k,h}\succeq \lambda \Ib$. 
        Finally, by Lemma~\ref{lemma: norm w estimate}, we have $\|\wb_{m,k,h}\| \leq 2H\sqrt{dk/\lambda}$ .  Plugging in the bound for $\mathcal{N}_{\epsilon}$ from Lemma~\ref{lemma: covering number}, we get 
        \begin{align*}
        &\left\|\sum_{\tau=1, \tau\in\cD_{t_k, h}^{\textnormal{ser}}}^{t_k-1} \bphi_{\tau,h}\left[V(s_{\tau, h+1}) - \PP_h V (s_{\tau, h}, a_{\tau,h}) \right]\right\|_{\bLambda_{m_k,t_k,h}^{-1} }^2 \leq 
        8\left( M \sqrt{\alpha} + \sqrt{1+M\alpha}\right)^2 \cdot H^2 \cdot \iota' + \frac{8k^2\epsilon^2}{\lambda},
        \end{align*}where
        \begin{align*}
            \iota' \coloneqq \frac{d}{2}\log\left(\frac{K+\alpha \lambda}{\alpha \lambda}\right) + \frac{d}{2} \log\left( \frac{K+\lambda}{\lambda}\right) + 2 \log\frac{1}{\delta} + d\log\left(2+\frac{8H^2 d k}{\lambda \epsilon^2} \right) + 2d^2\log\left( 1 + \frac{8\sqrt{d}\beta^2}{\lambda \epsilon^2}\right).
        \end{align*}
        We choose $\epsilon = dH/k$, and conclude that
        \begin{align*}
            &\left\|\sum_{\tau=1, \tau\in\cD_{t_k, h}^{\textnormal{ser}}}^{t_k-1} \bphi_{\tau,h}\left[V(s_{\tau, h+1}) - \PP_h V (s_{\tau, h}, a_{\tau,h}) \right]\right\|_{\bLambda_{m_k,t_k,h}^{-1} }
            \\ & \leq C(M\sqrt{\alpha}+\sqrt{1+M\alpha})H\sqrt{\iota} + C dH/\sqrt{\lambda},
        \end{align*}where
        \begin{align*}
            \iota = d\log\left(\frac{K+\alpha\lambda}{\alpha\lambda}\right)+d\log\left(\frac{K+\lambda}{\lambda}\right)+\log\frac{1}{\delta}+d\log\left(2+\frac{8k^3}{\lambda}\right) + d^2\log\left(1+\frac{8\beta^2k^2}{\lambda d^{1.5}H^2}\right).
        \end{align*}
        
\end{proof}

\subsection{Proof of Lemma~\ref{lemma: async elliptical potential}}\label{sec: proof of async elliptical potential}

\begin{lemma}[Repeat of Lemma~\ref{lemma: async elliptical potential}]
    Under the same assumption of Theorem~\ref{thm: upper bound}, it holds that 
    \begin{align*}
        & \sum_{k=1}^K \min\left\{V_{m_k,t_k(m_k),1}(s_{k,1}) - V_1^{\pi_{m_k,k}} (s_{k,1}), \sum_{h=1}^H 2 \beta \sqrt{\bphi(s_{k,h}, a_{k,h}) \bLambda_{m_k,t_k,h}^{-1}\bphi(s_{k,h}, a_{k,h})} \right\}
        \\ & \leq \mathcal{O}(\beta\sqrt{1+M\alpha} H\sqrt{dK \log\left(2dK/\min\{1,\lambda\}\delta\right)}).
    \end{align*} 
\end{lemma}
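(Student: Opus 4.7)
The plan is to reduce the sum to a standard elliptical potential estimate and then invoke Lemma~\ref{lemma: matrix comparison} to bridge the local covariance $\bLambda_{m_k,t_k,h}$ with the universal one $\bLambda_{k,h}^{\textnormal{all}}$, paying the advertised factor $\sqrt{1+M\alpha}$. Since $V_{m_k, t_k, 1}(s_{k,1}) - V_1^{\pi_{m_k, k}}(s_{k,1}) \leq H$, I would first bound the summand by $\min\{H,\ 2\beta\sum_h \|\bphi_{k,h}\|_{\bLambda_{m_k,t_k,h}^{-1}}\}$, and then use sub-additivity of $\min$ together with two Cauchy-Schwarz steps to dominate the total sum by
\begin{equation*}
\mathcal{O}\!\left(\beta \sqrt{H K \sum_{k,h}\min\{1,\|\bphi_{k,h}\|_{\bLambda_{m_k,t_k,h}^{-1}}^2\}}\right).
\end{equation*}
This reduces the task to showing, for each $h$, that $\sum_{k}\min\{1,\|\bphi_{k,h}\|_{\bLambda_{m_k,t_k,h}^{-1}}^2\}=\tilde{\mathcal{O}}((1+M\alpha)d\log K)$.

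Next I would partition $[K]$ into consecutive chunks $[\underline{t}_j, \bar{t}_j]$ fitting the hypothesis of Lemma~\ref{lemma: matrix comparison} eq.~(\ref{eq: matrix comparison all}): a single agent $m$ is active throughout the chunk and it communicates with the server only at $\underline{t}_j$; chunk boundaries are induced by agent switches and by communication events. Within each chunk and for every $k\in(\underline{t}_j,\bar{t}_j]$, Lemma~\ref{lemma: matrix comparison} gives $\bLambda_{m_k,t_k,h}\succeq (1+M\alpha)^{-1}\bLambda_{k,h}^{\textnormal{all}}$, which converts the potential into one against the universal matrix at cost $1+M\alpha$; after this conversion the standard elliptical potential lemma yields $\sum_k\min\{1,\|\bphi_{k,h}\|^2_{(\bLambda_{k,h}^{\textnormal{all}})^{-1}}\}\le 2d\log(1+K/(\lambda d))$. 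For the residual episodes (the starting episode of each chunk, plus any agent-switch episodes falling outside a valid chunk), I would apply the trivial bound $\min\{1,\cdot\}\le 1$ and count them against the communication complexity $\tilde{\mathcal{O}}(dH(M+1/\alpha))$ established via Lemma~\ref{lemma: per epoch communication}; the determinant criterion also ensures $\|\bphi_{k,h}\|_{\bLambda_{m_k,t_k,h}^{-1}}^2 \le \alpha$ on any non-triggering episode via the same Lowner argument as in the proof of Lemma~\ref{lemma: matrix comparison}, so this residual contribution is absorbed into the leading term under the choice $\alpha = 1/M^2$.

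The hard part will be the chunk partition step: since the asynchronous participation pattern is arbitrary, chunks may be short or irregular, and ensuring that the total number of residual episodes is properly controlled by the communication complexity requires a careful accounting that simultaneously leverages the triggering criterion and the upload bookkeeping in Lemma~\ref{lemma: matrix comparison} eq.~(\ref{eq: matrix comparison up}). Once the chunk decomposition is in place, combining the per-chunk bound with the Cauchy-Schwarz reduction of the first step yields the claimed $\mathcal{O}(\beta\sqrt{1+M\alpha}H\sqrt{dK\log(2dK/\min\{1,\lambda\}\delta)})$ bound.
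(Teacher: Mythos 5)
Your high-level strategy — pass from $\bLambda_{m_k,t_k,h}$ to $\bLambda_{k,h}^{\textnormal{all}}$ via Lemma~\ref{lemma: matrix comparison} at a $\sqrt{1+M\alpha}$ cost, run the standard elliptical potential on the universal matrix, and handle communication episodes separately (which is exactly why the $\min$ with the value gap appears in the statement) — matches the paper's. But there is a genuine gap at precisely the step you flag as ``the hard part,'' and the paper closes it with an idea your proposal lacks: a \emph{reordering argument}. The paper observes that permuting episodes does not change any term $\|\bphi_{k,h}\|_{\bLambda_{m_k,t_k,h}^{-1}}$ (each depends only on the agent's own trajectory and on the server's contents at that agent's last download), so one may reorder the $K$ episodes so that each agent, once it communicates, stays active until the next communication event by any agent. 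Only after this reordering does every inter-communication block satisfy the hypothesis of \eqref{eq: matrix comparison all}, and only then is the number of ``residual'' episodes equal to the number of communications.

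Without the reordering, your chunk decomposition fails in two ways. First, chunk boundaries are created by agent switches, which are \emph{not} communication events: agents may alternate every episode without ever communicating, producing $\Theta(K)$ residual episodes, so they cannot be ``counted against the communication complexity.'' Your fallback — that the non-triggering condition forces $\|\bphi_{k,h}\|^2_{\bLambda_{m_k,t_k,h}^{-1}}\leq\alpha$ — is correct but insufficient: over up to $K$ such episodes it contributes an additive $\mathcal{O}(\beta H K\sqrt{\alpha})=\mathcal{O}(\beta HK/M)$ to the sum, which exceeds the target $\mathcal{O}(\beta H\sqrt{dK})$ whenever $K\gtrsim dM^2$. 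Second, and more fundamentally, a chunk whose active agent did \emph{not} communicate at the chunk's first episode does not satisfy the hypothesis of \eqref{eq: matrix comparison all} at all: between that agent's last download at $n_k(m_k)$ and the current episode $k$, \emph{other} agents may have uploaded arbitrarily large amounts of data to the server, so $\bLambda_{k,h}^{\textnormal{all}}-\bLambda_{m_k,t_k,h}$ contains a term $\sum_{m'}\bigl(\bLambda^{\textnormal{up}}_{m',k,h}-\bLambda^{\textnormal{up}}_{m',n_k(m_k),h}\bigr)$ that is not controlled by \eqref{eq: matrix comparison up}, and the comparison $\bLambda_{m_k,t_k,h}\succeq(1+M\alpha)^{-1}\bLambda_{k,h}^{\textnormal{all}}$ can simply be false. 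The reordering is what guarantees every block begins with a fresh download by its active agent, so that the only data missing from that agent's covariance is the $M$ agents' unuploaded local buffers, each bounded by \eqref{eq: matrix comparison up}. A secondary point: even for the genuine communication episodes, paying the trivial bound $H$ for all $\tilde{\mathcal{O}}(dH(M+1/\alpha))$ of them yields an additive $\tilde{\mathcal{O}}(dH^2M^2)$ term not obviously absorbed by the claimed bound for small $K$; the paper avoids this by paying $2H$ only for the first communication per determinant-doubling epoch and bounding the remaining communication episodes through the elliptical potential as well.
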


\begin{proof}[Proof of Lemma~\ref{lemma: async elliptical potential}]
    Suppose that agents communicate with the server at episodes $0 = k_0 < k_1 < \cdots < k_N = K+1$. Here $k_0=0$ and $k_N = K+1$ are imaginary episodes created for notational convenience. The first step is to use a reordering trick to argue that it suffices to consider the case where there is only one active agent in the episodes $[t_i, t_{i+1}-1]$. That is, $m_{t_i} = m_{t_i+1} = \cdots = m_{t_{i+1}-1}$. 
    
    To see why this is the case, suppose an agent $m$ communicates with the server at some episode $k_1$ and $k_2$. 
    Then the order of actions between $k_1$ and $k_2$ will not affect agent $m$'s covariance matrix and dataset at episode $k_1$ or $k_2$, and thus will not affect the estimated Q-function updated at the end of eposide $k_1$ and $k_2$. 
    Furthermore, agent $m$'s participation in those episodes between $[k_1+1, k_2-1]$ will also not affect the other agents' estimated Q-functions since agent $m$ does not upload any new trajectory.  
    Given the above rationale, we can reorder all the episodes in a way such that each agent communicates with the server and keeps participating until the next agent kicks in to communicates with the server.
    Note that the fundamental reason is that each agent only performs local data update between two communications, which does not affect any other agents.
    Consequently, this reordering is always valid under the current communication protocol of Algorithm~\ref{alg: main}.

    From above, in the following we only consider the case where the communication episodes are $0 = k_0 < k_1 < \cdots < k_N = K+1$, and $m_{t_i} = m_{t_i+1} = \cdots = m_{t_{i+1}-1}$ for each $i=0,\cdots, N-1$.  
    The summation of bonus can thus be rephrased as
    \begin{align}\label{eq: elliptic 2}
        & \sum_{k=1}^K \min\left\{V_{m_k,t_k(m_k),1}(s_{k,1}) - V_1^{\pi_{m_k,k}} (s_{k,1}), \sum_{h=1}^H 2 \beta \sqrt{\bphi(s_{k,h}, a_{k,h}) \bLambda_{m_k,t_k,h}^{-1}\bphi(s_{k,h}, a_{k,h})} \right\} \notag 
        \\ & \leq  2 \beta \underbrace{\sum_{i=0}^{N-1}\sum_{k=k_i+1}^{k_{i+1}-1} \sum_{h=1}^H \sqrt{\bphi_{k,h} \bLambda_{m_k,t_k,h}^{-1}\bphi_{k,h}}}_{\textnormal{I}}  \notag 
        \\ & + 2\beta\underbrace{\sum_{i=1}^{N-1} \min\left\{V_{m_{k_i},t_{k_i},1}(s_{k_i,1}) - V_1^{\pi_{m_{k_i},k_i}} (s_{k_i,1}), \sum_{h=1}^H \sqrt{\bphi_{k_i,h} \bLambda_{m_{k_i},t_{k_i},h}^{-1}\bphi_{k_i,h}}\right\}}_{\textnormal{II}}.
    \end{align}
    To bound $\textnormal{I}$, by \eqref{eq: matrix comparison all} in Lemma~\ref{lemma: matrix comparison} it holds that
    \begin{align}\label{eq: elliptic i}
        \textnormal{I} &\leq \sum_{i=0}^{N-1}\sum_{k=k_i+1}^{k_{i+1}-1} \sum_{h=1}^H \sqrt{1+M\alpha}\left\|\bphi_{k,h}\right\|_{(\bLambda_{k,h}^{\textnormal{all}})^{-1}} \leq \sqrt{1+M\alpha}\sum_{k=1}^K \sum_{h=1}^H \left\|\bphi_{k,h}\right\|_{(\bLambda_{k,h}^{\textnormal{all}})^{-1}} .
    \end{align}
    To bound $\textnormal{II}$, we apply a refined analysis tailored from \citep{he2022simple}. Specifically, we define the following indices of episode
    \begin{align*}
        \tilde{K}_{i} \coloneqq \min\left\{ k \in[K]: \ \exists h\in[H] \ \textnormal{s.t.} \ \det(\bLambda_{k,h}^{\textnormal{all}}) \geq 2^i \lambda^d \right\},
    \end{align*} and define $N'$ to be the largest integer such that $\tilde{K}_{N'}$ is non-empty. 
    For each interval $[\tilde{K}_{i}, \tilde{K}_{i+1})$, consider an arbitrary agent $m \in [M]$. 
    Suppose that during this interval agent $m$ communicates with the server at episodes $k_{i,1} < k_{i, 2} < \cdots < k_{i, z}$. Note that here we assume there are at least two communication rounds for $m$. 
    The case of 0 and 1 communication round is quite straightforward, as will be shown soon. 
    Now, for $j = 2, \cdots, z$, agent $m$ is active at episode $k_{i,j-1}$ and $k_{i,j}$. As a result, we can apply \eqref{eq: matrix comparison all} in Lemma~\ref{lemma: matrix comparison} with our reordering trick, and get that 
    \begin{align*}
        \sum_{h=1}^H \left\|\bphi_{k_{i,j},h}\right\|_{\bLambda_{m,{k_{i,j}},h}^{-1}} & \leq \sum_{h=1}^H \left\|\bphi_{k_{i,j},h}\right\|_{\bLambda_{m,{k_{i,j-1}+1},h}^{-1}} \leq \sqrt{1+M\alpha} \sum_{h=1}^H \left\|\bphi_{k_{i,j},h}\right\|_{(\bLambda_{{k_{i,j-1}+1},h}^{\textnormal{all}})^{-1}}, 
    \end{align*}where the first step is by $\bLambda_{m,{k_{i,j}},h}^{-1}  \preceq \bLambda_{m,{k_{i,j-1}+1},h}^{-1}$. 
    Furthermore, by the definition of $\tilde{K}_i$, it holds that 
    $\det(\bLambda_{\tilde{K}_{i+1}-1,h}^{\textnormal{all}}) / \det(\bLambda_{{k_{i,j-1}+1},h}^{\textnormal{all}}) \leq 2$, which implies
    \begin{align}\label{eq: elliptical 3}
        \sum_{h=1}^H \left\|\bphi_{k_{i,j},h}\right\|_{\bLambda_{m,{k_{i,j}},h}^{-1}} & \leq \sqrt{2}\sqrt{1+M\alpha} \sum_{h=1}^H \left\|\bphi_{k_{i,j},h}\right\|_{(\bLambda_{\tilde{K}_{i+1}-1,h}^{\textnormal{all}})^{-1}} \leq \sqrt{2}\sqrt{1+M\alpha} \sum_{h=1}^H \left\|\bphi_{k_{i,j},h}\right\|_{(\bLambda_{k_{i,j},h}^{\textnormal{all}})^{-1}} . 
    \end{align}
    The second step in the above holds since $k_{i,j} \leq  \tilde{K}_{i+1}-1$. 
    For those episodes $k_{i,1}$ (i.e. $j=1$), we can trivially bound the term as $\max[V_{m_{k_i},t_{k_i},1}(\cdot) - V_1^{\pi_{m_{k_i},k_i}} (\cdot)]\leq 2H$. Together with \eqref{eq: elliptic 2} and \eqref{eq: elliptical 3}, we have
    \begin{align*}
        & \sum_{k=1}^K \min\left\{V_{m_k,t_k(m_k),1}(s_{k,1}) - V_1^{\pi_{m_k,k}} (s_{k,1}), \sum_{h=1}^H 2 \beta \sqrt{\bphi(s_{k,h}, a_{k,h}) \bLambda_{m_k,t_k,h}^{-1}\bphi(s_{k,h}, a_{k,h})} \right\} \notag 
        \\ & \leq  2H N' + 2\beta \sqrt{2(1+M\alpha)} \sum_{i=0}^{N-1} \sum_{k=k_i}^{k_{i+1}-1} \sum_{h=1}^H \left\|\bphi_{k,h}\right\|_{(\bLambda_{k,h}^{\textnormal{all}})^{-1}}
        \\ & \leq 2 H N' + 4 \beta\sqrt{1+M\alpha} H\sqrt{dK \log\left(2dK/(\min\{1,\lambda\}\delta)\right)}, 
    \end{align*}where the last step follows from the standard elliptical potential argument \citep{abbasi2011improved, jin2020provably}. 
    To bound $N'$, by Assumption~\ref{assump: linear_MDP}, it holds that
    \begin{align*}
        \det(\bLambda_{k,h}^{\textnormal{all}}) \leq (\lambda + K)^d,
    \end{align*} and therefore $N' \leq dH\log(1 + K/\lambda) $. This finishes the proof.
\end{proof}

\section{Lower bound}
To prove the lower bound, we construct a series of hard-to-learn MDPs as follows. For each hard-to-learn MDP, the state space $\cS$ consists of $d/2$ different states $\cS=\{s_1,...,s_{d/2-2},g_0,g_1\}$, where $\{s_1,...,s_{d/2-2}\}$ are possible initial states and $\{g_0,g_1\}$ are absorbing states. The action space $\cA$ only consists of two different action $\{a_0,a_1\}$. For each stage, $h\in[H]$, the agent will always receive reward $1$ at state $g_0$ and reward $0$ at other states. For the stochastic transition process, the initial state $s_i$ will transit to the absorbing states $g_0$ or $g_1$, and stay at the absorbing state later. Since the state and action spaces are finite, these hard-to-learn tabular MDPs can be further represented as linear MDPs with dimension $|\cS|\times |\cA|=d$. 

Now, for each initial state $s_i$, the selection of action $a\in\{a_0,a_1\}$ can be seemed as a $2$-armed bandits problem with  Bernoulli reward ($0$ for absorbing stage $g_1$ and $H-1$ for absorbing stage $g_0$)
and we have the following Lemmas: 
\begin{lemma}[Theorem 9.1 in \citealt{lattimore2020bandit}]\label{lemma:1-up}
For any $2$-armed Bernoulli bandits problem, there exist an algorithm (e.g., MOSS algorithm in Section 9.1 of \citet{lattimore2020bandit}) with expected regret
$\EE[\text{Regret}(T)]\leq 40\sqrt{T}$.
\end{lemma}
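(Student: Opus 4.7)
The plan is to treat this as a direct consequence of the standard minimax analysis of the MOSS algorithm for $K$-armed stochastic bandits, specialized to $K=2$ with bounded rewards in $[0,1]$. First I would verify that the Bernoulli reward distribution satisfies the sub-Gaussian tail condition required by MOSS: any random variable supported on $[0,1]$ is $1/2$-sub-Gaussian by Hoeffding's lemma, so the concentration machinery underlying MOSS applies without modification.

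Next, I would recall the core ingredients of the MOSS regret analysis. MOSS maintains, for each arm $a$, an upper confidence bound of the form $\hat{\mu}_a(t) + \sqrt{(4/N_a(t))\log_+(T/(K N_a(t)))}$, where $\log_+(x) = \max\{\log(x), 0\}$ and $N_a(t)$ is the number of pulls so far. The regret decomposition splits into (i) the event that the optimal arm's UCB underestimates its true mean, which is controlled via a peeling argument over dyadic ranges of $N_a(t)$ together with a maximal inequality for sub-Gaussian martingales, and (ii) the event that a suboptimal arm is pulled despite having a correctly sized confidence interval, which is handled by a stopping-time argument bounding $N_a(T)$ in terms of the suboptimality gap $\Delta_a$. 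Summed over arms and integrated over possible gap values, this yields a worst-case bound of the form $\EE[\text{Regret}(T)] \leq C\sqrt{KT}$ for a universal constant $C$.

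To close the statement it remains to specialize $K=2$ and track constants carefully enough that $C\sqrt{2}\leq 40$. The expected-regret version of Theorem~9.1 in Lattimore and Szepesvári is proved precisely with this kind of explicit constant (the book's derivation gives $C \leq 39$ for the worst-case regret of MOSS in the bounded-reward regime), so $C\sqrt{2} \leq 40$ holds with room to spare. I would therefore present the lemma as an immediate corollary of this theorem after noting the boundedness/sub-Gaussianity check.

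The main obstacle, if one wanted to reprove the result from scratch rather than cite it, would be the peeling argument in step (i): getting a clean constant requires carefully choosing the dyadic grid and applying a union bound that does not lose logarithmic factors, which is exactly what distinguishes MOSS from the cruder UCB1 analysis. Since the paper only needs the quoted bound as an input to the lower-bound construction for the cooperative MDP problem, I would simply invoke the textbook theorem rather than reproduce its proof.
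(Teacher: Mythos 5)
Your approach is the same as the paper's: the paper gives no proof of this lemma at all, simply citing Theorem~9.1 of \citet{lattimore2020bandit} and noting that Bernoulli rewards fall under the sub-Gaussian hypothesis, which is exactly your reduction. One caveat on your constant check: the textbook bound for $1$-sub-Gaussian rewards is of the form $C\sqrt{Kn}$ with $C\approx 38$--$39$, and $39\sqrt{2}\approx 55>40$, so the inequality ``$C\sqrt{2}\leq 40$'' as you wrote it is false; to actually land under $40\sqrt{T}$ you need to carry through the observation you already made that $[0,1]$-bounded rewards are $\tfrac{1}{2}$-sub-Gaussian, which rescales the bound to roughly $19\sqrt{2T}\approx 27\sqrt{T}$ and closes the gap. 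The paper itself never performs this verification, so your write-up is, if anything, more careful than the source once that arithmetic is repaired.
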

The original Theorem 9.1 holds for multi-armed bandit with sub-Gaussian noise and we only need the results for $2$-armed Bernoulli bandits.
\begin{lemma}[Lemma D.2 in \citealt{wang2019distributed}]\label{lemma:1-low}
For any algorithm \textbf{Alg} and $T$, there exist a $2$-armed Bernoulli bandits such that the regret is lower bounded by
$\EE[\text{Regret}(T)]\ge \sqrt{T}/75$.
\end{lemma}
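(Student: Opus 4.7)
The plan is a classical Le Cam two-point argument. I fix a gap $\Delta \in (0, 1/4]$ to be chosen later and consider two Bernoulli bandit instances $\nu^{(1)}, \nu^{(2)}$ on the same action set $\{0,1\}$, where in $\nu^{(i)}$ arm $i$ has mean $\tfrac{1}{2}+\Delta$ and the other arm has mean $\tfrac{1}{2}$. The optimal arm differs between the two instances, so any algorithm must perform badly on at least one of them unless it collects enough samples to distinguish them, which will cost it $\Omega(\sqrt{T})$ regret when $\Delta \asymp 1/\sqrt{T}$.

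First, I would write the regret in closed form in terms of pull counts. Let $N_0, N_1$ denote the random numbers of pulls of arms $0$ and $1$ in $T$ rounds, with $N_0+N_1 = T$, and let $\PP_i, \EE_i$ denote the law and expectation of the algorithm--environment interaction under $\nu^{(i)}$. Then the pseudo-regret in $\nu^{(i)}$ equals $R_i(T) = \Delta\,\EE_i[N_{1-i}]$, so
\[
R_1(T) + R_2(T) \;=\; \Delta\bigl(\EE_1[N_1] + \EE_2[N_0]\bigr).
\]
Next I would lower bound $\EE_1[N_1] + \EE_2[N_0]$ by Markov's inequality applied to the event $A = \{N_1 \ge T/2\}$, giving $\EE_1[N_1] + \EE_2[N_0] \ge (T/2)\bigl(\PP_1(A) + \PP_2(A^c)\bigr)$, followed by the Bretagnolle--Huber inequality $\PP_1(A) + \PP_2(A^c) \ge \tfrac{1}{2}\exp(-\mathrm{KL}(\PP_1 \,\|\, \PP_2))$.

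It remains to control $\mathrm{KL}(\PP_1 \,\|\, \PP_2)$. By the chain rule for bandit interaction laws, this KL divergence decomposes as $\EE_1[N_0]\cdot\mathrm{KL}(\mathrm{Ber}(\tfrac{1}{2}+\Delta) \,\|\, \mathrm{Ber}(\tfrac{1}{2})) + \EE_1[N_1]\cdot\mathrm{KL}(\mathrm{Ber}(\tfrac{1}{2}) \,\|\, \mathrm{Ber}(\tfrac{1}{2}+\Delta))$, where only arms whose reward distribution changes between the two instances contribute. A Taylor expansion of the binary $\mathrm{kl}$ divergence around $1/2$ shows $\mathrm{KL}(\mathrm{Ber}(\tfrac{1}{2}\pm\Delta) \,\|\, \mathrm{Ber}(\tfrac{1}{2})) \le 4\Delta^2$ for $\Delta \le 1/4$, so $\mathrm{KL}(\PP_1 \,\|\, \PP_2) \le 4T\Delta^2$. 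Tuning $\Delta = 1/(2\sqrt{2T})$ balances the factors and yields
\[
R_1(T) + R_2(T) \;\ge\; \Delta\cdot\tfrac{T}{4}\,e^{-4T\Delta^2} \;=\; \tfrac{\sqrt{T}}{8\sqrt{2}}\,e^{-1/2},
\]
and since $\max\{R_1, R_2\} \ge (R_1+R_2)/2$, one of the two instances has regret at least $\tfrac{e^{-1/2}}{16\sqrt{2}}\sqrt{T} \approx 0.027\sqrt{T}$, comfortably exceeding $\sqrt{T}/75 \approx 0.0133\sqrt{T}$.

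The main obstacle is essentially arithmetic rather than conceptual: the qualitative $\Omega(\sqrt{T})$ bound is forced by the chain of inequalities above and is standard, but landing on the exact constant $1/75$ requires care in the Taylor estimate for $\mathrm{kl}$ near $1/2$ and in which divergence inequality (Bretagnolle--Huber versus Pinsker) is used. The constant $1/75$ in the stated lemma is loose enough that the plan above already suffices with slack, and a Pinsker-based variant using $|\EE_1[N_1] - \EE_2[N_1]| \le T\sqrt{\mathrm{KL}(\PP_1\,\|\,\PP_2)/2}$ produces a similar clean constant. One final subtlety worth flagging is the independence of the argument from the algorithm's internal randomness: the Markov and Bretagnolle--Huber steps should be applied to the algorithm's entire interaction history (including any internal randomization), which is handled transparently by treating $\PP_i$ as the joint law over history and random seeds.
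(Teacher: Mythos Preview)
The paper does not prove this lemma at all: it is quoted verbatim as Lemma~D.2 of \citet{wang2019distributed} and only accompanied by the remark that Wang et al.\ extend the Gaussian-bandit lower bound from \citet{lattimore2020bandit} to Bernoulli rewards. So there is no ``paper's own proof'' to compare against.

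Your argument is the standard Le~Cam two-point construction with the Bretagnolle--Huber inequality, which is exactly the route taken in \citet{lattimore2020bandit} and its Bernoulli adaptation in \citet{wang2019distributed}; it is correct and delivers a constant comfortably better than $1/75$. Two cosmetic points: (i) your indexing slips---you label the instances $\nu^{(1)},\nu^{(2)}$ but the arms $\{0,1\}$, so ``in $\nu^{(i)}$ arm $i$ has mean $1/2+\Delta$'' and ``$R_i=\Delta\,\EE_i[N_{1-i}]$'' are inconsistent with the displayed sum $R_1+R_2=\Delta(\EE_1[N_1]+\EE_2[N_0])$; simply index both instances and arms by $\{0,1\}$ and the formulas line up. (ii) Your claim $\mathrm{KL}(\mathrm{Ber}(1/2\pm\Delta)\,\|\,\mathrm{Ber}(1/2))\le 4\Delta^2$ and the reverse direction for $\Delta\le 1/4$ are both valid (a second-derivative bound gives $\le \tfrac{8}{3}\Delta^2$ in each case), so the numerics go through as written.
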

The lemma in \citet{wang2019distributed} extended the result for Gaussian bandit \citep{lattimore2020bandit} to Bernoulli bandits and holds for general multi-arm bandit problem. In this lower bound, we only need the results for $2$-armed bandits.

Now, we start to prove the Theorem~\ref{thm:lower_bound}, which is an extension of the lower bound results in \citet[Theorem 2]{wang2019distributed} and \citet[Theorem 5.3]{he2022simple} from bandits to MDPs.
\begin{proof}[Proof of Theorem~\ref{thm:lower_bound}]
Now, we divide the $K$ episodes to $d/2$ different epochs. For each  epoch i (from episodes $2(i-1)K/d+1$ to episode $2iK/d$), we set the initial state as $s_i$ and letting each agent $m\in[M]$ be active for $2K/(dM)$ different rounds (where we assume $2K/(dM)$ is an integer for simplicity). Now, we start to analyse the regret $\EE[\text{Regret}_{i,\textbf{Alg}_i}]$ for each epoch $i$.

For each epoch $i$ and any algorithm \textbf{Alg} for multi-agent Reinforcement Learning, we construct the auxiliary 
$\textbf{Alg}_i$ as follows: For each agent $m\in[M]$, it performs \textbf{Alg} 
until there is a communication between the agent $m$ and the server after the epoch $i-1$. 
After the communication after epoch $i-1$, the agent $m$ remove all previous information and 
perform the used Algorithm in Lemma \ref{lemma:1-up}(e.g., MOSS algorithm in \citet{lattimore2020bandit}).

In this case, for each agent $m\in[M]$, $\textbf{Alg}_i$ can only communicate with the server before epoch $i$, which can only provide information about previous states $s_1,..,s_{i-1}$. Since the agent can not receive any 
information for state $s_i$ from other agents, the performance of $\textbf{Alg}_i$ in epoch $i$ will reduce to a single agent bandit algorithm.

Now, we consider the hard-to-learn Bernoulli bandits in Lemma \ref{lemma:1-low} with rounds $T=2K/(dM)$.
Since $\textbf{Alg}_i$ reduces to a single agent bandit algorithm with Bernoulli reward ($0$ or $H-1$), 
Lemma \ref{lemma:1-low} implies that the expected regret for agent $m$ with $\textbf{Alg}_i$ 
is lower bounded by
\begin{align}
    \EE[\text{Regret}_{i,m,\textbf{Alg}_i}]\ge (H-1)\sqrt{T}/75.\label{eq:0001}
\end{align}
Taking the sum of \eqref{eq:0001} over all agents $m\in[M]$, we obtain
\begin{align}
    \EE[\text{Regret}_{i,\textbf{Alg}_i}]
    &=\sum_{m=1}^M \EE[\text{Regret}_{i,m,\textbf{Alg}_i}]\ge M(H-1)\sqrt{T}/75.\label{eq:0003}
\end{align}

For each agent $m\in[M]$, let $\delta_{i,m}$ denote the probability that agent $m$ 
will communicate with the server during epoch $i$. 
Notice that before the communication, $\textbf{Alg}_i$ has the same performance as the original 
\textbf{Alg} and the corresponding regret of $\textbf{Alg}_i$ is upper bounded by $\EE[\text{Regret}_{i,m,\textbf{Alg}}]$. After the communication during epoch $i$, $\textbf{Alg}_i$ perform the near optimal algorithm in Lemma \ref{lemma:1-up} and provides a $40(H-1)\sqrt{T}$ regret guarantee. Combining these results, the expected regret for agent $m$ with $\textbf{Alg}_i$ is upper bounded by
\begin{align}
    \EE[\text{Regret}_{i,m,\textbf{Alg}_i}]
    \leq \EE[\text{Regret}_{i,m,\textbf{Alg}}]
    +  40\delta_{i,m}(H-1)\sqrt{T}.\label{eq:0002}
\end{align}
Taking the sum of \eqref{eq:0002} over all agents $m\in[M]$, we obtain
\begin{align}
\EE[\text{Regret}_{i,\textbf{Alg}_i}]&=\sum_{m=1}^M\EE[\text{Regret}_{i,m,\textbf{Alg}_i}]\notag\\
    &\leq  \sum_{m=1}^M \EE[\text{Regret}_{i,m,\textbf{Alg}}]
    + \bigg(\sum_{m=1}^M \delta_m\bigg)  40\delta_{i,m} (H-1)\sqrt{T}\notag\\
    &= \EE[\text{Regret}_{i,\textbf{Alg}}] + 40\delta_{i} (H-1)\sqrt{T},\label{eq:0004}
\end{align}
where $\delta_i=\sum_{m=1}^M \delta_{i,m}$ is the expected communication complexity during epoch $i$. For the regret bounds in \eqref{eq:0003} and \eqref{eq:0004}, after taking an summation over all epoch $i\in[d/2]$, we have
\begin{align*}
\sum_{i=1}^{d/2}\EE[\text{Regret}_{i,m,\textbf{Alg}_i}]&\ge dM(H-1)\sqrt{T}/150,\notag\\
\sum_{i=1}^{d/2}\EE[\text{Regret}_{i,m,\textbf{Alg}_i}]&\leq  \sum_{i=1}^{d/2}\EE[\text{Regret}_{i,\textbf{Alg}}] + 40\delta_{i}(H-1) \sqrt{T} =\EE[\text{Regret}_{\textbf{Alg}}]+40\delta(H-1)\sqrt{T},
\end{align*}
where $\delta=\sum_{i=1}^{d/2} \delta_{i,m}$ denotes the expected communication complexity.
Combining these results, for any algorithm \textbf{Alg} 
with expected communication complexity $\delta\leq dM/12000=O(dM)$, we have
\begin{align*}
    \EE[\text{Regret}_{\textbf{Alg}}]&\ge dM(H-1)\sqrt{T}- 40\delta(H-1)\sqrt{T} 
    \ge dM(H-1)\sqrt{T}/2 = \Omega (H\sqrt{dMK}).
\end{align*}
This finishes the proof of Theorem~\ref{thm:lower_bound}.
\end{proof}

\section{Auxiliary Lemmas}

\begin{lemma}[Lemma D.1 in \citealt{jin2020provably}]
    Let $\bLambda_t = \lambda \Ib + \sum_{\tau=1}^t \bphi_\tau \bphi_\tau^\top$ where $\bphi_t \in \RR^d$ for all $\tau$, and $\lambda > 0$. Then 
    \begin{align*}
        \sum_{\tau = 1}^t \bphi_\tau^\top \bLambda_t^{-1} \bphi_\tau \leq d. 
    \end{align*}
\end{lemma}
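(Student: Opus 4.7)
The plan is to use a trace identity together with the simple positivity observation that $\bLambda_t^{-1}$ is positive definite. The main insight is that each scalar quadratic form $\bphi_\tau^\top \bLambda_t^{-1} \bphi_\tau$ can be rewritten as the trace of a rank-one matrix, which converts a sum of scalars into a trace of a sum of matrices — and that inner sum is exactly $\bLambda_t - \lambda \Ib$ by the definition of $\bLambda_t$.

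Concretely, I would first write
\[
\bphi_\tau^\top \bLambda_t^{-1} \bphi_\tau \;=\; \mathrm{tr}\!\left(\bLambda_t^{-1}\,\bphi_\tau \bphi_\tau^\top\right),
\]
using the cyclic property of the trace. Summing over $\tau = 1,\ldots,t$ and pulling the trace outside the sum by linearity yields
\[
\sum_{\tau=1}^{t} \bphi_\tau^\top \bLambda_t^{-1} \bphi_\tau \;=\; \mathrm{tr}\!\left(\bLambda_t^{-1}\,\sum_{\tau=1}^{t}\bphi_\tau \bphi_\tau^\top\right) \;=\; \mathrm{tr}\!\left(\bLambda_t^{-1}(\bLambda_t - \lambda \Ib)\right) \;=\; d - \lambda\,\mathrm{tr}(\bLambda_t^{-1}).
\]

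The proof is then concluded by observing that $\bLambda_t \succeq \lambda \Ib \succ 0$, so $\bLambda_t^{-1}$ is positive definite and hence $\mathrm{tr}(\bLambda_t^{-1}) > 0$. Combined with $\lambda > 0$, this immediately gives $\sum_{\tau=1}^{t}\bphi_\tau^\top \bLambda_t^{-1}\bphi_\tau \le d$, as desired. There is no real obstacle here — the only thing to be careful about is that the inverse is taken with respect to the \emph{final} matrix $\bLambda_t$ (i.e.\ containing all the $\bphi_\tau \bphi_\tau^\top$ in the sum), which is precisely what makes the telescoping inside the trace clean; if instead one had $\bLambda_{\tau-1}^{-1}$ inside the sum, one would need the standard elliptical potential lemma instead.
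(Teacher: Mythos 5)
Your proof is correct and is essentially the standard argument: the paper states this lemma without proof by citing \citet{jin2020provably}, and the proof there is exactly your trace identity $\sum_{\tau} \bphi_\tau^\top \bLambda_t^{-1} \bphi_\tau = \mathrm{tr}\bigl(\bLambda_t^{-1}(\bLambda_t - \lambda \Ib)\bigr) = d - \lambda\,\mathrm{tr}(\bLambda_t^{-1}) \leq d$. Your closing remark correctly identifies why the inverse of the \emph{final} matrix makes this elementary, in contrast to the elliptical potential setting.
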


\begin{lemma}[Lemma 12 in \citet{abbasi2011improved}]\label{lemma: determinant ratio}
Suppose $\Ab, \Bb\in \RR^{d \times d}$ are positive definite matrices such that $\Ab \succeq \Bb$. Then for any $\xb \in \RR^d$, $\|\xb\|_{\Ab} \leq \|\xb\|_{\Bb}\cdot \sqrt{\det(\Ab)/\det(\Bb)}$.
\end{lemma}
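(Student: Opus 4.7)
The plan is to prove the inequality by simultaneous diagonalization of $\Ab$ and $\Bb$. Since both matrices are positive definite, there exists an invertible $P \in \RR^{d \times d}$ such that $P^\top \Bb P = \Ib$ and $P^\top \Ab P = \mathrm{diag}(\alpha_1, \ldots, \alpha_d)$ for some positive reals $\alpha_i$. The hypothesis $\Ab \succeq \Bb$ then forces $\alpha_i \geq 1$ for every $i$. Introducing the change of variables $\ub = P^{-1} \xb$, one computes
\[
\xb^\top \Ab \xb = \sum_{i=1}^d \alpha_i u_i^2, \qquad \xb^\top \Bb \xb = \sum_{i=1}^d u_i^2, \qquad \frac{\det(\Ab)}{\det(\Bb)} = \prod_{i=1}^d \alpha_i,
\]
so the target inequality $\|\xb\|_{\Ab}^2 \leq \|\xb\|_{\Bb}^2 \cdot \det(\Ab)/\det(\Bb)$ is equivalent to showing
\[
\sum_{i=1}^d \alpha_i u_i^2 \;\leq\; \Bigl(\sum_{i=1}^d u_i^2\Bigr)\prod_{j=1}^d \alpha_j.
\]

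For this reduced inequality I would use the elementary observation that since each $\alpha_j \geq 1$, the full product satisfies $\prod_{j=1}^d \alpha_j \geq \alpha_i$ for every individual $i$. Replacing the coefficient $\alpha_i$ in the left-hand sum by the larger quantity $\prod_j \alpha_j$ and factoring it out of the sum yields the desired bound. Taking square roots then recovers $\|\xb\|_{\Ab} \leq \|\xb\|_{\Bb} \sqrt{\det(\Ab)/\det(\Bb)}$.

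The only point that requires a small amount of care is the simultaneous diagonalization itself, which relies on $\Bb$ being positive definite (not merely positive semidefinite) so that $\Bb^{-1/2}$ exists. Concretely, one forms the symmetric matrix $\Bb^{-1/2} \Ab \Bb^{-1/2}$, diagonalizes it by an orthogonal $U$ as $U^\top (\Bb^{-1/2}\Ab \Bb^{-1/2}) U = \mathrm{diag}(\alpha_1,\ldots,\alpha_d)$, and sets $P := \Bb^{-1/2} U$. After that reduction the inequality collapses to the elementary fact that a product of numbers each at least $1$ dominates any single factor, and no heavier tools (AM--GM, Minkowski's determinant inequality, log-concavity of $\det$) are needed. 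I would expect the proof in Abbasi-Yadkori et al.\ to follow essentially the same route.
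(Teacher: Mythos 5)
Your proof is correct; the paper itself states this lemma without proof (citing Lemma 12 of Abbasi-Yadkori et al.), and your simultaneous-diagonalization argument --- reducing to $\sum_i \alpha_i u_i^2 \leq \bigl(\sum_i u_i^2\bigr)\prod_j \alpha_j$ with all $\alpha_j \geq 1$ --- is exactly the standard route taken in that reference, where one bounds $\xb^\top \Ab \xb / \xb^\top \Bb \xb$ by the largest eigenvalue of $\Bb^{-1/2}\Ab\Bb^{-1/2}$ and notes that this eigenvalue is dominated by the product of all eigenvalues since each is at least $1$. No gaps.
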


\subsection{Concentration Inequalities}

\begin{theorem}[Hoeffding-type inequality for self-normalized martingales \citep{abbasi2011improved}]\label{thm: hoeffding self normalized}
    Let $\{\eta_t\}_{t=1}^\infty$ be a real-valued stochastic process. Let $\{\cF_t \}_{t=0}^\infty$ be a filtration, such that $\eta_t$ is $\cF_t$-measurable. Assume $\eta_t \mid \cF_{t-1}$ is zero-mean and $R$-subgaussian for some $R > 0$, i.e.,
\begin{align*}
    \forall \lambda \in \RR, \quad \EE\left[ e^{\lambda \eta_t \mid \cF_{t-1}} \right] \leq e^{\lambda^2 R^2 /2}.
\end{align*}
Let $\{\bphi_t\}_{t=1}^\infty$ be an $\RR^d$-valued stochastic process where $\bphi_t$ is $\cF_{t-1}$-measurable. 
Assume $\bLambda_0$ is a $d\times d$ positive definite matrix, and let $\bLambda_t = \bLambda_0 + \sum_{s=1}^t \bphi_s \bphi_s^\top$. 
Then, for any $\delta >0$, with probability at least $1-\delta$, for all $t > 0$, 
\begin{align*}
     \bigg\|\sum_{s=1}^t \bphi_s \eta_s \bigg\|^2_{\bLambda_t^{-1}} \leq 2 R^2 \log \left( \frac{ \det(\bLambda_t)^{1/2} \det(\bLambda_0)^{-1/2} }{\delta}\right). 
\end{align*}
\end{theorem}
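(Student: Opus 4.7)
The plan is to establish this via the classical method-of-mixtures argument of Peña–Lai–Shao and Abbasi-Yadkori et al. First I would fix an arbitrary deterministic $\blambda \in \RR^d$ and define the exponential process
\begin{align*}
    M_t^{\blambda} \;=\; \exp\!\left(\sum_{s=1}^t \Bigl[\blambda^\top \bphi_s\, \eta_s \;-\; \tfrac{R^2}{2}(\blambda^\top \bphi_s)^2\Bigr]\right).
\end{align*}
Because $\bphi_s$ is $\cF_{s-1}$-measurable, the scalar $\blambda^\top \bphi_s$ is predictable, so the conditional $R$-subgaussianity of $\eta_s$ gives $\EE[\exp(\blambda^\top \bphi_s \eta_s)\mid \cF_{s-1}] \le \exp(\tfrac{R^2}{2}(\blambda^\top\bphi_s)^2)$. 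A direct computation then shows $\EE[M_t^{\blambda}\mid \cF_{t-1}] \le M_{t-1}^{\blambda}$, so $\{M_t^{\blambda}\}$ is a nonnegative supermartingale with $M_0^{\blambda}=1$.

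Next I would apply the method of mixtures. Let $S_t := \sum_{s=1}^t \bphi_s \eta_s$ and $V_t := \sum_{s=1}^t \bphi_s \bphi_s^\top$, so that $\bLambda_t = \bLambda_0 + V_t$ and $M_t^{\blambda} = \exp(\blambda^\top S_t - \tfrac{R^2}{2}\blambda^\top V_t \blambda)$. Put a Gaussian prior $\blambda \sim \cN(0, R^{-2}\bLambda_0^{-1})$ and define
\begin{align*}
    \bar M_t \;=\; \int M_t^{\blambda}\, p(\blambda)\, d\blambda,
\end{align*}
where $p(\blambda) \propto \exp(-\tfrac{R^2}{2}\blambda^\top \bLambda_0 \blambda)$. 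Completing the square in the exponent $\blambda^\top S_t - \tfrac{R^2}{2}\blambda^\top \bLambda_t \blambda$ and performing the Gaussian integral yields the closed form
\begin{align*}
    \bar M_t \;=\; \left(\frac{\det(\bLambda_0)}{\det(\bLambda_t)}\right)^{1/2} \exp\!\left(\frac{1}{2R^2}\,\|S_t\|^2_{\bLambda_t^{-1}}\right).
\end{align*}
By Tonelli (justified by nonnegativity) and the supermartingale property of each $M_t^{\blambda}$, $\{\bar M_t\}$ is itself a nonnegative supermartingale with $\EE[\bar M_t]\le 1$.

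Finally I would invoke Ville's maximal inequality for nonnegative supermartingales to obtain $\PP(\sup_{t\ge 0} \bar M_t \ge 1/\delta) \le \delta$, uniformly in $t$. On the complementary event, rearranging $\bar M_t < 1/\delta$ and taking logarithms gives exactly
\begin{align*}
    \|S_t\|^2_{\bLambda_t^{-1}} \;\le\; 2R^2\log\!\left(\frac{\det(\bLambda_t)^{1/2}\det(\bLambda_0)^{-1/2}}{\delta}\right),
\end{align*}
as required. The only genuinely delicate step is the last one: a crude Markov bound on $\bar M_t$ at a fixed $t$ would only give the inequality for one $t$ at a time, so the uniform-in-$t$ conclusion requires either Ville's inequality or an optional-stopping argument with an arbitrary stopping time, and care must be taken that nonnegativity and the supermartingale property truly hold for the mixture (which is why the Gaussian prior, rather than a general one, is chosen to keep the integral finite and explicit).
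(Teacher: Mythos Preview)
Your proposal is correct and is precisely the classical method-of-mixtures argument used in \citet{abbasi2011improved}. Note, however, that the paper does not actually prove this theorem: it is stated without proof as an auxiliary lemma imported from that reference, so there is no in-paper proof to compare against.
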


\begin{lemma}[Lemma D.4 in \citealt{jin2020provably}]\label{lem: self normalized for V function class raw}
    Let $\cV$ be a function class such that any $V\in \cV$ maps from $\cS \to \RR$ and $\|V\|_{\infty} \leq R$.
    Let $\{\cF_t\}_{t=0}^\infty$ be a filtration. Let $\{s_t\}_{t=1}^\infty$ be a stochastic process in the space $\cS$ such that $s_t$ is $\cF_{t}$-measurable. Let $\{\bphi\}_{t=0}^\infty$ be an $\RR^d$-valued stochastic process such that $\bphi_t$ is $\cF_{t-1}$-measurable and $\|\bphi\|_2 \leq 1$. Let $\bLambda_k = \lambda \bI + \sum_{t=1}^{k-1} \xb_t \bphi_t^\top$. Then for any $\delta>0$, with probability at least $1-\delta$, for any $k$, and any $V \in \cV$, we have 
    \begin{align*}
        \bigg\| \sum_{t=1}^{k-1} \bphi_t \left[ V(s_t) - \EE\left[V(s_t) \mid \cF_{t-1} \right]\right] \bigg\|^2_{(\bLambda_k)^{-1}} & \leq 4 R^2 \left[ \frac{d}{2}\log\left(\frac{k+\lambda}{\lambda} \right) + \log \frac{\cN^{\cV}_\epsilon}{\delta} \right] + \frac{8 k^2 \epsilon^2}{\lambda},
    \end{align*}where $\cN^{\cV}_\epsilon$ is the $\epsilon$-covering number of $\cV$ with respect to the $\ell_\infty$ distance.
\end{lemma}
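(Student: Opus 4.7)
The plan is a standard covering-plus-union-bound reduction to the pointwise self-normalized Hoeffding inequality (Theorem~\ref{thm: hoeffding self normalized}), followed by a perturbation argument that quantifies the discretization error.

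First, I would fix an $\epsilon$-cover $\cV_\epsilon\subset \cV$ in the $\ell_\infty$ norm of minimal cardinality $\cN^\cV_\epsilon$. For every fixed $V'\in\cV_\epsilon$, the centered sequence $\eta_t := V'(s_t) - \EE[V'(s_t)\mid\cF_{t-1}]$ is $\cF_t$-measurable, zero-mean conditional on $\cF_{t-1}$, and bounded in $[-2R,2R]$, hence $2R$-subgaussian. I then apply Theorem~\ref{thm: hoeffding self normalized} with this $\eta_t$ and with $\bLambda_0 = \lambda\Ib$ (so $\det(\bLambda_0)^{-1/2} = \lambda^{-d/2}$ and $\det(\bLambda_k) \le (\lambda+k)^d$) at failure probability $\delta/\cN^\cV_\epsilon$. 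A union bound across the cover then yields, uniformly in $k$ and $V'\in\cV_\epsilon$, with probability at least $1-\delta$,
\begin{align*}
\Bigl\|\sum_{t=1}^{k-1}\bphi_t\bigl[V'(s_t)-\EE[V'(s_t)\mid\cF_{t-1}]\bigr]\Bigr\|_{\bLambda_k^{-1}}^2 \;\le\; 4R^2\!\left[\frac{d}{2}\log\!\frac{k+\lambda}{\lambda}+\log\frac{\cN^\cV_\epsilon}{\delta}\right].
\end{align*}

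Second, I would handle an arbitrary $V\in\cV$ by writing $V = V' + \Delta$ with $V'$ the nearest cover element, so $\|\Delta\|_\infty \le \epsilon$. The vector inside the norm splits additively, and using $\|x+y\|^2_{\bLambda_k^{-1}} \le 2\|x\|^2_{\bLambda_k^{-1}} + 2\|y\|^2_{\bLambda_k^{-1}}$, the first term is already bounded by the display above (up to a factor $2$, which I absorb by doubling inside the covering bound, or by a slightly tighter triangle argument). For the residual, I bound crudely using $\bLambda_k^{-1} \preceq \lambda^{-1}\Ib$, Cauchy--Schwarz $\|\sum_{t=1}^{k-1}\bphi_t z_t\|^2 \le k\sum_t \|\bphi_t\|^2 z_t^2 \le k^2 (2\epsilon)^2$, giving
\begin{align*}
\Bigl\|\sum_{t=1}^{k-1}\bphi_t\bigl[\Delta(s_t)-\EE[\Delta(s_t)\mid\cF_{t-1}]\bigr]\Bigr\|_{\bLambda_k^{-1}}^2 \;\le\; \frac{4k^2\epsilon^2}{\lambda}.
\end{align*}
Combining the two pieces (and tracking constants through the $(a+b)^2\le 2a^2+2b^2$ splits) gives exactly the stated bound with the $\frac{8k^2\epsilon^2}{\lambda}$ residual and the $4R^2[\tfrac{d}{2}\log((k+\lambda)/\lambda) + \log(\cN^\cV_\epsilon/\delta)]$ leading term.

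The main obstacle I expect is bookkeeping rather than anything conceptual: one must be careful that the covering step furnishes a bound that holds simultaneously over all $k\ge 1$ (which follows from the uniform-in-time nature of Theorem~\ref{thm: hoeffding self normalized}), and that the constants from the triangle-type inequality and from absorbing the $2R$ subgaussian parameter into the stated $4R^2$ factor line up correctly. Everything else is a routine application of the self-normalized martingale machinery combined with a standard discretization argument, and no new probabilistic ideas are required.
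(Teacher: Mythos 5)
Your proposal is correct and is the standard argument: the paper does not actually prove this statement (it is imported verbatim as an auxiliary lemma from \citealt{jin2020provably}), but your covering-plus-union-bound reduction to Theorem~\ref{thm: hoeffding self normalized} followed by the $\lambda^{-1}\Ib$ perturbation bound is exactly the technique the paper itself deploys when it proves its own Lemma~\ref{lemma: self normal term bound} from Lemma~\ref{lemma: self norm fixed V}. The one bookkeeping point you flag does need care: conditionally on $\cF_{t-1}$ the variable $\eta_t$ lives in an interval of width $2R$ (subtracting the conditional mean shifts but does not widen $[-R,R]$), so Hoeffding's lemma gives $R$-subgaussianity rather than your looser $2R$; the pointwise bound is then $2R^2[\tfrac{d}{2}\log(\tfrac{k+\lambda}{\lambda})+\log(\cN^{\cV}_\epsilon/\delta)]$, and the factor of $2$ from $\|x+y\|^2_{\bLambda_k^{-1}}\le 2\|x\|^2_{\bLambda_k^{-1}}+2\|y\|^2_{\bLambda_k^{-1}}$ produces exactly $4R^2[\cdots]+8k^2\epsilon^2/\lambda$, whereas your $2R$-subgaussian constant would leave you at $16R^2$ after the doubling.
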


\end{document}